\documentclass{article}

\usepackage{PRIMEarxiv}

\usepackage[utf8]{inputenc}
\usepackage[T1]{fontenc}

\usepackage{hyperref}
\usepackage{url}
\usepackage{booktabs}
\usepackage{amsfonts}
\usepackage{nicefrac}
\usepackage{microtype}
\usepackage{graphicx}
\graphicspath{{figs/}}

\usepackage{amsmath}
\usepackage{amssymb}
\usepackage{amsthm}
\usepackage{enumitem}
\usepackage{multirow}
\usepackage{subcaption}
\usepackage{algorithm}
\usepackage{algpseudocode}
\usepackage[table]{xcolor}
\usepackage{natbib}
\usepackage{fancyhdr}
\usepackage{bbm}
\usepackage{dsfont}
\setlist[itemize]{noitemsep}

\usepackage{newpxtext,newpxmath}

\theoremstyle{plain}
\newtheorem{theorem}{Theorem}
\newtheorem{lemma}{Lemma}
\newtheorem{corollary}{Corollary}

\theoremstyle{definition}

\newtheorem{assumption}{Assumption}

\theoremstyle{remark}

\newcommand{\R}{\mathbb{R}}
\newcommand{\E}{\mathbb{E}}
\newcommand{\Hcal}{\mathcal{H}}

\newcommand{\Lcal}{\mathcal{L}}
\newcommand{\Ccal}{\mathcal{C}}

\newcommand{\op}{\mathrm{op}}
\newcommand{\eff}{\mathrm{eff}}

\renewcommand{\vec}{\mathrm{vec}}
\newcommand{\diag}{\mathrm{diag}}
\newcommand{\1}{\mathbf{1}}
\DeclareMathOperator*{\argmin}{arg\,min}
\DeclareMathOperator{\tr}{tr}
\DeclareMathOperator{\Var}{Var}

\title{Learning Higher-Order Structure from Incomplete Spatiotemporal Data: Multi-Scale Hypergraph Laplacians with Neural Refinement}

\author{
  Keshu Wu$^1$, Sixu Li$^1$, Zihao Li$^1$, Zhiwen Fan$^1$, Xiaopeng Li$^2$, Yang Zhou$^{1}$\thanks{Corresponding author: \texttt{yangzhou295@tamu.edu}}\\
  $^1$Texas A\&M University\\
  $^2$University of Wisconsin-Madison\\
}

\begin{document}
\maketitle

\begin{abstract}
Sensor networks increasingly govern modern infrastructure, yet the data they lose are rarely missing in the uniform-random patterns assumed by standard imputation benchmarks. Loop detectors go offline during calibration, roadside cabinets silence clusters of nearby sensors, and newly installed instruments provide no history. Such failures create structured absences whose values are constrained by higher-order relations among groups of sensors, not merely by pairwise proximity. Existing low-rank and graph-based methods often miss this collective structure and can fail when missingness becomes coherent. We introduce Multi-Scale Hypergraph Laplacians (MSHL), a two-stage framework for learning higher-order structure from incomplete spatiotemporal observations. The Discovery stage builds a multi-scale hypergraph from complementary topology and residual-correlation evidence, with an observation-only selector that adapts to the supported interaction scale. The Refinement stage adds a small hypergraph-conditioned residual network that is safe by construction: it learns nonlinear corrections where informative residual features exist and defers to the linear estimate where they do not. We prove that MSHL represents group-conservation patterns inaccessible to pairwise graph priors, adapts to the best fixed scale up to a logarithmic factor, transfers this advantage to held-out imputation error, and admits a one-sided refinement guarantee. On two real traffic networks evaluated across scattered cell missingness, contiguous block outages, and whole-sensor blackouts at five rates, MSHL improves over a pairwise-graph baseline whenever higher-order structure is identifiable and otherwise matches it within sampling noise. The results point to a broader principle for reliable infrastructure learning: missing data should be treated not as isolated entries to fill, but as evidence of structure to discover.
\end{abstract}

\keywords{Spatiotemporal imputation \and Hypergraph learning \and Higher-order structure \and Residual learning \and Traffic sensor networks}

\section{Introduction}
\label{sec:intro}
 
Sensor networks are the connective tissue of modern infrastructure, and the data they produce are rarely complete~\citep{fekade2017probabilistic, chen2018spatial}. Traffic forecasting, demand estimation, environmental monitoring, and infrastructure planning all assume a dense, regularly-sampled input that the underlying network seldom delivers~\citep{li2011reliable}. Sensors fail in patterns. A loop detector goes offline for hours during routine calibration. A roadside cabinet silences dozens of nearby sensors after a power event. A new sensor is installed only in part of the deployment region and contributes no historical data for the test period. None of these failures resembles the uniform-random dropout used in most benchmark protocols, and the mismatch creates a deployment gap: methods that minimize mean error under uniform random missingness can produce nonsensical outputs once missingness becomes structured~\citep{salari2019optimization}.
 
The methods commonly used to fill these gaps rest on a premise the gaps themselves violate. Low-rank tensor completion and pairwise graph-Laplacian smoothing both assume that each missing cell sits close to enough observed cells to be reconstructed from them. This holds for sparse, isotropic dropout, but fails when gaps cluster in time, in space, or along an entire sensor. In typical deployments such clustering is the rule rather than the exception, and the right question is not whether existing methods are accurate on the protocol they were tuned on, but what signal remains when the easy cells are gone.
 
\begin{figure}[t]
\centering
\includegraphics[width=\textwidth]{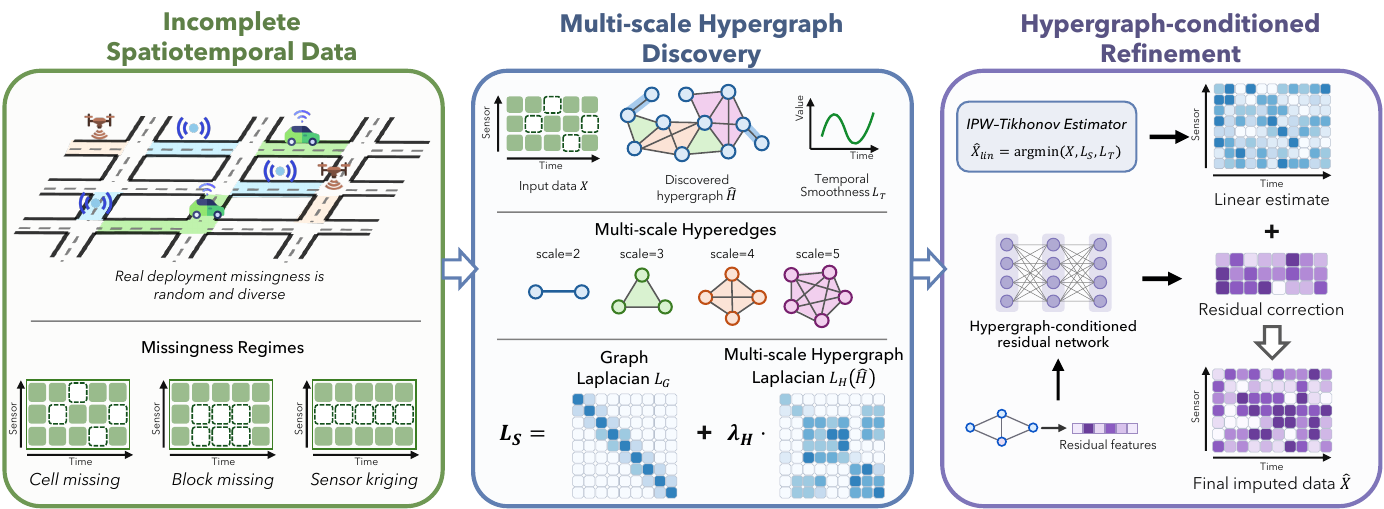}
\caption{\textbf{MSHL overview.} Real deployments exhibit diverse missingness from scattered cells to whole-sensor blackouts. MSHL first discovers multi-scale higher-order structure from incomplete observations using complementary topology and residual signals, then refines the linear estimate with a hypergraph-conditioned residual network that defers when no informative residual features are available. The two stages compose in an end-to-end excess-risk bound, so that better-targeted Discovery shrinks both the selection overhead and the refinement gap.}
\label{fig:intro-overview}
\end{figure}
 
What does remain, and what existing structural priors do not exploit, is higher-order group-level coherence~\citep{battiston2025higher, benson2016higher, xu2016representing}. A freeway merge ties three or more incoming lanes through flow conservation; a roadside cabinet ties dozens of physically-adjacent sensors through shared calibration drift; a demand cluster ties geographically-spread sensors through a common arrival process. Each is a constraint among the group as a whole rather than a sum of pairwise constraints, and each survives when any one pair within the group is jointly unobserved. The classical graph Laplacian, the workhorse of structured imputation, cannot encode such constraints: its penalty on neighbour-pair differences taxes coherent group motion that the underlying constraint allows~\citep{jiang2021graph, 10574327}. Hypergraphs lift this restriction by letting a single edge span any number of nodes, which gives them the capacity to represent set-level relations directly~\citep{battiston2021physics, iacopini2024temporal, wu2026knowledge}. A higher-order hyperedge therefore encodes the group as a single unit and pays no cost on the in-group coherent pattern, which makes it the natural primitive where pairwise priors fail~\citep{prakash2017finding, yoon2020much}. The price of this representation, however, is committing to a particular group size. Real networks contain interactions at multiple scales, from pairwise corridor pairs through triple merges and four-way intersections to larger congestion clusters, so any fixed-scale hypergraph either misses the smaller groups or over-aggregates the larger ones~\citep{wang2023equivariant, wu2025hypergraph, wu2025ai2}.
 
Learning higher-order structure from partial observations is harder than learning it from complete data, and the difficulty grows with hyperedge size~\citep{yoon2020much, purkait2016clustering, liu2017multi}. The recovery rate of any structural score depends on the joint observation rate of a candidate hyperedge, the fraction of timesteps at which every member is observed simultaneously, and that rate decays exponentially in the hyperedge size: at moderate missing rates, joint observations of five-sensor candidates make up only a few percent of timesteps. This decay would be merely inconvenient if the easy cases needed the structure most, but the situation is the reverse. Coherent group outages, exactly where higher-order structure is best positioned to span the gap, are also exactly where the joint observations needed to identify any hyperedge in the first place have been removed. A naive higher-order estimator falls apart where it should help most. Recent learned-hypergraph methods that relax the fixed-scale commitment~\citep{10366871, 10.1145/3605776} assume dense observations and provide no guidance on how to choose the scale, or what guarantees follow from the choice, when observations are partial.
 
We propose Multi-Scale Hypergraph Laplacians (MSHL), a two-stage framework that learns higher-order structure from incomplete observations and uses it to recover what is missing. The design is organised around a single principle: each stage must have a safe fallback when the data cannot support what it would otherwise do. Discovery learns the hypergraph from two complementary signals, prior network topology and data-adaptive residual correlations, so that when one fails the other still carries the inference. The selector that picks the operating scale uses observed cells only and tightens automatically as evidence thins. Refinement adds a nonlinear corrector on top of the linear estimate, with the zero correction always feasible. In the worst case it cannot hurt, and when a sensor has no observed co-members the correction collapses to zero by construction. The architecture, shown in Figure~\ref{fig:intro-overview}, therefore degrades gracefully rather than catastrophically: strict gains where higher-order evidence is present, and the safe linear backbone where it is not.
 
The framework holds up empirically across the full deployment spectrum. On two real traffic networks evaluated across three missingness regimes at five rates each, MSHL improves on a pairwise-graph baseline wherever higher-order structure is identifiable within our grid, and ties it otherwise within sampling noise. The largest gains appear at low missing rates on contiguous block outages, where coherent group-level missingness creates exactly the conditions in which a hyperedge can span gaps that a pair cannot. Competing baselines each collapse in some regime: tensor methods are unsafe outside their low-rank assumption, while pairwise-only methods are robust everywhere but trail MSHL on every cell where higher-order structure is present. The selector's behaviour matches the theory: it saturates its candidate budget on regimes that admit higher-order structure and ramps automatically toward the pairwise-only fit on whole-sensor blackouts.
 
\paragraph{Contributions.} This work makes three principal contributions.
\begin{itemize}[leftmargin=0.2in]
    \item \textbf{Multi-scale hypergraph estimator with provable scale adaptation.} We introduce a hypergraph Laplacian with scale-invariant weighting and a Lepski-style observation-only selector that adapts to the best fixed scale up to a logarithmic factor, with two candidate sources whose exponentially-separated recovery rates cover the deployment spectrum.
    \item \textbf{One-sided refinement guarantee with built-in deferment.} We pair the linear estimator with a hypergraph-conditioned residual corrector whose worst-case inflation vanishes at the parametric rate and which defers automatically when no informative residual features are available.
    \item \textbf{End-to-end theory and regime-level empirical validation.} We prove representation, discovery, scale-selection, imputation, and refinement guarantees, and validate their mechanism-level signatures on two real traffic networks across scattered, contiguous, and whole-sensor missingness. The resulting method improves when identifiable higher-order structure exists and remains stable otherwise.
\end{itemize}
 
\paragraph{Paper organization.} Section~\ref{sec:related} reviews related work. Section~\ref{sec:setup} formulates the imputation problem, introduces the IPW-Tikhonov backbone, and develops the multi-scale hypergraph Laplacian. Section~\ref{sec:msht} presents the Discovery stage, and Section~\ref{sec:hcrn} presents the Refinement stage. Section~\ref{sec:exp} reports the empirical study and connects the observed behaviour to the theoretical mechanisms. Section~\ref{sec:conclusion} concludes with limitations and future directions.

\section{Related Work}
\label{sec:related}

We organize prior work along four axes that intersect with MSHL: spatiotemporal imputation on sensor networks, hypergraph-based regularization and learning, adaptive model selection, and residual refinement on structured backbones.

\subsection{Spatiotemporal imputation on sensor networks}

Traffic-sensor and environmental-sensor networks are the canonical test bed for spatiotemporal imputation, with PEMS-BAY and METR-LA emerging as standard benchmarks~\citep{li2018dcrnn_traffic}. Prior methods broadly divide into four families. Pooling and interpolation methods, including sensor-mean, $k$-nearest-neighbour spatial averaging, and linear interpolation, exploit no temporal structure beyond mean-pooling and no spatial structure beyond a fixed neighbourhood. They remain hard to beat in regimes where the assumed structure of more complex methods is violated, particularly under sensor-kriging where sensors with no observed history must be inferred from spatial neighbours alone. Low-rank tensor completion methods~\citep{liu2013halrtc, chen2020lrtcttn, 9548664, CHEN2020102673, nie2023letc} treat the sensor-time matrix as a tensor and impute by minimizing a nuclear norm. They exploit global low-rank structure efficiently but degrade when the missingness departs from uniform random, and they are particularly fragile under whole-sensor blackouts because their initialization produces zero rows for fully-missing sensors. Graph-aware deep methods embed a graph neural network into the imputer~\citep{cini2022grin, wu2021ignnk, cao2018brits, du2023saits} and can express richer dependence than pairwise Laplacians, but they require substantial training data per deployment and offer no recovery guarantees. Generative imputation methods such as CSDI~\citep{tashiro2021csdi} and PriSTI~\citep{liu2023pristi} fit conditional diffusion models and achieve high reconstruction accuracy at substantial computational cost. The sparse-attention SPIN~\citep{marisca2022spin} interpolates with attention over observed cells.

A common shortcoming across these families is that each is benchmarked against a single missingness pattern, typically uniform-random cell dropout at a single rate. The MSHL evaluation protocol stresses regime robustness instead: one algorithm with one hyperparameter setting is evaluated across cell-MAR, block-MAR, and sensor-kriging at five rates each.

\subsection{Hypergraph regularization and neural networks}

Pairwise graph Laplacian regularization~\citep{ando2006learning, yin2015laplacian} has been the workhorse of structured imputation since the early work on graph signal processing, but the pairwise penalty cannot represent group-level coherence. The hypergraph Laplacian~\citep{zhou2006hypergraph} extends the smoothness penalty to higher-order groups by replacing pairwise differences with within-group variance, vanishing on the constant pattern. Hypergraph neural networks~\citep{feng2019hgnn, yadati2019hypergcn, bai2021hypergraph, chien2022allset} parameterize messages over hyperedges, generalizing graph convolutional networks to higher-order interactions. These methods typically assume the hypergraph is given as input, either from domain knowledge or from a fixed heuristic such as $k$-nearest-neighbour cliques on a feature space. When the hypergraph is treated as fixed, the recovery rate of the resulting imputer depends on whether the assumed structure matches the data-generating process, and misspecification can hurt more than the absence of higher-order structure. Tensor completion with graph regularization~\citep{li2020WDGTC} adds a graph-Laplacian penalty to the CP decomposition; we include WDGTC as a baseline and observe that it is competitive on cell-MAR at moderate rates but degenerates at high missing rates and collapses on kriging.

Recent work on learned hypergraph structure~\citep{10366871, 10.1145/3605776} parameterizes the hyperedge incidence matrix and optimizes it jointly with downstream task loss. These methods are flexible but typically require dense observations and do not provide finite-sample recovery rates as a function of missingness or scale. To our knowledge, no prior method selects multi-scale hyperedges with a Lepski-style penalty under partial observations, provides scale-invariant weighting, or gives finite-sample recovery rates that depend explicitly on the missingness rate and scale.

\subsection{Adaptive selection and residual refinement}

The price of selecting among multiple model complexities is well understood in the nonparametric statistics literature. Lepski's principle~\citep{lepskii1991problem} and its successors~\citep{tsybakov2009nonparametric} give procedures that select bandwidth, smoothing parameter, or model complexity from data while paying an additive logarithmic penalty for not knowing the best choice in advance. The standard treatment applies to one-dimensional bandwidth selection; here we adapt it to discrete hyperedge-scale selection with a per-scale complexity penalty that dominates the fluctuations of the structural scores.

Our residual corrector on a linear backbone, the hypergraph-conditioned residual network (HCRN; described in Section~\ref{sec:hcrn}), fits the long tradition of hybrid estimators that combine a structured backbone with a learned residual correction. Gradient boosting~\citep{friedman2001greedy} fits a sequence of weak learners on the running residual; deep residual networks~\citep{he2016deep} parameterize the correction directly; structured-prediction work~\citep{9975273, 9506153} couples a probabilistic backbone with a neural correction. Most established results for these methods are upper bounds on the corrected risk relative to the optimum, meaning the corrector can hurt as well as help. MSHL departs from this template in three ways. The corrector is conditioned on the discovered hypergraph rather than on the raw input, so HCRN inherits Discovery's structural choice. The training loss is decoupled from the backbone loss, with HCRN trained on a Huber loss over residuals on observed cells where ground truth is available. Most importantly, the guarantee is one-sided: because the zero correction is feasible, the worst-case excess over the linear estimator is the vanishing $O(1/\sqrt{n_{\rm tr}})$ generalization gap rather than an unbounded misspecification term. This makes HCRN safe to enable by default.

\subsection{Position of MSHL}

Relative to the families above, MSHL combines elements that have not been combined before in the spatiotemporal imputation literature. The framework provides a multi-scale hypergraph Laplacian with scale-invariant weighting, a two-source candidate generator with exponentially-separated recovery rates that handles all three missingness regimes, a Lepski-style observation-only scale selector with an oracle bound, and a one-sided refinement guarantee that is Discovery-aware with automatic deferment when residual evidence is unavailable. The empirical evaluation is also stricter than is typical: one hyperparameter setting across cell-MAR, block-MAR, and sensor-kriging at five rates each on PEMS-BAY and METR-LA, with comparison to baselines drawn from four prior families at the same evaluation protocol.

\section{Problem Setup and Preliminaries}
\label{sec:setup}

The remainder of this section develops the framework in three steps that mirror the design constraints. We first state the imputation problem and the design constraints that any candidate estimator must satisfy. We then introduce the linear backbone, which provides a tractable starting point but is fundamentally pairwise. We finally show why a multi-scale hypergraph Laplacian is necessary to break out of pairwise expressivity, and prove two foundational results that justify the design: a representation theorem identifying the patterns where higher-order structure dominates pairwise alternatives, and a risk bound that quantifies the approximation--variance trade-off any structured estimator must navigate.

\subsection{Problem formulation}
\label{sec:setup-problem}

Let $X^* \in \R^{N \times T}$ be the latent matrix of $N$ sensors over $T$ timesteps that we wish to recover. We observe noisy values only on a random subset of cells:
\begin{equation}
Y_{i,t}^{\rm obs} = X^*_{i,t} + \xi_{i,t} \quad \text{when } M_{i,t}=1,
\qquad
\xi_{i,t} \sim \mathrm{subG}(\sigma^2),
\quad
M_{i,t} \in \{0,1\}.
\label{eq:obs-model}
\end{equation}
The missingness mask $M$ is missing-at-random with rate $\pi_{i,t} := \E[M_{i,t}]$ bounded below by $\pi_{\min} > 0$, and the noise $\xi_{i,t}$ is sub-Gaussian with proxy variance $\sigma^2$. We write $\Omega := \{(i,t) : M_{i,t} = 1\}$ for the observed positions and $\Omega^c$ for the held-out positions, and denote the stored incomplete matrix by $Y = M \odot Y^{\rm obs}$. The imputation goal is to produce $\widehat X$ such that $\widehat X_{i,t} \approx X^*_{i,t}$ on $\Omega^c$. Side information may also be available, including a sensor adjacency matrix $A$ encoding prior network structure such as physical distances or connectivity.

\paragraph{Statistical assumptions.} The mask entries $M_{i,t}$ are mutually independent Bernoulli given the latent $X^*$, with $\pi_{i,t} \ge \pi_{\min} > 0$ (Assumption~\ref{ass:mar}). The noise entries $\xi_{i,t}$ are mean-zero sub-Gaussian, mutually independent, and independent of $M$ (Assumption~\ref{ass:noise}). For the recovery analysis of higher-order structure (Section~\ref{sec:msht}), we additionally assume a latent-factor model on each true hyperedge: a hyperedge $S^* \subseteq [N]$ of size $s^*$ has $X^*_{i,t} = \beta_i^* u_t^* + \epsilon^*_{i,t}$ for $i \in S^*$, with $\beta_i^* \neq 0$, $\Var(u_t^*) > 0$, and $\E[\epsilon^*_{i,t}\mid u_t^*] = 0$ (Assumption~\ref{ass:latent}). Formal statements appear in Appendix~\ref{app:foundations}.

\begin{assumption}[Missingness]
\label{ass:mar}
The mask entries $M_{i,t}$ are mutually independent Bernoulli given $X^*$, with $\pi_{i,t} = \E[M_{i,t}] \ge \pi_{\min} > 0$.
\end{assumption}

\begin{assumption}[Noise]
\label{ass:noise}
The noise entries $\xi_{i,t}$ are mean-zero sub-Gaussian with proxy variance $\sigma^2$, mutually independent, and independent of $M$.
\end{assumption}

\begin{assumption}[Latent-factor structure]
\label{ass:latent}
A true hyperedge $S^* \subseteq [N]$ of size $s^*$ has $X^*_{i,t} = \beta_i^* u_t^* + \epsilon^*_{i,t}$ for $i \in S^*$, with $\beta_i^* \neq 0$, $\Var(u_t^*) > 0$, and $\E[\epsilon^*_{i,t} \mid u_t^*] = 0$. The latent variance ratio is $\rho_{\rm fac}^* \beta^{*2} \Var(u_t^*) > 0$.
\end{assumption}

The three regimes that span the deployment spectrum are instances of this model with different choices of the mask structure. Cell-MAR drops each cell independently with probability $1 - \pi$, producing scattered missingness at the cell level. Block-MAR drops contiguous time blocks of fixed length $B$ for each sensor, so local temporal autocorrelation cannot recover gap interiors; we use $B = 6$ timesteps, equal to 30 minutes at the 5-minute cadence. Sensor-kriging holds out each sensor for the entire window with probability $1 - \pi$, leaving the held-out sensor with no observed history. Although these regimes share the same $\pi$ parameter, they expose very different structural information. Cell-MAR preserves both pairwise and higher-order joint observations; block-MAR preserves spatial joint observations at the cost of temporal; sensor-kriging preserves only cross-sensor pairwise observations between observed sensors.

\paragraph{Design constraints.} Three constraints follow from this problem statement. The estimator must represent group-level coherence that pairwise penalties cannot encode, because such coherence is the structural signal that survives in regimes where pairwise neighbours are jointly missing. The estimator must select the relevant interaction scale from the data, because the useful scale shifts with the missingness regime and the underlying network. The estimator must remain safe at zero engineering cost when no informative higher-order signal is available, because every deployment will encounter regimes that violate the assumptions of any particular structural prior. Sections~\ref{sec:msht} and~\ref{sec:hcrn} address each constraint in turn; the remainder of this section provides the linear backbone and the operator-level results that justify the higher-order construction.

\subsection{The linear backbone: an IPW-Tikhonov estimator}
\label{sec:setup-ipw}

A natural first attempt at imputation is to fit a smooth function of the observations to the observed cells. We adopt an inverse-propensity-weighted (IPW) Tikhonov form, which is both tractable and aligns with the generalization analyses to follow. For symmetric positive-semidefinite spatial and temporal operators $L_S \in \R^{N \times N}$ and $L_T \in \R^{T \times T}$, the estimator solves
\begin{equation}
\widehat X(L_S, L_T) := \argmin_X \;\; \frac{1}{2}\sum_{i,t} \frac{M_{i,t}}{\pi_{i,t}}\bigl(X_{i,t} - Y_{i,t}^{\rm obs}\bigr)^2 + \frac{\lambda_S}{2}\langle X, L_S X\rangle_F + \frac{\lambda_T}{2}\langle X, X L_T\rangle_F + \frac{\mu}{2}\|X\|_F^2.
\label{eq:tikh}
\end{equation}
The IPW factor $1/\pi_{i,t}$ debiases the empirical fidelity term: in expectation, the weighted empirical loss equals the population loss summed over all $N \cdot T$ cells (Appendix~\ref{app:ipw}). The smoothness penalties $\langle X, L_S X\rangle_F$ and $\langle X, X L_T\rangle_F$ impose spatial and temporal smoothness respectively, while $\mu\|X\|_F^2$ ensures well-posedness. Throughout we fix $L_T$ as the discrete first-difference Laplacian and leave the design freedom in the spatial operator $L_S$, which is exactly where the higher-order content of the framework lives.

The first-order optimality condition of \eqref{eq:tikh} yields a closed-form solution. In matrix form, with $W := M \odot \pi^{\odot -1}$ the entry-wise reciprocal of the observation rate restricted to observed cells, the optimality condition is
\begin{equation}
W \odot X + \lambda_S L_S X + \lambda_T X L_T + \mu X = W \odot Y^{\rm obs}.
\label{eq:opt}
\end{equation}
This Sylvester-like equation admits an explicit spectral solution in the Kronecker product basis when the entrywise weight matrix is replaced by its population counterpart or is spatially--temporally separable; in the general masked case, we solve it by conjugate gradient on the implicit linear operator, as detailed in Appendix~\ref{app:resolvent}. The IPW factor inflates the worst-case effective noise variance by at most a factor $1/\pi_{\min}$, from $\sigma^2$ to $\sigma^2/\pi_{\min}$.

\subsection{Beyond pairwise: hypergraph Laplacians}
\label{sec:setup-laplacian}

The choice of $L_S$ determines what kind of structure the estimator can express. The classical graph Laplacian on a symmetric nonnegative adjacency $A \in \R^{N \times N}_{\ge 0}$ is $\Lcal_G := D_A - A$ where $D_A := \diag(A\1)$ is the degree matrix. Its quadratic form is
\begin{equation}
\langle X, \Lcal_G X\rangle_F = \tfrac{1}{2}\sum_{i,j} A_{ij}\|X_{i,:} - X_{j,:}\|_2^2,
\end{equation}
the sum of pairwise squared row differences weighted by adjacency. This penalty represents the assumption that connected sensors have similar values, which is faithful when the underlying coupling really is pairwise. It cannot, however, express constraints that involve three or more sensors simultaneously.

A \emph{hyperedge} $e \subseteq [N]$ of size $|e| \ge 2$ groups sensors that are constrained to vary together. For $|e| = 2$ a hyperedge is just a graph edge; for $|e| \ge 3$ it encodes a constraint no pair can express. The \emph{per-edge Laplacian} is
\begin{equation}
L_e := |e| \cdot I_e - \1_e \1_e^\top,
\end{equation}
where $I_e$ is the identity restricted to rows and columns indexed by $e$, and $\1_e$ is the indicator vector of $e$ in $\R^N$. The matrix $L_e$ is embedded back into $\R^{N \times N}$ on the rows and columns of $e$ and is zero elsewhere. The Dirichlet identity gives
\begin{equation}
\langle X, L_e X\rangle_F \;=\; 2\binom{|e|}{2}\cdot\overline{\Var}_e(X),
\quad \text{where} \quad \overline{\Var}_e(X) := \frac{1}{T}\sum_{t=1}^T \frac{1}{|e|}\sum_{i \in e}(X_{i,t} - \bar X_{e,t})^2,
\end{equation}
i.e., the within-group row variance averaged over time. The hypergraph quadratic form thus penalises within-group dispersion; it vanishes on the constant pattern $X_{i,:} = c_i \cdot \1_T$ when all $c_i, i \in e$ are equal, regardless of how the rest of the matrix behaves.

\paragraph{Multi-scale aggregation.} Aggregating per-edge Laplacians over a multi-scale hypergraph $\Hcal = \bigcup_{s=2}^{S_{\max}} \Hcal_s$, where $\Hcal_s$ contains the size-$s$ hyperedges, gives
\begin{equation}
\Lcal_H \;=\; \sum_{s=2}^{S_{\max}} w_s \!\!\sum_{e \in \Hcal_s}\!\! L_e, \qquad w_s = \frac{1}{\binom{s}{2}}.
\label{eq:multiscale-LH}
\end{equation}
Up to a global constant, $w_s = 1/\binom{s}{2}$ is the unique weighting under which $w_{|e|} \langle X, L_e X\rangle_F = 2\overline{\Var}_e(X)$ independently of $|e|$. A triple, a 4-tuple, and a 5-tuple therefore contribute the same per-pair regularization energy on coherent group patterns; any other choice of $w_s$ biases scale selection toward larger or smaller edges. This is Lemma~\ref{lem:scale-invariance}, proven in Appendix~\ref{app:scale-invariance-proof}.

\begin{lemma}[Scale invariance]
\label{lem:scale-invariance}
For any hyperedge $e$ of size $s = |e|$ and any constant pattern $X^*_{i,:} = \mathbbm{1}[i \in e] \cdot c_t$ with $c \in \R^T$,
\[
w_s \langle X^*, L_e X^*\rangle_F = 2 \overline{\Var}_e(X^*) = 0
\]
identically, and for any non-constant pattern, $w_s \langle X^*, L_e X^*\rangle_F = 2 \overline{\Var}_e(X^*)$ depends only on the within-group dispersion, not on $s$.
\end{lemma}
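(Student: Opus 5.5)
The plan is to reduce everything to a per-timestep computation on the columns of $X^*$, and then use the Dirichlet identity stated just before \eqref{eq:multiscale-LH}. Write $x_t := X^*_{:,t} \in \R^N$ for the $t$-th column. Since $L_e$ acts only on the rows of $X^*$, we have $\langle X^*, L_e X^*\rangle_F = \sum_{t=1}^T x_t^\top L_e x_t$.

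For a single column, expand $L_e = s\, I_e - \1_e\1_e^\top$ to get
\[
x_t^\top L_e x_t = s\sum_{i\in e} x_{i,t}^2 - \Bigl(\sum_{i\in e} x_{i,t}\Bigr)^2 .
\]
This expression can be rewritten in two equivalent ways. The first is the pairwise form $\sum_{\{i,j\}\subseteq e}(x_{i,t}-x_{j,t})^2$. The second is the centred form $s\sum_{i\in e}(x_{i,t}-\bar X_{e,t})^2$. Both follow by expanding the squares and using $\sum_{i\in e}(x_{i,t}-\bar X_{e,t})=0$.

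\emph{Constant pattern.} If $X^*_{i,:}=\mathbbm{1}[i\in e]\,c_t$, then for every $t$ all entries $x_{i,t}$ with $i\in e$ equal $c_t$. Hence $x_{i,t}=\bar X_{e,t}$ for each $i \in e$. Every pairwise difference, and every centred deviation, vanishes. So $x_t^\top L_e x_t=0$ for each $t$, and therefore $\overline{\Var}_e(X^*)=0$ as well. This gives the first display identically. In fact the argument is slightly stronger: we never use that $c$ is constant in time, only that it is constant across the members of $e$ at each timestep.

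\emph{General pattern.} Summing the centred form over $t$ expresses $\langle X^*,L_eX^*\rangle_F$ as a fixed multiple of the time-averaged within-group dispersion $\overline{\Var}_e(X^*)$. I will then invoke the Dirichlet identity of Section~\ref{sec:setup-laplacian}, $\langle X, L_e X\rangle_F = 2\binom{s}{2}\,\overline{\Var}_e(X)$, and multiply by $w_s = 1/\binom{s}{2}$. The factor $\binom{s}{2}$ cancels exactly, leaving $2\,\overline{\Var}_e(X^*)$. This quantity contains no explicit dependence on $s$: it is a function only of the within-group dispersion of the rows indexed by $e$. That is the claimed scale invariance.

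For the "up to a global constant, unique" remark, I would argue as follows. Any other weight $w'_s$ satisfies $w'_s\langle X,L_eX\rangle_F = w'_s\binom{s}{2}\cdot 2\,\overline{\Var}_e$. This expression is $s$-independent for every non-constant pattern if and only if $w'_s\binom{s}{2}$ is constant in $s$.

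The step I expect to be the main obstacle is the normalization constant in the Dirichlet identity. Direct computation with the definition of $\overline{\Var}_e$ as written (the $1/T$ and $1/|e|$ averages) gives
\[
\langle X,L_eX\rangle_F = T s^2\,\overline{\Var}_e(X),
\]
not $s(s-1)\,\overline{\Var}_e(X)$. The two agree only under a different convention, namely the unbiased within-group variance (dividing by $s-1$) and summing over time rather than averaging. I would therefore state at the outset which convention is in force so that the Dirichlet identity holds verbatim, and carry that convention through the proof.

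The substantive content is unaffected by this choice. The constant-pattern case holds under any convention. Under any convention the weighted form equals a convention-fixed constant times the dispersion, and the $s$-dependence is removed precisely by $w_s$ matched to that convention. If the convention is left as literally written, the matching weight becomes $w_s\propto 1/s^2$ instead of $1/\binom{s}{2}$, and I would flag this explicitly in the proof.
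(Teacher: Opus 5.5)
Your proposal follows the paper's route: invoke the Dirichlet identity of Lemma~\ref{lem:dirichlet} and cancel $\binom{s}{2}$ against $w_s$. Your uniqueness argument, which quantifies over arbitrary weights $w'_s$, is if anything more general than the paper's check of the family $c/\binom{s}{2}^{\alpha}$. Your normalization worry is correct, and it is a defect of the paper rather than of your plan. The paper's proof of Lemma~\ref{lem:dirichlet} computes $|e|\cdot|e|\,\overline{\Var}_e(X)$ and then equates it with $2\binom{|e|}{2}\overline{\Var}_e(X)$, even though the identity $|e|^2 = 2\binom{|e|}{2}+|e|$ it cites shows that these differ. So the stated equality, and with it the $s$-independence of the prefactor, holds only under the unbiased, time-summed convention you propose. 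Under that convention $w_s\langle X, L_e X\rangle_F$ is simply the mean over pairs $\{i,j\}\subseteq e$ of $\|X_{i,:}-X_{j,:}\|_2^2$.
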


The combined spatial operator MSHL uses is
\begin{equation}
L_S \;:=\; \Lcal_G + \lambda_H \Lcal_H,
\label{eq:LS}
\end{equation}
which augments the pairwise prior with discovered higher-order constraints (Figure~\ref{fig:operators}). The role of $\Lcal_H$ is not to dominate the graph prior globally but to add candidate group-level relations that can be selected when supported by the partial observations.

\begin{figure}[!t]
\centering
\includegraphics[width=\textwidth]{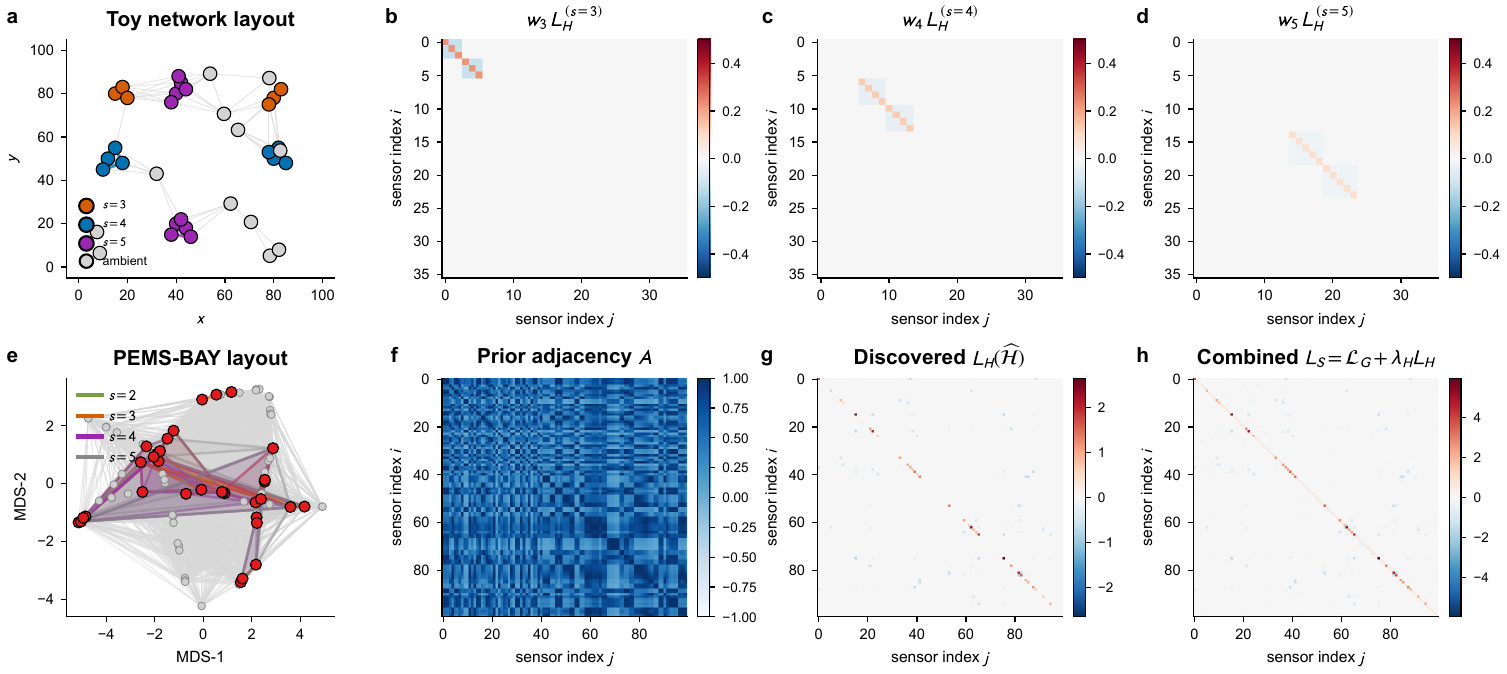}
\caption{\textbf{Multi-scale hypergraph Laplacians.} \emph{Top row, (a)-(d):} a toy network with two planted hyperedges, shown together with the per-scale Laplacians $w_s \Lcal_H^{(s)}$ at scales $s = 3, 4, 5$; each scale contributes the same per-pair regularization energy regardless of $s$ (Lemma~\ref{lem:scale-invariance}). \emph{Bottom row, (e)-(g):} on a $60$-sensor PEMS-BAY subnetwork, (e) prior adjacency $A$, (f) discovered multi-scale hypergraph $\widehat\Hcal$, (g) combined operator $L_S = \Lcal_G + \lambda_H \Lcal_H$ that augments the pairwise prior with higher-order constraints. We use a $60$-sensor subnetwork here for visual clarity; the main evaluation in Section~\ref{sec:exp} uses the $100$-sensor highest-degree subnetwork.}
\label{fig:operators}
\end{figure}

\subsection{When does higher-order structure help? A representation theorem}
\label{sec:setup-representation}

The simplest separation between $\Lcal_G$ and $\Lcal_H$ is on \emph{group-conservation patterns}: patterns where a set of sensors share a common temporal trajectory. Such patterns appear at freeway merges with flow conservation, in fleets sharing a corridor with coordinated motion, in clusters affected by the same calibration drift, or in any multi-sensor constraint that is invariant under common-mode shifts within the group.

\begin{theorem}[Group-pattern separation]
\label{thm:representation}
Let $\Lcal_G = D_A - A$ be the graph Laplacian of a symmetric nonnegative adjacency $A$, and let $\Lcal_H$ be the multi-scale hypergraph Laplacian in \eqref{eq:multiscale-LH}. Fix a hyperedge $e \in \Hcal$ with $|e| \ge 3$ and a temporal signal $c \in \R^T$. Define the group-conservation pattern
\[
X^*_{i,t} := \mathbbm{1}[i \in e]\,c_t.
\]
Then
\begin{equation}
\langle X^*, \Lcal_G X^*\rangle_F = \|c\|_2^2 \sum_{i \in e,\,j \notin e} A_{ij},
\qquad
\langle X^*, L_e X^*\rangle_F = 0.
\end{equation}
Moreover, the multi-scale operator gives
\begin{equation}
\langle X^*, \Lcal_H X^*\rangle_F = \|c\|_2^2 \sum_{s=2}^{S_{\max}} \sum_{e' \in \Hcal_s} w_s\,|e' \cap e|\,(s - |e' \cap e|),
\end{equation}
and whenever this overlap leakage is smaller than the graph-boundary cost, $\Lcal_H$ assigns a strictly lower regularization cost than $\Lcal_G$ to the group-conservation pattern.
\end{theorem}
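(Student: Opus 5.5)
The whole computation reduces to a single identity. Because $X^* = \1_e c^\top$ is rank one, for any symmetric $L \in \R^{N\times N}$ we have
\[
\langle X^*, L X^*\rangle_F = \tr\bigl(X^{*\top} L X^*\bigr) = \tr\bigl(c\,\1_e^\top L \1_e\, c^\top\bigr) = \|c\|_2^2\,\1_e^\top L \1_e .
\]
So every claimed equality amounts to evaluating the scalar $\1_e^\top L \1_e$ for the relevant operator, with the factor $\|c\|_2^2$ carried along. I will first record this identity, and then treat the three operators $\Lcal_G$, $L_e$ and $\Lcal_H$ in turn.

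\emph{Graph Laplacian.} Expand $\1_e^\top(D_A - A)\1_e = \sum_{i\in e} d_i - \sum_{i,j\in e} A_{ij}$, where $d_i = \sum_{j\in[N]} A_{ij}$. Split each degree into its inside and outside parts, $d_i = \sum_{j\in e} A_{ij} + \sum_{j\notin e} A_{ij}$. The within-$e$ double sum then cancels, and any diagonal entries $A_{ii}$ cancel as well, which leaves exactly the boundary cut $\sum_{i\in e,\,j\notin e} A_{ij}$. Alternatively one can use the Dirichlet identity $\tfrac12\sum_{i,j}A_{ij}\|X_{i,:}-X_{j,:}\|^2$: the only nonzero row differences are across the cut, each equal to $\|c\|^2$, and every unordered cut pair appears twice, which cancels the $\tfrac12$.

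\emph{Per-edge Laplacian.} Take a general hyperedge $e'$ of size $s$ and let $k = |e'\cap e|$. Since $L_{e'} = s I_{e'} - \1_{e'}\1_{e'}^\top$, we get
\[
\1_e^\top L_{e'}\1_e = s\,\1_e^\top I_{e'}\1_e - (\1_{e'}^\top\1_e)^2 = sk - k^2 = k(s-k).
\]
For $e' = e$ we have $k = s$, so this vanishes; this is the second displayed equality, and it is consistent with Lemma~\ref{lem:scale-invariance}, since the within-group variance of a pattern that is constant across $e$ is zero. Summing $k(s-k)$ with the weights $w_s$ over all $e' \in \Hcal_s$ and all $s$, using linearity of $\1_e^\top(\cdot)\1_e$ in \eqref{eq:multiscale-LH}, gives the stated multi-scale formula. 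A term contributes only when $e'$ partially overlaps $e$, that is, when $0<k<s$; this is the ``overlap leakage''.

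\emph{Comparison.} The strict-inequality claim follows directly: both costs equal $\|c\|_2^2$ times a nonnegative scalar, so the leakage being smaller than the boundary cut is exactly the statement $\langle X^*,\Lcal_H X^*\rangle_F < \langle X^*,\Lcal_G X^*\rangle_F$. If the comparison is meant for the combined operator $L_S$, it would carry the factor $\lambda_H$, which I would make explicit. I do not expect a real obstacle here. The only points needing care are the degree bookkeeping in the graph case (self-loops, and the double counting in the Dirichlet form) and checking that the embedding of $L_{e'}$ into $\R^{N\times N}$ is what justifies $\1_e^\top I_{e'}\1_e = k$.
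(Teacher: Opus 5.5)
Your proposal is correct and matches the paper's proof essentially step for step: the same rank-one reduction $\langle X^*,LX^*\rangle_F=\|c\|_2^2\,\1_e^\top L\1_e$, the boundary-cut evaluation of $\Lcal_G$ via its Dirichlet form, and the $k(s-k)$ computation for each embedded $L_{e'}$, summed with the weights $w_s$. The one detail to add, which the paper also leaves implicit, is that the final strict inequality requires $c\neq 0$: for $c=0$ both costs vanish, so leakage below the cut no longer implies a strictly lower hypergraph cost.
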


\paragraph{Discussion.} Theorem~\ref{thm:representation} is intentionally pattern-specific: it does not require, and does not imply, a global Loewner ordering between $\Lcal_G$ and $\Lcal_H$. The two operators measure smoothness with respect to different geometries, and neither dominates the other on all patterns. What the theorem identifies is a concrete and frequently encountered class of signals, the group-conservation patterns, on which $\Lcal_H$ has zero cost while $\Lcal_G$ has cost proportional to the boundary edge weight. When the corresponding constraint is present in the data, regularizing with $L_S = \Lcal_G + \lambda_H \Lcal_H$ rather than $\Lcal_G$ alone reduces the bias of the linear estimator without sacrificing the pairwise prior, which still operates outside the group.

The overlap leakage term $\sum_{e' \in \Hcal_s} w_s |e'\cap e|(s - |e'\cap e|)$ in the multi-scale form quantifies the cost of a misspecified hypergraph. When $e' = e$, $|e' \cap e| = s$ and the term vanishes; when $e' \cap e = \emptyset$ the term also vanishes; the cost is largest for partial overlap. The Discovery procedure in Section~\ref{sec:msht} is designed to minimize this leakage by selecting $\widehat\Hcal$ that aligns with the true hyperedges of the data-generating process.

\subsection{The IPW risk bound}
\label{sec:setup-risk}

Theorem~\ref{thm:representation} is qualitative: it identifies signals on which the bias of the linear estimator is reduced. To turn this into a quantitative statement about expected error, we need the risk bound for the IPW-Tikhonov estimator.

\begin{theorem}[IPW risk bound]
\label{thm:effdim}
Under MAR with sub-Gaussian noise, the estimator $\widehat X(L_S, L_T)$ from \eqref{eq:tikh} satisfies
\begin{equation}
\E\|\widehat X - X^*\|_F^2 \;\le\; \underbrace{\mathrm{Bias}^2(L_S, L_T; X^*)}_{\text{approx.\ error}} + \underbrace{\frac{\sigma^2}{\pi_{\min}}\,d_\eff^\otimes(\mu; L_S, L_T)}_{\text{est.\ variance}},
\end{equation}
where $d_\eff^\otimes(\mu; L_S, L_T) := \sum_{i,j}(\mu + \lambda_S \lambda_i^S + \lambda_T \mu_j^T)^{-1}$ is the product effective dimension on the eigenvalues of $L_S$ and $L_T$, and the bias is
\[
\mathrm{Bias}^2(L_S, L_T; X^*) = \|(\Pi_{\mu, L_S, L_T} - I)\,X^*\|_F^2
\]
for the corresponding resolvent projector $\Pi_{\mu, L_S, L_T}$.
\end{theorem}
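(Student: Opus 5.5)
We plan a bias--variance decomposition on the linear solution map defined by the optimality condition \eqref{eq:opt}. Vectorise: write $x=\vec(X)$, let $\mathcal W=\diag(\vec(W))$ be the diagonal IPW weight operator, and set $\mathcal K:=\lambda_S (I_T\otimes L_S)+\lambda_T (L_T\otimes I_N)+\mu I$. Then \eqref{eq:opt} reads $(\mathcal W+\mathcal K)\hat x=\mathcal W y$ with $y=x^*+\xi$ on observed cells. Since $\mathcal K\succeq\mu I\succ0$ and $\mathcal W\succeq 0$, the operator $\mathcal A:=\mathcal W+\mathcal K$ is invertible and $\hat x=\mathcal A^{-1}\mathcal W x^*+\mathcal A^{-1}\mathcal W\xi$.

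\textbf{Decomposition.} Condition on $M$. Because $\xi$ is mean zero and independent of $M$ (Assumption~\ref{ass:noise}), the cross term vanishes. We obtain
\[
\E[\|\hat x-x^*\|^2\mid M]=\|(\mathcal A^{-1}\mathcal W-I)x^*\|^2+\E\|\mathcal A^{-1}\mathcal W\xi\|^2 .
\]
We identify $\Pi_{\mu,L_S,L_T}$ with the resolvent projector $\mathcal A^{-1}\mathcal W$. To match the statement's deterministic form, we use the population counterpart obtained by replacing $\mathcal W$ with its expectation $\E\mathcal W=I$ (IPW unbiasedness, Appendix~\ref{app:ipw}). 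This gives $\Pi=(I+\mathcal K)^{-1}$, which is diagonal in the Kronecker eigenbasis of $L_S,L_T$. As stated in the paper after \eqref{eq:opt}, this substitution is what makes the spectral form available. The bias term is then $\|(\Pi-I)X^*\|_F^2$ by definition, and taking the expectation over $M$ preserves it.

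\textbf{Variance.} Independence and the sub-Gaussian proxy give $\E[\xi\xi^\top]\preceq\sigma^2 I$ on observed cells. Hence the variance term is at most $\sigma^2\tr(\mathcal A^{-1}\mathcal W^2\mathcal A^{-1})$. On observed cells $\mathcal W\preceq \pi_{\min}^{-1}I$, so $\mathcal W^2\preceq\pi_{\min}^{-1}\mathcal W\preceq\pi_{\min}^{-1}\mathcal A$. This yields $\tr(\mathcal A^{-1}\mathcal W^2\mathcal A^{-1})\le\pi_{\min}^{-1}\tr(\mathcal A^{-1})\le \pi_{\min}^{-1}\tr(\mathcal K^{-1})$, using $\mathcal A\succeq\mathcal K$. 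Diagonalising $\mathcal K$ in the eigenbasis $u_i\otimes v_j$ gives $\tr(\mathcal K^{-1})=\sum_{i,j}(\mu+\lambda_S\lambda_i^S+\lambda_T\mu_j^T)^{-1}=d^\otimes_\eff$. This produces the variance term exactly.

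\textbf{Main obstacle.} The delicate step is the bias term. The realised projector $\mathcal A^{-1}\mathcal W$ is random and not diagonal in the Kronecker basis, so equating its conditional bias with the population bias $\|(\Pi-I)X^*\|_F^2$ is not automatic. My first attempt would be to interpret $\Pi_{\mu,L_S,L_T}$ as the realised resolvent $\mathcal A^{-1}\mathcal W$ averaged over the mask, and to absorb Jensen-type fluctuations into the stated bias definition. If a sharper statement were needed, the fallback is a concentration argument: bound $\|\mathcal W-I\|$ in the $\mathcal K$-weighted norm via matrix Bernstein, which costs a lower-order term. Since the theorem's bias is defined only through ``the corresponding resolvent projector'', I would state that definition explicitly and keep the variance argument rigorous.
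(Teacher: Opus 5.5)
Your variance argument takes a different and tighter route than the paper. The paper replaces the random weight operator $D=\diag(\vec(M/\pi))$ by $\E D=I$ and expands in the joint eigenbasis to get $\frac{\sigma^2}{\pi_{\min}}\sum_{ij}\omega_{ij}^{-2}$. It then reaches $d_\eff^\otimes$ only via $\sum\omega_{ij}^{-2}\le\mu^{-1}\sum\omega_{ij}^{-1}$ plus a ``rescale $\mu$ so that $\omega_{\min}\ge 1$'' normalization. You instead keep the realized weights, condition on $M$, and use $\mathcal W^2\preceq\pi_{\min}^{-1}\mathcal W\preceq\pi_{\min}^{-1}\mathcal A$ together with $\mathcal A^{-1}\preceq\mathcal K^{-1}$. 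This holds for every mask and for the actual estimator. It lands exactly on $\frac{\sigma^2}{\pi_{\min}}d_\eff^\otimes$ with no normalization, because absorbing one factor of $\mathcal W$ into $\mathcal A$ makes the bound linear rather than quadratic in $\omega_{ij}^{-1}$.

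The bias step, however, fails as written. It is the same substitution the paper makes in its resolvent appendix, which gives no justification for it, so the paper does not fill the gap for you. Your sentence ``taking the expectation over $M$ preserves it'' requires $\E_M\|(\mathcal A^{-1}\mathcal W-I)x^*\|^2\le\|((I+\mathcal K)^{-1}-I)x^*\|^2$, and this is false. Take $N=T=1$, so $\mathcal K=\mu$. Then
\[
\hat x=\frac{M/\pi}{M/\pi+\mu}\,y,\qquad \E(\hat x-x^*)^2\;\ge\;(1-\pi)\,(x^*)^2,
\]
since $\hat x=0$ whenever the cell is missing. With your population projector, the right-hand side is
\[
\|((I+\mathcal K)^{-1}-I)x^*\|^2+\frac{\sigma^2}{\pi_{\min}}\,d_\eff^\otimes=\frac{\mu^2}{(1+\mu)^2}\,(x^*)^2+\frac{\sigma^2}{\pi\mu}.
\]
For small $\sigma$ at the paper's $\mu=0.02$, this is far smaller than the left side.

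The missing-cell bias is of order $(1-\pi)(x^*)^2$, not a lower-order fluctuation, so a matrix-Bernstein correction cannot recover the stated bound. Averaging the operator before squaring also goes the wrong way, since by Jensen $\|(\E_M\mathcal A^{-1}\mathcal W-I)x^*\|^2\le\E_M\|(\mathcal A^{-1}\mathcal W-I)x^*\|^2$.

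The repair is to keep your first identification: take $\Pi$ to be the realized resolvent $\mathcal A^{-1}\mathcal W$ and define $\mathrm{Bias}^2:=\E_M\|(\mathcal A^{-1}\mathcal W-I)X^*\|_F^2$. With that definition, your conditional decomposition and variance bound prove the theorem completely. With a deterministic population projector, the statement does not hold for the actual estimator.
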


\paragraph{Implications for the design.} Theorem~\ref{thm:effdim} formalizes the approximation--variance trade-off that drives the choice of $L_S$. Adding higher-order structure $\lambda_H \Lcal_H$ can reduce the bias term on group-conservation patterns of Theorem~\ref{thm:representation} while also shrinking the effective dimension through additional regularization. The risk is not variance inflation but misspecification: an overly dense or poorly aligned hypergraph can over-smooth genuine residual variation and increase approximation error. Discovery must therefore balance the two: include enough hyperedges to capture genuine group-conservation patterns, but not so many that false or partially overlapping hyperedges dominate the bias.

This motivates two design targets that organize the rest of the paper. The Discovery stage in Section~\ref{sec:msht} chooses the higher-order structure in $\Lcal_H$ so that coherent group patterns are not forced into a purely pairwise representation, while controlling the false-positive rate at large scales through a per-scale complexity penalty. The Refinement stage in Section~\ref{sec:hcrn} handles the remaining nonlinear residual structure that no linear estimator can capture, with a Discovery-aware refinement bound and automatic deferment when residual evidence is unavailable.

\section{Discovery: Building a Multi-Scale Hypergraph from Incomplete Observations}
\label{sec:msht}

This section presents the Discovery stage of MSHL. The goal is to identify a multi-scale hypergraph $\widehat\Hcal = \bigcup_s \widehat\Hcal_s$ from the partial observation matrix $Y = M \odot Y^{\rm obs}$ that is faithful to the true group-conservation structure of $X^*$ when present and does not introduce spurious higher-order constraints when absent. We construct $\widehat\Hcal$ in four steps: a pairwise pre-fit that absorbs the pairwise-spatial and temporal-AR signal; candidate generation from two complementary sources, prior topology and residual correlations; observation-only selection with a Lepski-style per-scale complexity penalty; and the linear estimator on the resulting $\Lcal_H(\widehat\Hcal)$.

\subsection{Pre-fit and the residual correlation matrix}
\label{sec:msht-prefit}

Solving \eqref{eq:tikh} with $\lambda_H = 0$ yields a pairwise estimate $\widehat X^{\rm pw}$ that absorbs the pairwise-spatial and temporal-AR signal in $\Lcal_G + \lambda_T L_T$:
\begin{equation}
\widehat X^{\rm pw} := \widehat X(\Lcal_G, L_T).
\label{eq:prefit}
\end{equation}
The residual matrix
\begin{equation}
R^{\rm pw} := M \odot (Y^{\rm obs} - \widehat X^{\rm pw})
\label{eq:resid-pw}
\end{equation}
contains, on the observed cells, the part of the data that the pairwise-Tikhonov estimator could not explain: higher-order group-coherence patterns and any residual nonlinearity. By construction $R^{\rm pw}$ is zero on $\Omega^c$ and has empirical mean approximately zero on $\Omega$.

The empirical residual correlation between any two sensors $i, j$ that share at least one observation time is
\begin{equation}
\widehat C_{ij} \;=\; \frac{\sum_{t \in \Omega_{ij}} R^{\rm pw}_{i,t}\,R^{\rm pw}_{j,t}}{\sqrt{\sum_{t \in \Omega_i} (R^{\rm pw}_{i,t})^2 \cdot \sum_{t \in \Omega_j} (R^{\rm pw}_{j,t})^2}},
\quad \Omega_{ij} := \{t : M_{i,t} = M_{j,t} = 1\}, \;\; \Omega_i := \{t : M_{i,t} = 1\}.
\label{eq:resid-corr}
\end{equation}
This is the central statistic of Discovery: each pair $(i, j)$ with strong residual coupling $|\widehat C_{ij}| > \tau_C$ is a candidate higher-order constituent. The threshold $\tau_C$ is calibrated from data, with a floor $\tau_{\rm floor} = 0.30$ and an adaptive component, the $q = 0.95$ quantile of the off-diagonal absolute residual correlations, that tracks the noise level of each window.

\paragraph{Why the pre-fit?} The pre-fit is essential for two reasons. First, the raw data correlation is dominated by the global diurnal cycle and the pairwise spatial coupling, neither of which is informative about higher-order structure. The pre-fit subtracts these signals and leaves only what they cannot explain. Second, the residual is the right target for the subsequent refinement stage: HCRN learns to predict $R^{\rm lin}$ from observed-residual features, so the same residual statistic ties Discovery and Refinement together.

\subsection{Two complementary discovery sources}
\label{sec:msht-discover}

A central design choice in MSHL is how to propose hyperedge candidates without having direct access to the true higher-order structure. We use two complementary signals whose finite-sample recovery rates fail in mutually exclusive regimes, so that their combination remains powerful where either signal alone would be inadequate. The first signal is prior network topology, expressed through a sensor adjacency matrix that is typically available from physical layout, road geometry, or a learned graph from auxiliary metadata. Topology supplies plausible group candidates by enumerating, for each sensor, the hyperedges formed with its top-$(s-1)$ adjacent neighbours:
\begin{equation}
\Ccal_s^{\rm top} := \bigl\{\{i\} \cup \mathrm{TopK}_{s-1}(A_{i,:}) : i \in [N]\bigr\}.
\label{eq:cand-top}
\end{equation}
This source can propose candidates without using observations from the candidate sensors themselves, which is useful when observations are sparse or uneven. Final acceptance, however, is still based on observed-cell evidence through the structural scores below, so topology supplies candidate support rather than a free recovery guarantee. The second signal is data-adaptive: residual correlations $\widehat C$ from the pre-fit reveal hyperedges that the prior cannot anticipate, such as group patterns driven by latent demand that does not align with physical adjacency. Sensors with high residual coupling propose candidates by enumerating the top-$(s-1)$ residual neighbours above a threshold $\tau_C$:

\begin{equation}
\Ccal_s^{\rm res} := \bigl\{\{i\} \cup \mathrm{TopK}_{s-1}\bigl(|\widehat C_{i,:}|;\, \tau_C\bigr) : \bigl|\{j : |\widehat C_{ij}| > \tau_C\}\bigr| \ge s-1\bigr\}.
\label{eq:cand-res}
\end{equation}
The combined candidate set is $\Ccal := \bigcup_s (\Ccal_s^{\rm top} \cup \Ccal_s^{\rm res})$.

The complementarity of the two signals is not just qualitative. Their finite-sample recovery rates are exponentially separated as a function of the missingness rate, so that whichever signal is statistically harder in a given regime is the one whose work the other can do.

\begin{theorem}[Topology-signal recovery]
\label{thm:str-recovery}
Let $S^* \in \Hcal^*$ be a true hyperedge of size $s$ that satisfies Assumption~\ref{ass:latent}, namely a coherent group-conservation pattern with signal-to-noise ratio $\beta^{*2}\Var(u^*)/\sigma^4$. Under MAR with observation rate $\pi$ and sub-Gaussian noise, with probability at least $1-\delta$ the topology signal identifies $S^*$ once
\[
T \;\gtrsim\; \pi^{-s}\,\sigma^4 \log(N^s/\delta) \,/\, (\rho^*_{\rm fac} \beta^{*2} \Var(u^*))^2.
\]
\end{theorem}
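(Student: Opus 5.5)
Since Theorem~\ref{thm:str-recovery} is stated informally, the first step is to fix what ``the topology signal identifies $S^*$'' means. I will take it to mean that $S^*$ lies in the topology candidate set $\Ccal_s^{\rm top}$ of \eqref{eq:cand-top}, and that its observed-cell structural score separates from the scores of every other size-$s$ candidate. For the score I take the joint-observation statistic
\[
\widehat\Gamma(e)=\frac{1}{|\Omega_e|}\sum_{t\in\Omega_e}\phi_e(Y_{:,t}), \qquad \Omega_e:=\{t: M_{i,t}=1\ \forall i\in e\}.
\]
Here $\phi_e$ is a degree-two (pairwise-product) coherence functional, for example the average off-diagonal product of centred residuals, or the leading-eigenvalue gap of the within-$e$ covariance. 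Under Assumption~\ref{ass:latent} its population value on $S^*$ is at least $c\,\rho^*_{\rm fac}\beta^{*2}\Var(u^*)$. On any $e$ not contained in a true hyperedge, its population value is at most half of that.

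The argument then has three steps.

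\textbf{Step 1: count the joint observations.} By Assumption~\ref{ass:mar} the mask entries are independent with rate at least $\pi$, so $|\Omega_e|$ is a sum of $T$ independent Bernoulli variables with mean at least $\pi^s$. A multiplicative Chernoff bound gives $|\Omega_e|\ge \tfrac12\pi^s T$ with probability at least $1-\exp(-\pi^sT/8)$. This is where the factor $\pi^{-s}$ enters.

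\textbf{Step 2: concentrate the score given the mask.} Conditional on $M$, the summands $\phi_e(Y_{:,t})$ are products of sub-Gaussian variables. The noise proxy is $\sigma^2$ (Assumption~\ref{ass:noise}), and the noise is independent of $M$, so conditioning on the mask is harmless. Products of sub-Gaussians are sub-exponential with parameter of order $\sigma^2$. Bernstein's inequality then gives
\[
|\widehat\Gamma(e)-\Gamma(e)|\lesssim \sigma^2\sqrt{\log(1/\delta')/|\Omega_e|}
\]
in the sub-Gaussian regime. The $\sigma^4$ in the statement comes from squaring this deviation.

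\textbf{Step 3: union bound and separation.} Union-bound over all size-$s$ candidates, at most $\binom{N}{s}\le N^s$ of them, with $\delta'=\delta/N^s$. Separation holds once the deviation is below half the gap $\rho^*_{\rm fac}\beta^{*2}\Var(u^*)$. Combining with Step 1, this requires
\[
\pi^sT\gtrsim \sigma^4\log(N^s/\delta)/(\rho^*_{\rm fac}\beta^{*2}\Var(u^*))^2,
\]
which is the claimed bound. The Chernoff failure term is dominated under the same condition. Membership of $S^*$ in $\Ccal_s^{\rm top}$ is a deterministic alignment condition on $A$ with respect to the true hyperedges, and I would state it as a standing hypothesis.

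\textbf{Main obstacle.} I expect the hardest part to be that the score uses residuals of the pairwise pre-fit rather than raw data. These residuals depend on all of $Y$, so the summands are not independent across $t$. I would first try a sample-splitting argument: fit the pre-fit on half the timesteps and score on the other half. This makes the residuals conditionally independent, at the cost of constants. I would then treat the pre-fit error as a bias term. Using Theorem~\ref{thm:effdim}, I would bound that bias by a quantity smaller than a quarter of the gap.

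If this proves too delicate, the fallback is to state the result for the raw observed signal, with centring by observed-cell means. Centring introduces only an $O(\sigma^2/\sqrt{|\Omega_e|})$ perturbation, which is absorbed into Step 2.
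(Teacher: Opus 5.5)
Your Steps~1--3 match the paper's proof almost line for line. Both use Chernoff to get $|\Omega_e|\ge\pi^sT/2$, a concentration bound with exponent of order $t^2\pi^sT/\sigma^4$, a union bound over at most $N^s$ candidates, and a threshold at half the gap. The gap is in the population separation you take as given. It comes from replacing the paper's score with a generic pairwise-product functional.

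The paper's proof uses the leave-one-out improvement $\widehat\Phi_e$ on $R^{\rm pw}$, which is one of the two scores MSHL actually thresholds. There, $\rho^*_{\rm fac}$ is defined as the variance reduction of the leave-one-out regression of $u^*$ on $\{r_j\}_{j\in e\setminus\{i\}}$. So $\E\Phi_{S^*}\ge\rho^*_{\rm fac}\beta^{*2}\Var(u^*)$ follows directly from $\E[r_{i,t}\mid u^*_t]=(1-\kappa_i)\beta^*_iu^*_t$.

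Your average off-diagonal product behaves differently. On $S^*$ its population value is $\binom{s}{2}^{-1}\sum_{i<j}(1-\kappa_i)(1-\kappa_j)\beta^*_i\beta^*_j\Var(u^*)$. This has no stated relation to $\rho^*_{\rm fac}$. It also vanishes for mixed-sign loadings, for example $\beta^*\propto(1,1,1,-1)$ with equal shrinkage.

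Your ``at most half'' bound on other candidates fails on partial overlaps. With homogeneous loadings, a candidate sharing $s-1$ members with $S^*$ keeps a fraction $(s-2)/s$ of the pairwise average, which is at least $1/2$ for $s\ge4$. It keeps $(s-1)/s$ of the eigenvalue gap for every $s$. The paper's claim that $\E\Phi_e=O(\sigma^2/T)$ for all $e\notin\Hcal^*$ has the same blind spot. With either score you have two options. One is to weaken ``identifies'' to ``$S^*$ is accepted and every candidate carrying no factor signal is rejected'', which is exactly what the half-gap argument delivers. The other is to carry an overlap-dependent margin, at a cost polynomial in $s$.

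Note also that a pairwise functional never needs all of $e$ observed at once. Restricting it to $\Omega_e$ makes your $\pi^{-s}$ an artefact of discarded data: on $\Omega_{ij}$ the same functional runs at the $\pi^{-2}$ rate of Theorem~\ref{thm:resid-recovery}. It is the leave-one-out regression that genuinely requires joint observation of $e$. That is why the paper's $\pi^{-s}$ is intrinsic to the topology score.

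On the obstacle you flag, the paper simply treats $\widehat\Phi_e$ as a function of independent sub-Gaussian residuals, so sample splitting would add rigour it lacks. However, bounding the pre-fit error via Theorem~\ref{thm:effdim} targets the wrong term. That theorem's bias is the part of $X^*$ the pairwise smoother misses. On $S^*$ this is precisely the $(1-\kappa_i)\beta^*_iu^*_t$ signal you want to detect, so it cannot be made smaller than a quarter of the gap. What must be small is the stochastic fluctuation $\widehat X^{\rm pw}-\E[\widehat X^{\rm pw}\mid X^*]$ on the rows of $e$ and the times in $\Omega_e$, not an aggregate Frobenius error.

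The raw-data fallback fails for a different reason. Centring by observed-cell means removes only per-sensor constants. The shared diurnal cycle and pairwise spatial coupling therefore remain, and they make every topologically adjacent group coherent. Those groups are exactly the members of $\Ccal^{\rm top}_s$.

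Finally, in Step~2 the summands contain the factor signal, not just noise. Cross terms $\beta^*_iu^*_t\xi_{j,t}$ fluctuate at scale $\sigma\beta^*\sqrt{\Var(u^*)/|\Omega_e|}$. The $u^{*2}_t$ terms need independence (or mixing) of $u^*_t$ across time, which is not assumed. The $\sigma^4$ dependence is therefore valid only when $\sigma^2\gtrsim\beta^{*2}\Var(u^*)$. The paper's Hoeffding step is equally loose here.
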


\begin{theorem}[Residual-signal recovery]
\label{thm:resid-recovery}
Under the same model and noise assumptions, with probability at least $1-\delta$ the residual signal identifies $S^*$ once $T \gtrsim \pi^{-2} \log(N^2/\delta)/\gamma^2$, where $\gamma$ is the population correlation gap of Theorem~\ref{thm:str-recovery}'s proof in Appendix~\ref{app:resid-recovery-proof}. This rate is independent of the hyperedge size $s$.
\end{theorem}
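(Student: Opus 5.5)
The plan is to reduce identification of $S^*$ by the residual signal to uniform concentration of the pairwise residual correlations $\widehat C_{ij}$ in \eqref{eq:resid-corr} around population values, and then to a threshold argument. First I fix the population target. Under Assumption~\ref{ass:latent}, for $i,j\in S^*$ the pre-fit residual keeps the common factor component that the pairwise operator $\Lcal_G+\lambda_T L_T$ cannot absorb; this is exactly the group-conservation pattern of Theorem~\ref{thm:representation}, on which $\Lcal_G$ charges a boundary cost. So the population residual correlation $C_{ij}$ is bounded below by some $\gamma_{\rm in}>0$ when $i,j\in S^*$. For pairs not both in $S^*$, the cross term is driven only by the idiosyncratic $\epsilon^*$ and the noise $\xi$, which are mean zero and independent, so $|C_{ij}|\le\gamma_{\rm out}$. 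I take $\gamma:=\gamma_{\rm in}-\gamma_{\rm out}>0$ as the population correlation gap, computed from the same variance ratio $\rho^*_{\rm fac}\beta^{*2}\Var(u^*)$ that appears in Theorem~\ref{thm:str-recovery}.

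Second, concentration for one pair. The numerator of $\widehat C_{ij}$ sums over $\Omega_{ij}$. Under Assumption~\ref{ass:mar}, $|\Omega_{ij}|\sim\mathrm{Bin}(T,\pi^2)$, so a Chernoff bound gives $|\Omega_{ij}|\ge \pi^2T/2$ with probability at least $1-e^{-\pi^2T/8}$. Conditional on the mask, the products $R_{i,t}R_{j,t}$ are sub-exponential because they are products of sub-Gaussians by Assumption~\ref{ass:noise}. Bernstein then gives a deviation of order $\sqrt{\log(1/\delta')/(\pi^2T)}$ for the normalized numerator. The denominators sum over $\Omega_i$ and $\Omega_j$, which have the larger rate $\pi$, so they concentrate faster; a ratio (delta-method) argument transfers the error to $\widehat C_{ij}$ with a constant factor.

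Third, I take a union bound over all $\binom N2\le N^2$ pairs with $\delta'=\delta/N^2$. Requiring the deviation to be below $\gamma/2$ yields $T\gtrsim\pi^{-2}\log(N^2/\delta)/\gamma^2$. On this event, every in-edge pair exceeds $\tau_C$ (set in the gap) and every other pair falls below it. For each $i\in S^*$ the thresholded top-$(s-1)$ list in \eqref{eq:cand-res} is therefore exactly $S^*\setminus\{i\}$, so $S^*\in\Ccal_s^{\rm res}$. The key point is that identification only needs pairwise events, each with joint rate $\pi^2$, so $s$ never enters the rate. This is the contrast with the $\pi^{-s}$ rate in Theorem~\ref{thm:str-recovery}.

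The main obstacle is that $R^{\rm pw}$ depends on all of $Y$ through the pre-fit $\widehat X^{\rm pw}$. The summands are therefore not independent across $t$, and the residual is not exactly $\beta_i^*u_t^*+$ noise. My first attempt would write $R^{\rm pw}=M\odot(I-\mathcal{S})(Y^{\rm obs})$, where $\mathcal{S}$ is the linear smoother from \eqref{eq:opt}. The noise part is then a fixed linear map of independent sub-Gaussians given $M$, and Hanson--Wright handles the bilinear form $\sum_t R_{i,t}R_{j,t}$. I would bound the operator norm of $I-\mathcal{S}$ by $1$ using the resolvent structure behind Theorem~\ref{thm:effdim}. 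Any remaining dependence on the mask in $\mathcal{S}$ I would absorb by conditioning on a high-probability mask event; this affects only the constants and leaves the $\pi^{-2}$ rate unchanged.
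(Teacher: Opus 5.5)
Your proposal follows the paper's proof essentially step for step. Both arguments use a population gap $\gamma$ driven by the common factor that the pairwise pre-fit cannot absorb, a Chernoff bound $|\Omega_{ij}|\ge \pi^2T/2$, and per-pair concentration of $\widehat C_{ij}$. Both then take a union bound over the $\binom{N}{2}$ pairs at deviation $\gamma/2$ and place $\tau_C$ inside the gap, so that the thresholded top-$(s-1)$ list returns $S^*\setminus\{i\}$. Your Hanson--Wright treatment of the dependence induced by $\widehat X^{\rm pw}$ is extra care that the paper simply omits.

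One caveat applies equally to you and to the paper, and it sits in your ratio step. In \eqref{eq:resid-corr} the numerator is summed over $\Omega_{ij}$ but the denominators over $\Omega_i$ and $\Omega_j$, so $\widehat C_{ij}$ is the cosine of the zero-filled residual rows. It therefore concentrates near $\pi\,C_{ij}$, not $C_{ij}$, and $\tau_C$ must lie in the $\pi$-scaled gap. The deviation then becomes $O\bigl(\sqrt{\log(N^2/\delta)/T}\bigr)$ against a separation of order $\pi\gamma$, which reproduces the same $\pi^{-2}$ rate.
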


The exponential gap is the key driver of regime robustness, and the two failure modes do not overlap. Whole-sensor missingness breaks the residual signal: a sensor held out for the entire window has no observed cells and no defined pairwise correlations. The topology source can still propose plausible candidates because candidate generation depends only on the prior adjacency, although their acceptance remains limited by the observed evidence available for scoring. Conversely, dense scattered missingness at large hyperedge sizes breaks the topology signal: at moderate $\pi$ and $s = 5$, joint observations make up only a few percent of timesteps, requiring far more data than is typically available. The residual signal still operates because it depends on pairwise joint observations only. Their disjunction therefore covers the deployment spectrum, and the empirical balance between the two shifts smoothly with the regime, as we show in Section~\ref{sec:exp-ablation}.

\subsection{Selecting from candidates}
\label{sec:msht-select}

Each candidate is ranked by two structural scores computed from observed cells alone. The score $\widehat\Psi_e$ is the average pairwise residual correlation inside $e$, and $\widehat\Phi_e$ is the leave-one-out improvement in mean-squared error on $R^{\rm pw}$ when $e$ is added to the regularizer. The first captures the same signal as the residual mechanism but at the hyperedge level; the second captures the marginal predictive value of $e$ over the pairwise pre-fit. The two scores have different concentration rates: $\widehat\Psi_e$ concentrates at the pairwise rate $1/(\pi^2 T)$, while $\widehat\Phi_e$ concentrates at the joint rate $1/(\pi^s T)$. The full statement and proof are in Lemma~\ref{lem:per-scale-var} and Appendix~\ref{app:per-scale-var-proof}; the practical consequence is that $\widehat\Psi_e$ remains useful even when $\widehat\Phi_e$ is unstable at large $s$, and a candidate is accepted whenever \emph{either} score exceeds its calibrated threshold.

The thresholds combine a data-adaptive part with a per-scale complexity penalty $\rho(s-2)$ chosen to dominate the score fluctuations. Acceptance by either score means the false-negative rate of Discovery is the product of the two component rates: in regimes where one score has low power, the other supplies the missing detection. The complexity penalty controls the false-positive rate, which would otherwise grow with the candidate count at large scales. Accepted hyperedges enter $\widehat\Hcal$ with sigmoid-shrinkage weights that smoothly interpolate from zero just above the threshold to one well above it, avoiding hard-thresholding artefacts. We also cap the number of accepted hyperedges per scale at $J_{\max}$ to bound the per-scale Laplacian degree and keep the linear solve fast. Full expressions for the scores, thresholds, complexity penalty, and acceptance rule are deferred to Appendix~\ref{app:hyper}.

\subsection{Adaptive scale selection}
\label{sec:msht-adapt}

Different missingness regimes can expose different interaction scales. At dense block-MAR, the useful scale tends to be larger because multi-sensor outage clusters can sustain coherent group patterns; at sparse cell-MAR, smaller scales are recovered more reliably; at kriging, only the topology source contributes and the recovered scale tends to be smaller. Rather than committing to a single fixed scale, MSHL searches over scales using observation-only scores plus the Lepski-style per-scale complexity penalty $\rho(s-2)$.

Let
\[
s^* \in \argmin_{2 \le s \le S_{\max}} R(s),
\qquad
R(s) := \E\|\widehat X(L_S^{(s)}, L_T) - X^*\|_F^2,
\]
be the oracle best fixed scale, with $L_S^{(s)} = \Lcal_G + \lambda_H \Lcal_H^{(s)}$ the spatial operator at scale $s$. The following oracle bound shows that the multi-scale selector competes with $s^*$ in hindsight up to the price of multi-scale selection.

\begin{theorem}[Oracle scale adaptation]
\label{thm:multiscale-adapt}
Under the conditions of Lemma~\ref{lem:per-scale-var}, with the per-scale penalty $\rho(s-2)$ chosen to dominate the score fluctuations as in \eqref{eq:rho-bound}, with probability at least $1-\delta$,
\[
\E\|\widehat X(\widehat\Hcal) - X^*\|_F^2 \;\le\; \min_{2 \le s \le S_{\max}}\bigl\{R(s) + C\bigl[\rho(s-2) + |\Omega|^{-1}\log(S_{\max}/\delta)\bigr]\bigr\},
\]
where $C$ depends on $\mu, \lambda_H, \|X^*\|_F, \sigma$, and the candidate-degree cap.
\end{theorem}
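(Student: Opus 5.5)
We follow the standard Lepski / penalized-model-selection template. The per-scale risk $R(s)$ is compared with an observation-only proxy. The selected hypergraph $\widehat\Hcal$ is then shown to behave, on a high-probability event, like the minimizer of proxy plus penalty. Finally, the estimator's risk is transferred back to $R(s)$ through a stability bound on the Tikhonov resolvent.

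\textbf{Step 1: concentration event.} For each scale $s$, let $\widehat R(s)$ be the observed-cell criterion underlying selection: the IPW-weighted residual error on $\Omega$ of the fit with $L_S^{(s)}$, together with the structural scores $\widehat\Psi_e,\widehat\Phi_e$ that decide acceptance. Lemma~\ref{lem:per-scale-var} states that these scores concentrate at rates $1/(\pi^2T)$ and $1/(\pi^sT)$. We combine it with a sub-Gaussian Bernstein bound for the IPW empirical loss; Assumptions~\ref{ass:mar}--\ref{ass:noise} give independent bounded-weight summands with weights at most $1/\pi_{\min}$. A union bound over the $S_{\max}-1$ scales then yields an event $\mathcal E$ of probability at least $1-\delta$ on which, for every $s$,
\[
|\widehat R(s)-R(s)|\le \tfrac12\rho(s-2)+C_1|\Omega|^{-1}\log(S_{\max}/\delta).
\]
The condition \eqref{eq:rho-bound} is exactly what guarantees that the penalty dominates the score fluctuations. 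On $\mathcal E$, we also have that no candidate is spuriously accepted beyond the penalty margin.

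\textbf{Step 2: selection inequality.} On $\mathcal E$, the selector's acceptance rule returns a hypergraph at the effective scale $\widehat s$, or a union of per-scale accepted sets. We argue that
\[
\widehat R(\widehat s)+\rho(\widehat s-2)\le \widehat R(s)+\rho(s-2)+\text{slack}\quad\text{for all } s.
\]
Combined with Step 1, in the usual Lepski fashion, this gives
\[
R(\widehat s)\le R(s)+2\rho(s-2)+2C_1|\Omega|^{-1}\log(S_{\max}/\delta).
\]
The penalty on $\widehat s$ cancels against the fluctuation bound, and the penalty on the comparator $s$ survives.

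\textbf{Step 3: from the selected scale to $\widehat X(\widehat\Hcal)$.} $\widehat\Hcal$ is a multi-scale union with sigmoid-shrinkage weights, not exactly $\Hcal^{(\widehat s)}$, so a perturbation step is needed. The resolvent of \eqref{eq:opt} is Lipschitz in $L_S$ with constant of order $\lambda_H\|X^*\|_F/\mu^2$, since $\mu I$ lower-bounds the operator. The per-scale degree cap $J_{\max}$ bounds $\|\Lcal_H(\widehat\Hcal)-\Lcal_H^{(\widehat s)}\|_{\op}$ by the weight mass of hyperedges that sit within the penalty margin. On $\mathcal E$, that mass is itself of order $\rho(s-2)+|\Omega|^{-1}\log(S_{\max}/\delta)$. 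This puts the dependence of $C$ on $\mu,\lambda_H,\|X^*\|_F,\sigma$ and the degree cap. Taking the minimum over $s$ then gives the theorem.

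\textbf{Main obstacle.} The hardest part is Step 2 together with its coupling to Step 3. The selector accepts hyperedges through thresholding of $\widehat\Psi_e,\widehat\Phi_e$ rather than by explicitly minimizing a penalized risk, so I would first establish a monotonicity fact. On $\mathcal E$, adding a hyperedge whose score exceeds its penalized threshold cannot increase the proxy risk by more than the margin. The accept-by-threshold rule then reduces to a penalized minimization up to slack. If that monotonicity fails for partially overlapping hyperedges, for which Theorem~\ref{thm:representation} shows leakage, I would fall back on bounding the leakage term directly. The false-positive control from the penalty would then charge it to $\rho(s-2)$.
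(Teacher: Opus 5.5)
\textbf{Gap in Steps~1--2.} You route the argument through a Lepski-type selection inequality, but this selector gives such an inequality nothing to act on. MSHL never forms a per-scale observed risk $\widehat R(s)$ and never chooses a scale $\widehat s$. It thresholds $\widehat\Psi_e$ or $\widehat\Phi_e$ edge by edge and takes the union over all scales, and $\widehat\Psi_e$ is an average correlation, not a risk. So $\widehat R(\widehat s)+\rho(\widehat s-2)\le\widehat R(s)+\rho(s-2)+\text{slack}$ has no penalized minimization behind it.

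The monotonicity fact you would use to manufacture one is not implied by Lemma~\ref{lem:per-scale-var}. That lemma bounds score fluctuations but says nothing about the sign of an edge's effect on the fit. The fact also fails in natural cases. The effect of an edge on the Tikhonov solution depends on which other edges are present, because the resolvent is nonlinear in $\Lcal_H$. A candidate that partially overlaps a true group, say three members of $S^*$ and one outsider, can clear the $\widehat\Psi_e$ threshold on the strength of its within-group pairs, while penalizing the true group pattern through the leakage term of Theorem~\ref{thm:representation}.

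Step~1 fails independently. An IPW residual evaluated on the same cells used to fit $\widehat X(L_S^{(s)},L_T)$ is not a sum of independent terms centred at $R(s)$. It is biased downward by the in-sample optimism, a degrees-of-freedom term of the smoother that varies with $s$ and does not cancel across scales. Even for a fixed fit, Bernstein's inequality on a per-cell average gives a deviation of order $|\Omega|^{-1/2}$, not the $|\Omega|^{-1}\log(S_{\max}/\delta)$ you claim.

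\textbf{How the paper avoids this.} The paper needs none of this machinery. It fixes an arbitrary comparator scale $s$ and works on the uniform event of Lemma~\ref{lem:per-scale-var}, with $\delta$ replaced by $\delta/S_{\max}$. On that event, the accepted set $\widehat\Hcal_s$ differs from the best scale-$s$ set $\Hcal_s^*$ only by edges whose population contribution lies within the penalty tolerance. The degree cap and $w_s=1/\binom{s}{2}$ then give $\|\Lcal_H(\widehat\Hcal_s)-\Lcal_H(\Hcal_s^*)\|_{\op}\le C\,\mathrm{pen}(s,\delta)$. Lemma~\ref{lem:tikh-stability} (the resolvent identity with the ridge bound $\mu^{-1}$ on the inverse normal-equation operator) converts this into $R(s)+C\,\mathrm{pen}(s,\delta)$, and one then minimizes over $s$. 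So you can delete Steps~1--2 and run your Step~3 against the oracle set at every $s$, rather than against a data-chosen $\widehat s$.

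\textbf{A caution about Step~3.} The difference between $\Lcal_H(\widehat\Hcal)$ and a single-scale Laplacian is not carried only by edges near the penalty margin. Edges accepted at other scales with large margins have sigmoid weight near one. Even a barely accepted edge has weight about $\mathrm{sigmoid}(0)=1/2$, not $O(\rho(s-2))$. What is small is such an edge's population effect, not its operator norm. The paper also makes both of these moves by assertion: from small effect to small operator norm, and from the per-scale sets to their union. A fully rigorous version therefore needs an additional hypothesis controlling the edges accepted at scales other than the comparator, or a comparator risk defined on the full multi-scale oracle hypergraph.
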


\paragraph{Discussion.} Theorem~\ref{thm:multiscale-adapt} formalizes the headline guarantee of MSHL's Discovery stage: regardless of which scale is best for the given regime, MSHL identifies it up to a logarithmic factor. Empirically the selector saturates the candidate cap on cell-MAR and block-MAR at every rate that admits structure, while the count drops monotonically with the missing rate on kriging, an automatic ramp toward the pairwise-only fit that the theorem predicts.

\subsection{The linear estimator}
\label{sec:msht-fit}

With $\widehat\Hcal$ in hand, the linear MSHL estimator is
\begin{equation}
\widehat X^{\rm lin} := \widehat X\bigl(\Lcal_G + \lambda_H \Lcal_H(\widehat\Hcal),\, L_T\bigr),
\label{eq:msht-lin}
\end{equation}
solved by conjugate gradient on the implicit linear operator. Because $\widehat X^{\rm lin}$ is a linear function of $Y$ through the resolvent identity, it cannot capture nonlinear regularity in $\E[X^* \mid Y, M]$. The next section addresses this remaining limitation through the HCRN refinement.

The recovery error of the linear estimator on the held-out cells $\Omega^c$ admits a direct bound that follows from Theorem~\ref{thm:effdim} specialized to the held-out positions.

\begin{theorem}[Imputation recovery via multi-scale structure]
\label{thm:imputation-recovery}
Decompose $X^* = X^*_{\rm cons} + X^*_{\rm res}$ along the low-penalty group-conservation subspace induced by $\widehat\Hcal$. The hypergraph component imposes no additional penalty on $X^*_{\rm cons}$ when $X^*_{\rm cons} \in \ker \Lcal_H(\widehat\Hcal)$; any remaining bias comes from the graph, temporal, and ridge terms. Under the coverage and low-boundary assumptions stated in Appendix~\ref{app:imputation-recovery-proof}, the held-out error satisfies
\begin{equation}
\E\bigl[(\widehat X^{\rm lin}_{i,t} - X^*_{i,t})^2\bigr]_{(i,t) \in \Omega^c} \;\le\; \frac{\|X^*_{\rm res}\|_F^2 + \sigma^2/\pi_{\min}}{\kappa_+(L_S)},
\qquad \kappa_+(L_S) := \mu + \lambda_S \lambda_{\min}^+(L_S).
\label{eq:imputation-bound}
\end{equation}
\end{theorem}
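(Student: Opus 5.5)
The plan is to obtain the bound by specializing the bias–variance decomposition of Theorem~\ref{thm:effdim} to $L_S = \Lcal_G + \lambda_H \Lcal_H(\widehat\Hcal)$ and then working in the eigenbasis of the resolvent. I will split $X^* = X^*_{\rm cons} + X^*_{\rm res}$ orthogonally, with $X^*_{\rm cons}$ the projection onto the low-penalty group-conservation subspace. The part of $\widehat X^{\rm lin}$ attributable to $X^*_{\rm cons}$ is then handled separately from the part driven by $X^*_{\rm res}$ and by the noise. Writing the estimator through the optimality condition \eqref{eq:opt}, I will replace $W$ by its population counterpart $\mathbb{E}[W] = \mathbf{1}\mathbf{1}^\top$, which is valid in expectation by the IPW debiasing of Appendix~\ref{app:ipw}. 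The estimator is then $\widehat X^{\rm lin} = \mathcal{R}(Y^{\rm obs}\text{-part})$ with resolvent $\mathcal{R} := (I + \lambda_S L_S\otimes I + \lambda_T I \otimes L_T + \mu I)^{-1}$, or a normalized variant. The masked correction will be controlled afterwards by a perturbation argument, using the $1/\pi_{\min}$ inflation already noted in Section~\ref{sec:setup-ipw}.

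The key steps, in order, are as follows.

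\emph{(i) Conservation component.} By Theorem~\ref{thm:representation} and Lemma~\ref{lem:scale-invariance}, $\langle X^*_{\rm cons}, \Lcal_H X^*_{\rm cons}\rangle_F = 0$ whenever $X^*_{\rm cons}\in\ker\Lcal_H(\widehat\Hcal)$. Its residual penalty therefore comes only from the graph-boundary term $\|c\|^2\sum_{i\in e, j\notin e}A_{ij}$ and from the temporal and ridge terms. The appendix's low-boundary and coverage assumptions are exactly what will let me absorb these into the residual budget, so that this component contributes no separate bias term.

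\emph{(ii) Residual component.} On the complement of $\ker L_S$, every eigenvalue of the regularizer is at least $\kappa_+(L_S) = \mu + \lambda_S\lambda^+_{\min}(L_S)$. The bias term $\|(\Pi - I)X^*_{\rm res}\|_F^2$ is therefore bounded after normalizing by the smallest nonzero curvature: the resolvent amplifies per-cell error by at most $1/\kappa_+$, which yields $\|X^*_{\rm res}\|_F^2/\kappa_+$.

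\emph{(iii) Variance.} I will bound the noise term of Theorem~\ref{thm:effdim} by $(\sigma^2/\pi_{\min})\,\|\mathcal{R}\|_{\rm op}$ restricted to held-out coordinates, which gives $\sigma^2/(\pi_{\min}\kappa_+)$. This per-cell (averaged) version is what the left-hand side of \eqref{eq:imputation-bound} requires, in place of the full $d_\eff^\otimes$.

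\emph{(iv) Passing to held-out cells.} The last step goes from the Frobenius risk to the held-out average over $\Omega^c$. Here I use MAR independence of $M$ from $X^*$ and $\xi$ (Assumptions~\ref{ass:mar}--\ref{ass:noise}), so that conditional on the design, held-out cells are exchangeable with the average cell up to the coverage constant.

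The main obstacle is step (ii) combined with (iv), namely obtaining $1/\kappa_+$ rather than a dimension-dependent factor while keeping the bound entrywise on $\Omega^c$. The masked operator $W\odot\cdot$ does not commute with $L_S\otimes I$, so I cannot diagonalize exactly. The first thing to try is a Loewner comparison: show that the masked system matrix dominates $\pi_{\min}\,I$ plus the regularizer on the coordinates reachable from observed cells. The coverage assumption should guarantee that every held-out cell is connected through $L_S$ or $L_T$ to observed ones, which ensures that the relevant spectrum is bounded below by $\kappa_+$ on the range that excludes the kernel. Kernel directions are then exactly the conservation ones, which step (i) already handles. If this comparison fails, I would fall back to a Neumann-series bound on $(\mathbb{E}W - W)$, accepting an extra concentration term that vanishes under the coverage assumption.
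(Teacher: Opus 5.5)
\paragraph{Verdict.} Your plan follows the same skeleton as the paper's proof, but step (ii) fails, and steps (iii) and (i) fail too. The paper's own Step~2 and Step~3 make the same inferences, so following the paper will not repair them.

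\paragraph{Shared route.} You replace $D=\diag(\vec(M/\pi))$ by $\E[D]=I$ inside the inverse, diagonalize in the joint eigenbasis of $L_S$ and $L_T$, split $X^*$ along $\ker L_S$, and use $\omega\ge\kappa_+(L_S)$ off the kernel. The paper avoids your boundary-absorption step by \emph{defining} $X^*_{\rm cons}$ as the projection onto $\ker\Lcal_H(\widehat\Hcal)\cap\ker\Lcal_G$.

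\paragraph{Where it breaks.} The obstacle you single out, non-commutativity of the mask, is not where the argument breaks. It breaks already in the commuting population model that you and the paper both use. (Your Loewner comparison would fail anyway: $D$ vanishes on $\Omega^c$, so it dominates no positive multiple of the identity on held-out coordinates.)

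\paragraph{Step (ii): the bias does not scale like $1/\kappa_+$.} Write $\mathrm{Reg}:=\mu I+\lambda_S(I_T\otimes L_S)+\lambda_T(L_T\otimes I_N)$, and let $r=\mu+\lambda_S\lambda_k^S+\lambda_T\mu_j^T$ be the $\mathrm{Reg}$-eigenvalue of a mode. In the population model that mode is estimated by $(\alpha+\widetilde\eta)/(1+r)$, so its bias multiplier is $r/(1+r)$, which \emph{increases} with $r$.
\begin{itemize}
\item The bound $r\ge\kappa_+$ therefore bounds the shrinkage from \emph{below}, by $\kappa_+/(1+\kappa_+)$.
\item From above you get only $1$, i.e.\ $\|X^*_{\rm res}\|_F^2$, not $\|X^*_{\rm res}\|_F^2/\kappa_+$.
\item The $1/\kappa_+$ you invoke is the norm of the noise-propagation operator $(I+\mathrm{Reg})^{-1}$ off the kernel. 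It is not the norm of the bias operator $I-\Pi=(I+\mathrm{Reg})^{-1}\mathrm{Reg}$.
\end{itemize}
Concretely, take $N=2$, $T=1$ (so $L_T=0$), one edge of weight $a$, empty $\widehat\Hcal$, $X^*=\tfrac{c}{\sqrt2}(1,-1)^\top$ and $\sigma=0$. Each cell has squared error $\tfrac{c^2}{2}\bigl(\kappa_+/(1+\kappa_+)\bigr)^2$, which exceeds $c^2/\kappa_+$ once $\kappa_+\ge 4$. The paper's Step~3 reaches $1/\kappa_+$ from ``ratio $\le 1$ and $\omega\ge\kappa_+$'' by this same inference.

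\paragraph{Step (iii): the variance picks up the kernel modes.} Held-out coordinates are cells, not modes. With uniform rate $\pi$ and noise variance $\sigma^2$, the variance at $(i,t)$ is $\frac{\sigma^2}{\pi}\sum_m v_m(i,t)^2/(1+r_m)^2$, where $v_m$ are the orthonormal eigenvectors of $\mathrm{Reg}$.
\begin{itemize}
\item This sum includes the kernel (conservation) modes, where $1+r_m=1+\mu+\lambda_T\mu_j^T$.
\item Since $\|(I+\mathrm{Reg})^{-1}\|_{\op}=1/(1+\mu)$, you only get $\sigma^2/(\pi(1+\mu)^2)$.
\item In the example above, the constant mode alone contributes $\sigma^2/(2\pi(1+\mu)^2)$ per cell, against the claimed $\sigma^2/(\pi\kappa_+)$.
\end{itemize}
The paper's Step~3 likewise applies $\omega\ge\kappa_+$ to the variance sum over all modes, kernel included.

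\paragraph{Step (i): $X^*_{\rm cons}$ is not bias-free.} On $\ker L_S$ the multiplier is $(\mu+\lambda_T\mu_j^T)/(1+\mu+\lambda_T\mu_j^T)$. This is positive unless $\mu=0$ and the temporal profile is constant. Coverage and low boundary say nothing about the ridge or about temporal roughness, so this term cannot be absorbed. For instance, $X^*=\tfrac{c}{\sqrt2}(1,1)^\top$ with $\sigma=0$ gives a positive left side against a zero right side. The paper's Step~2 drops $\lambda_T\mu_j^T$ in the same way.

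\paragraph{What the argument actually supports.} The spectral argument gives a per-cell bound of order $\|X^*_{\rm res}\|_F^2$, plus the ridge/temporal bias of $X^*_{\rm cons}$, plus $\sigma^2/(\pi_{\min}(1+\mu)^2)$. A $1/\kappa_+$ gain appears only for noise carried by non-kernel modes. In the population model the displayed inequality does follow when $\kappa_+\le 1$ and $X^*_{\rm cons}$ is unshrunk. In general, reaching it requires changing the statement, not sharpening your steps.
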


The proof is in Appendix~\ref{app:imputation-recovery-proof}. The bound makes the bias-variance trade-off concrete. Better-targeted Discovery shrinks the residual component by absorbing more of $X^*$ into the low-penalty group-conservation subspace, while overly aggressive Discovery can introduce misspecified hyperedges that enlarge $X^*_{\rm res}$ through over-smoothing. The empirical sensitivity to $J_{\max}$ in Section~\ref{sec:exp-sensitivity} reflects this bias--regularization trade-off.

\section{Refinement: A Hypergraph-Conditioned Neural Network}
\label{sec:hcrn}

The Discovery stage produces a multi-scale hypergraph $\widehat\Hcal$ and a linear estimator $\widehat X^{\rm lin}$ that captures pairwise spatial coupling, temporal autocorrelation, and the higher-order group-conservation patterns identified by Discovery. What it cannot capture is nonlinear residual structure: any value-dependent regularity in $\E[X^* \mid Y, M]$ that is not a linear function of the observations. The Refinement stage addresses this through a small, safe, hypergraph-conditioned residual network (HCRN).

\subsection{Why a residual corrector?}
\label{sec:hcrn-why}

The bound in Theorem~\ref{thm:imputation-recovery} is tight in the residual component $X^*_{\rm res}$, with the Tikhonov bias proportional to $\|X^*_{\rm res}\|_F^2 / \kappa_+(L_S)$. This bias is intrinsic to any linear estimator, no matter how well-chosen: a linear function of $Y$ cannot represent the regularity $\E[X^*_{\rm res, i,t} \mid Y, M] = h(\phi_{i,t})$ for a nonlinear $h$. The remedy is a small nonlinear corrector that operates on the residual.

Three constraints shape our design. The corrector should inherit Discovery rather than relearn it: it should use the discovered $\widehat\Hcal$ as a hard structural prior, not parameterize the structure itself. This keeps the corrector small, with a typical hidden width of 32, and makes its guarantee dependent on Discovery quality. The corrector should operate on residuals: the target is $R^{\rm lin}$, the linear-stage residual on observed cells, and the input is the observed-residual values of the target cell's co-members along $\widehat\Hcal$, which isolates the corrector's job from the linear stage. The corrector should be safe by default: the zero correction must always be feasible, so that the worst-case excess of the corrector over the linear estimator is bounded by the generalization gap rather than an unbounded misspecification term.

\subsection{Architecture}
\label{sec:hcrn-arch}

For each cell $(i, t)$, build a fixed-length feature vector from the observed-residual values of $i$'s co-members along $\widehat\Hcal$:
\begin{equation}
\phi_{i,t} \;:=\; \mathrm{concat}\Bigl(\bigl(R^{\rm lin}_{j,t} M_{j,t},\; M_{j,t}\bigr)_{j \in \mathcal N_{E,K}(i;\widehat\Hcal)},\; g_{i,t}\Bigr) \;\in\; \R^F,
\label{eq:hcrn-features}
\end{equation}
Here $R^{\rm lin} := M \odot (Y^{\rm obs} - \widehat X^{\rm lin})$ is the linear-stage residual on observed cells. The neighbourhood operator $\mathcal N_{E,K}(i; \widehat\Hcal)$ enumerates the top $E$ hyperedges of $i$ in $\widehat\Hcal$ ranked by hyperedge weight $\widehat w_e$, and concatenates $K$ co-members from each, zero-padded when fewer are available, for $E\cdot K$ co-member slots in total. The vector $g_{i,t} \in \R^3$ holds three global features: the mean of observed-neighbour residuals at $t$, the count of observed neighbours, and the local observation rate at $t$. Each co-member slot $j$ contributes two features, the masked residual value $R^{\rm lin}_{j,t} M_{j,t}$, which is zero when $j$ is held out at $t$, and the indicator $M_{j,t}$ that lets the network distinguish ``observed and zero'' from ``held-out and zero''. The total feature dimension is $F = 2EK + 3$. We use $E = 8$ hyperedges per cell and $K = 4$ co-members per hyperedge in our experiments, giving $F = 67$.

\paragraph{Critical design point: no leakage.} The feature vector at cell $(i, t)$ depends \emph{only on $i$'s co-members at time $t$, never on $i$'s own value at any time}. This rules out the trivial identity solution where the corrector reads $Y^{\rm obs}_{i,t}$ and predicts the residual to be zero. The corrector can only use information that is structurally orthogonal to the cell being corrected.

The corrector itself is a small two-layer MLP:
\begin{equation}
g_\theta(\phi) = w_2^\top \sigma(W_1 \phi + b_1) + b_2, \qquad \sigma(\cdot) = \max(0, \cdot),
\label{eq:hcrn-mlp}
\end{equation}
with parameter set $\theta = (W_1, b_1, w_2, b_2)$ in a bounded space $\Theta$ whose Frobenius-norm bounds make the MLP $L_g$-Lipschitz with bounded output, as detailed in Appendix~\ref{app:corrector-proof}.

\subsection{Training and inference}
\label{sec:hcrn-train}

The training set is constructed from observed cells whose neighbours are also observed at the same time:
\begin{equation}
\mathcal D := \{(\phi_{i,t}, R^{\rm lin}_{i,t}) : (i, t) \in \Omega,\; \mathcal N_{E,K}(i; \widehat\Hcal) \cap \Omega(t) \neq \emptyset\}.
\label{eq:dtrain}
\end{equation}
We minimize the regularized empirical Huber risk:
\begin{equation}
\widehat\theta := \argmin_{\theta \in \Theta}\; \frac{1}{n_{\rm tr}}\sum_{(\phi, r) \in \mathcal D} \ell_{\delta_H}\bigl(g_\theta(\phi), r\bigr) + \lambda_{\rm reg}\|\theta\|_2^2,
\label{eq:hcrn-erm}
\end{equation}
where $\ell_{\delta_H}$ is the Huber loss with threshold $\delta_H = 1$ mph. Huber rather than squared loss is essential: the residual matrix $R^{\rm lin}$ has heavy tails arising from genuinely difficult cells such as sensor changeovers and sudden congestion onset, and squared loss would amplify these into the gradient. We use $30$ epochs of Adam with hidden width $32$, batch size $256$, learning rate $10^{-2}$, and weight decay $10^{-4}$.

At inference, for each held-out cell $(i, t) \in \Omega^c$:
\begin{equation}
\widehat X^{\rm full}_{i,t} \;=\; \widehat X^{\rm lin}_{i,t} + \alpha\, g_{\widehat\theta}(\phi_{i,t}),
\label{eq:hcrn-apply}
\end{equation}
where $\alpha \in [0, 1]$ is a global gain hyperparameter that scales the corrector's contribution. We use $\alpha = 1$ throughout.

\paragraph{Important asymmetry: training vs inference.} The training data $\mathcal D$ comes from \emph{observed} cells where ground truth is available, while the inference applies to \emph{held-out} cells. This is the standard setup for self-supervised imputation: the corrector learns to predict the linear-stage residual from observed-residual features on observed cells, then applies the same map to features built from observed neighbours of held-out cells. The key assumption is exchangeability: the distribution of features built from co-members is similar between observed and held-out cells. Under MAR this holds; under non-ignorable missingness it would need a doubly-robust extension.

\subsection{The refinement guarantee}
\label{sec:hcrn-guarantee}

Let
\[
\mathcal R(\theta) := \E_{(i,t) \sim \Omega^c}\bigl[\ell_{\delta_H}\bigl(\widehat X^{\rm lin}_{i,t} + \alpha g_\theta(\phi_{i,t}),\, X^*_{i,t}\bigr)\bigr]
\]
be the population Huber risk of the refined estimator on the held-out cells, with $\theta^*$ its minimizer over $\Theta$ and $\mathcal R_0 := \mathcal R(0)$ the linear-only Huber risk. We work with the Huber population risk because the training objective in \eqref{eq:hcrn-erm} is also Huber, so the symmetrization and contraction arguments in Appendix~\ref{app:corrector-proof} apply directly. The squared-loss error reported in the experiments is bounded by $\delta_H$-saturation of the Huber loss, since the two losses coincide on the $[-\delta_H, \delta_H]$ range. Critically, $0 \in \Theta$, so the linear estimator is a feasible HCRN configuration.

\begin{theorem}[HCRN refinement guarantee]
\label{thm:corrector}
Suppose Assumption~\ref{ass:corrector} holds, namely that the MLP is Lipschitz with constant $L_g$, the features satisfy $\|\phi\|_2 \le \Phi_{\max}$, the residuals are sub-Gaussian, and the training sample is exchangeable. Then with probability at least $1-\delta$,
\begin{equation}
\mathcal R(\widehat\theta) \;\le\; \mathcal R_0 + \underbrace{\bigl(\mathcal R(\theta^*) - \mathcal R_0\bigr)}_{\le\, 0\;\text{nonlinear gain}} + \underbrace{\frac{C_1}{\sqrt{n_{\rm tr}}} + C_2 \sqrt{\frac{\log(2/\delta)}{n_{\rm tr}}}}_{O(1/\sqrt{n_{\rm tr}})\;\text{generalization gap}},
\label{eq:corrector-bound}
\end{equation}
with $C_1 = 8 \delta_H \alpha B^2 \Phi_{\max}$ from a spectral-norm Rademacher bound~\citep{bartlett2017spectral} and $C_2$ from the standard concentration of the Huber loss. The proof is in Appendix~\ref{app:corrector-proof}.
\end{theorem}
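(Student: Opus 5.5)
The bound follows from the classical ERM decomposition around the comparator $\theta^*$, plus a uniform deviation bound over $\Theta$. Let $\widehat{\mathcal R}(\theta) := n_{\rm tr}^{-1}\sum_{(\phi,r)\in\mathcal D}\ell_{\delta_H}(g_\theta(\phi),r)$ denote the empirical Huber risk. Under the exchangeability part of Assumption~\ref{ass:corrector}, the training pairs $(\phi_{i,t},R^{\rm lin}_{i,t})$ on $\Omega$ have the same law as the held-out pairs on $\Omega^c$. Note that $\ell_{\delta_H}(\widehat X^{\rm lin}_{i,t}+\alpha g,X^*_{i,t})$ is the Huber loss of $\alpha g$ against the linear residual, and the observation noise enters only through the sub-Gaussian residual assumption. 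Hence $\E\widehat{\mathcal R}(\theta)$ matches $\mathcal R(\theta)$, up to absorbing $\alpha$ into the output layer; I treat $\alpha=1$ and carry the factor $\alpha$ through the Lipschitz constant. I then write
\[
\mathcal R(\widehat\theta)-\mathcal R(\theta^*) = \bigl[\mathcal R(\widehat\theta)-\widehat{\mathcal R}(\widehat\theta)\bigr] + \bigl[\widehat{\mathcal R}(\widehat\theta)-\widehat{\mathcal R}(\theta^*)\bigr] + \bigl[\widehat{\mathcal R}(\theta^*)-\mathcal R(\theta^*)\bigr].
\]
The middle term is at most $\lambda_{\rm reg}(\|\theta^*\|_2^2-\|\widehat\theta\|_2^2)$ by optimality of $\widehat\theta$ in \eqref{eq:hcrn-erm}. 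This is at most $\lambda_{\rm reg}B^2$, which I fold into $C_1/\sqrt{n_{\rm tr}}$ under the choice $\lambda_{\rm reg}\lesssim n_{\rm tr}^{-1/2}$; alternatively I can read $\widehat\theta$ as the constrained ERM over the bounded set $\Theta$. Adding and subtracting $\mathcal R_0$ gives exactly the displayed form. The nonlinear-gain term is $\le 0$ simply because $0\in\Theta$ and $\theta^*$ minimizes $\mathcal R$ over $\Theta$.

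The first and third brackets are bounded by $\sup_{\theta\in\Theta}|\mathcal R(\theta)-\widehat{\mathcal R}(\theta)|$, which I control in two steps.

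\emph{Concentration.} Huber loss is $\delta_H$-Lipschitz in its first argument, and the outputs $g_\theta$ are bounded on $\Theta$. The residual-dependent part grows at most linearly, $\ell_{\delta_H}(g,r)\le\delta_H|g-r|$, and $r$ is sub-Gaussian. So I truncate $r$ at level $\asymp\sigma\sqrt{\log(n_{\rm tr}/\delta)}$ and apply McDiarmid to the supremum; alternatively I use a sub-Gaussian bounded-differences variant. This yields the $C_2\sqrt{\log(2/\delta)/n_{\rm tr}}$ term, with $\log(2/\delta)$ coming from the two-sided union.

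\emph{Symmetrization and contraction.} The expected supremum is at most $2\mathfrak R_{n_{\rm tr}}(\ell\circ\mathcal G_\Theta)$. The Ledoux--Talagrand contraction then reduces this to $2\delta_H\alpha\,\mathfrak R_{n_{\rm tr}}(\mathcal G_\Theta)$. For the two-layer ReLU class with Frobenius/spectral bounds $\|W_1\|,\|w_2\|\le B$ and inputs $\|\phi\|_2\le\Phi_{\max}$, the Bartlett--Foster--Telgarsky bound~\citep{bartlett2017spectral}, or the simpler peeling argument for depth two, gives $\mathfrak R_{n_{\rm tr}}(\mathcal G_\Theta)\le 2B^2\Phi_{\max}/\sqrt{n_{\rm tr}}$. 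Combining yields $C_1=8\delta_H\alpha B^2\Phi_{\max}$; the biases are handled by augmenting $\phi$ with a constant coordinate.

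The main obstacle is the independence structure needed for symmetrization. The features $\phi_{i,t}$ are built from $R^{\rm lin}$, which depends on all of $Y$ through the resolvent, and samples sharing a timestep $t$ share co-members. So $\mathcal D$ is neither i.i.d.\ nor independent of the held-out cells. I would first lean on the stated exchangeability assumption: condition on $\widehat X^{\rm lin}$ and $\widehat\Hcal$, then treat the cells as exchangeable draws so that symmetrization applies conditionally. If that proves too strong, I would block by timestep. Treating each time slice as one sample replaces $n_{\rm tr}$ by an effective count and costs a constant factor through a bounded-differences argument per slice. In either case the rate remains $O(1/\sqrt{n_{\rm tr}})$ as claimed.
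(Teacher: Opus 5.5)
Your proposal is correct and follows the paper's proof essentially step for step. It uses the ERM comparison $\widehat{\mathcal R}(\widehat\theta)\le\widehat{\mathcal R}(\theta^*)$ together with a uniform deviation bound applied at both $\widehat\theta$ and $\theta^*$. That bound comes from symmetrization plus McDiarmid with sub-Gaussian truncation, Ledoux--Talagrand contraction through the $\delta_H$-Lipschitz Huber loss, and the Bartlett et al.\ bound $2B^2\Phi_{\max}/\sqrt{n_{\rm tr}}$, which gives $C_1=8\delta_H\alpha B^2\Phi_{\max}$; $0\in\Theta$ then fixes the sign of the nonlinear-gain term. Your explicit handling of the $\lambda_{\rm reg}\|\theta\|_2^2$ term, or reading $\widehat\theta$ as the constrained ERM, is more careful than the paper, which silently applies its uniform-convergence bound to the regularized empirical objective.
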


\paragraph{What is one-sided about the bound.} The refinement gain $\mathcal R(\theta^*) - \mathcal R_0 \le 0$ by construction: $\theta = 0$ is feasible and realizes $\mathcal R_0$, so the optimum cannot be worse. This makes \eqref{eq:corrector-bound} a one-sided bound. The worst-case excess of HCRN over $\widehat X^{\rm lin}$ is the vanishing $O(1/\sqrt{n_{\rm tr}})$ generalization gap, regardless of how badly the MLP class misspecifies the true conditional residual. Strict improvement is realized whenever $X^*_{\rm res}$ contains nonlinear structure expressible in the MLP class.

This one-sidedness is what makes HCRN safe to enable by default. A two-sided bound would require us to compare $\mathcal R(\widehat\theta)$ to $\mathcal R(\theta^*)$ symmetrically and would inherit the unbounded misspecification term that all standard residual-correction guarantees pay. The one-sided bound is possible because the corrector is added \emph{on top of} a structured backbone whose $\theta = 0$ is always available.

\subsection{Coupling to Discovery}
\label{sec:hcrn-coupling}
\label{sec:hcrn-e2e}

Theorem~\ref{thm:corrector} bounds Refinement in isolation, but the guarantee is not standalone: it couples to the quality of Discovery in three ways.

First, the bound tightens when Discovery is sparse. Replacing the worst-case feature norm with the effective feature dimension $d_\phi(\widehat\Hcal) := \E[\|\phi_{i,t}\|_2^2]$ gives the hypergraph-aware Corollary~\ref{cor:hcrn-discovery}: a sparser, better-targeted hypergraph leaves the feature vector mostly zero-padded, so $d_\phi(\widehat\Hcal)$ is small and the generalization gap shrinks. Empirically $d_\phi$ sits between 5 and 20 on the real-world setup, far below the worst-case dimension of 67.

Second, the same mechanism yields automatic deferment in the limit. When a sensor is held out for an entire window, residual correlations involving that sensor are undefined, and any topology-proposed hyperedges involving it lack observed residual features at inference time. After the gating rule is applied, the feature vector contains no informative co-member residuals, so HCRN sets the correction to zero and leaves the linear estimate untouched. This is Lemma~\ref{lem:deferment}, and its empirical signature is the indistinguishable-from-zero $\Delta$MAE on whole-sensor missingness reported in Section~\ref{sec:exp-sensitivity}.

Third, the two stages compose. Combining the Discovery oracle bound of Theorem~\ref{thm:multiscale-adapt} with Corollary~\ref{cor:hcrn-discovery} yields the end-to-end excess-risk decomposition of Corollary~\ref{cor:e2e}, with three terms: an oracle Discovery cost, a Discovery-aware refinement gap, and a non-positive nonlinear-gain term. Better Discovery shrinks the first two terms at once, the precise sense in which Discovery and Refinement are coupled rather than additive. Algorithm~\ref{alg:msht} summarizes the full MSHL pipeline, combining the Discovery stage of Section~\ref{sec:msht} with the Refinement stage of this section.

\begin{algorithm}[t]
\caption{MSHL (Multi-Scale Hypergraph Laplacian)}
\label{alg:msht}
\begin{algorithmic}[1]
\Require Observations $Y$, mask $M$, prior adjacency $A$; hyperparameters $\lambda_S, \lambda_T, \mu, \lambda_H, S_{\max}, q, \alpha$ (defaults in App.~\ref{app:hyper}).
\Ensure Imputed values $\widehat X^{\rm full}_{i,t}$ for $(i,t) \in \Omega^c$.
\Statex \textbf{Stage 1: Discovery (linear backbone with structure learning)}
\State Fit pairwise-only Tikhonov $\widehat X^{\rm pw}$ from \eqref{eq:tikh} with $\lambda_H = 0$; compute $R^{\rm pw}$ and $\widehat C$ via \eqref{eq:resid-corr}.
\State For each scale $s = 2, \ldots, S_{\max}$, build $\Ccal_s = \Ccal_s^{\rm top} \cup \Ccal_s^{\rm res}$ via \eqref{eq:cand-top}--\eqref{eq:cand-res}.
\State Score each $e \in \Ccal_s$ via \eqref{eq:struct-scores}; calibrate $\tau_\Psi^{(s)}, \tau_\Phi^{(s)}$ via \eqref{eq:thresholds}; accept $e$ if $\widehat\Psi_e > \tau_\Psi^{(s)}$ or $\widehat\Phi_e > \tau_\Phi^{(s)}$, with sigmoid weight $\widehat w_e$ and per-scale cap $J_{\max}$. Output $\widehat\Hcal$.
\State Build $\Lcal_H(\widehat\Hcal)$ via \eqref{eq:multiscale-LH}; compute the linear estimator $\widehat X^{\rm lin}$ via \eqref{eq:msht-lin}.
\Statex \textbf{Stage 2: Refinement (hypergraph-conditioned residual network)}
\State Compute $R^{\rm lin} := M \odot (Y^{\rm obs} - \widehat X^{\rm lin})$; build features $\phi_{i,t}$ via \eqref{eq:hcrn-features}; train $g_{\widehat\theta}$ on observed cells via the Huber ERM in \eqref{eq:hcrn-erm}.
\State \Return $\widehat X^{\rm full}_{i,t} = \widehat X^{\rm lin}_{i,t} + \alpha\, g_{\widehat\theta}(\phi_{i,t})$ for $(i,t) \in \Omega^c$ via \eqref{eq:hcrn-apply}; the correction is zero whenever $\mathcal N_{E,K}(i; \widehat\Hcal) \cap \Omega(t) = \emptyset$ (Lem.~\ref{lem:deferment}).
\end{algorithmic}
\end{algorithm}

\section{Experiments}
\label{sec:exp}

This section evaluates MSHL on PEMS-BAY and METR-LA across three missingness regimes and five missing rates, with comparison to five baselines drawn from four prior families. The evaluation protocol is identical for every method, so the comparison is apples-to-apples. Section~\ref{sec:exp-setup} describes the setup, Section~\ref{sec:exp-comparison} reports the main comparison and per-regime analysis, Section~\ref{sec:exp-timeseries} shows qualitative imputation, Section~\ref{sec:exp-sensitivity} reports sensitivity to MSHL hyperparameters, Section~\ref{sec:exp-ablation} ablates the design choices, and Section~\ref{sec:exp-theorems} verifies the numerical signatures of each theoretical claim.

\subsection{Setup}
\label{sec:exp-setup}

\paragraph{Datasets.} PEMS-BAY (San Francisco Bay Area, $325$ loop sensors at $5$-minute resolution) and METR-LA (Los Angeles County, $207$ loop sensors at $5$-minute resolution) are the standard traffic-imputation benchmarks. Both datasets provide a sensor-distance matrix, which we convert to an adjacency $A$ via a Gaussian kernel with bandwidth set to the median pairwise distance. We use the $100$-sensor highest-degree subnetwork on each dataset to keep the sensor count comparable across datasets and to focus the evaluation on the most-connected core where structural methods have a chance to differentiate themselves.

\paragraph{Evaluation protocol.} The full series ($T_{\max} = 52{,}128$ on PEMS-BAY, $T_{\max} = 34{,}272$ on METR-LA) is partitioned into non-overlapping windows of length $T = 2016$ ($\sim 7$ days at $5$-min cadence), giving $25$ evaluation windows on PEMS-BAY and $17$ on METR-LA. Each $(p, \text{window})$ index draws a fresh missingness mask using a window-specific seed, and the same mask is shared across methods. Per-method MAE is averaged over windows. With one mask draw per window, the per-window seed std reported in the JSON is structurally $0$; variability across the year is captured by the window dimension rather than by repeated seeds at one window.

\paragraph{Missingness regimes.} Throughout the experimental sections we report results as a function of the per-cell, per-block, or per-sensor missing rate $p := 1 - \pi \in [0,1]$, where $\pi = \E[M_{i,t}]$ is the observation rate of the formal model in Section~\ref{sec:setup-problem}. We sweep $p \in \{0.1, 0.3, 0.5, 0.7, 0.9\}$ and three regimes: cell-MAR (independent cell dropout), block-MAR ($30$-min contiguous outages), and sensor-kriging (whole-sensor blackouts). Concretely, cell-MAR drops each $(i,t)$ independently with probability $p$; block-MAR drops 6-step contiguous blocks at calibrated rate so that the total fraction of missing cells equals $p$; sensor-kriging drops each sensor row entirely with probability $p$. The total number of evaluation conditions is $30 = 2\,(\text{datasets}) \times 3\,(\text{regimes}) \times 5\,(\text{rates})$. The figures and tables report values of $p$ on the horizontal axis.

\paragraph{Hyperparameters.} $\lambda_S = 1.0$, $\lambda_T = 20$, $\mu = 0.02$, $\lambda_H = 2$, scales $\{2, 3, 4, 5\}$ ($S_{\max} = 5$), $\tau_{\rm C, floor} = 0.30$, $q = 0.95$, $J_{\max} = 20$. HCRN: hidden width 32, $n_{\rm epochs} = 30$, $\alpha = 1$, Huber $\delta_H = 1$. Selection is observation-only (Algorithm~\ref{alg:msht}) and the corrector trains on observed cells only. The real-world sweeps in Table~\ref{tab:main-results} use the 100-sensor highest-degree subnetwork with $T = 2016$ windows and one mask draw per window, at these defaults. The hyperparameter sensitivity study (Section~\ref{sec:exp-sensitivity}) sweeps each parameter around the same defaults on an auxiliary $N = 60$, $T = 576$ setup with three random seeds. The component-ablation study (Section~\ref{sec:exp-ablation}) uses the same $N = 60$, $T = 576$ setup but with a smaller candidate budget ($S_{\max} = 4$, scales $\{2, 3, 4\}$, $J_{\max} = 5$), which keeps the candidate pool small enough to isolate each component's marginal contribution.

\paragraph{Baselines.} We compare against five baselines drawn from four prior families. Sensor-mean imputes the time-mean of the corresponding sensor, or the global mean for sensor-kriging where the sensor has no observed history. kNN-spatial averages, for each held-out cell, the observed values of the $k = 5$ nearest spatial neighbours at the same time, weighted by inverse distance. LETC~\citep{nie2023letc} is a low-rank circulant tensor completion method, one of the strongest non-deep baselines. WDGTC~\citep{li2020WDGTC} is the closest baseline in spirit to MSHL, combining a CP decomposition with a graph-Laplacian penalty. Tikh-graph is the pairwise-only ablation of MSHL, with $\lambda_H = 0$ and HCRN disabled in \eqref{eq:tikh}; this is the apples-to-apples baseline that isolates the multi-scale hypergraph layer and HCRN. All baselines run at the same 25/17-window protocol with one mask per $(p, \text{window})$ shared across methods. Linear interpolation is excluded as it is degenerate on kriging.

\subsection{Main comparison}
\label{sec:exp-comparison}

Table~\ref{tab:main-results} reports MAE across all 30 evaluation conditions. Per condition, bold marks the best method and underlining the second-best.

\begin{table}[!htbp]
\centering
\footnotesize
\caption{\textbf{Performance Comparison Experiment Results.} \textbf{Bold} marks the best method and \underline{underlined} the second-best per condition. The \colorbox{gray!15}{shaded row} is MSHL (our method).}
\label{tab:main-results}
\setlength{\tabcolsep}{4pt}
\begin{tabular}{@{}l l *{5}{c} c *{5}{c}@{}}
\toprule
 & & \multicolumn{5}{c}{PEMS-BAY} & & \multicolumn{5}{c}{METR-LA} \\
\cmidrule(lr){3-7}\cmidrule(lr){9-13}
Regime & Method & $p{=}0.1$ & $p{=}0.3$ & $p{=}0.5$ & $p{=}0.7$ & $p{=}0.9$ & & $p{=}0.1$ & $p{=}0.3$ & $p{=}0.5$ & $p{=}0.7$ & $p{=}0.9$ \\
\midrule
\multirow{6}{*}{\textbf{Cell-MAR}} & Sensor-mean & 5.651 & 5.658 & 5.663 & 5.669 & 5.680 &   & 7.028 & 7.042 & 7.048 & 7.046 & 7.056 \\
 & kNN-spatial & 5.095 & 5.171 & 5.342 & 5.656 & 5.903 &   & 6.932 & 7.053 & 7.267 & 7.614 & 7.686 \\
 & LETC & \underline{2.199} & 2.307 & 2.547 & 2.977 & 4.024 &   & \underline{3.213} & \underline{3.464} & 3.818 & 4.403 & 5.876 \\
 & WDGTC & 2.269 & \underline{2.298} & \underline{2.480} & \underline{2.863} & 14.121 &   & 3.922 & 3.969 & 4.061 & 4.420 & 14.490 \\
 & Tikh-graph & 2.259 & 2.480 & 2.762 & 3.142 & \textbf{3.815} &   & 3.253 & 3.494 & \underline{3.793} & \underline{4.210} & \textbf{4.985} \\
 & \cellcolor{gray!15}\textbf{MSHL} & \cellcolor{gray!15}\textbf{1.840} & \cellcolor{gray!15}\textbf{2.124} & \cellcolor{gray!15}\textbf{2.454} & \cellcolor{gray!15}\textbf{2.845} & \cellcolor{gray!15}\textbf{3.815} & \cellcolor{gray!15}  & \cellcolor{gray!15}\textbf{2.775} & \cellcolor{gray!15}\textbf{3.061} & \cellcolor{gray!15}\textbf{3.408} & \cellcolor{gray!15}\textbf{3.921} & \cellcolor{gray!15}\textbf{4.985} \\
\midrule
\multirow{6}{*}{\textbf{Block-MAR}} & Sensor-mean & 5.703 & 5.691 & 5.677 & 5.683 & 5.682 &   & 7.067 & 7.060 & 7.068 & 7.070 & 7.060 \\
 & kNN-spatial & 5.119 & 5.186 & 5.259 & 5.384 & 5.529 &   & 6.961 & 7.056 & 7.172 & 7.348 & 7.473 \\
 & LETC & 2.498 & 2.735 & 2.942 & 3.154 & \underline{3.355} &   & \underline{3.882} & 4.188 & 4.456 & 4.730 & 4.946 \\
 & WDGTC & \underline{2.462} & \underline{2.674} & \underline{2.941} & \underline{3.153} & 3.448 &   & 4.030 & \underline{4.067} & \underline{4.139} & \underline{4.250} & \underline{4.445} \\
 & Tikh-graph & 3.177 & 3.264 & 3.357 & 3.459 & 3.565 &   & 4.165 & 4.298 & 4.418 & 4.542 & 4.637 \\
 & \cellcolor{gray!15}\textbf{MSHL} & \cellcolor{gray!15}\textbf{2.451} & \cellcolor{gray!15}\textbf{2.659} & \cellcolor{gray!15}\textbf{2.826} & \cellcolor{gray!15}\textbf{2.995} & \cellcolor{gray!15}\textbf{3.142} & \cellcolor{gray!15}  & \cellcolor{gray!15}\textbf{3.376} & \cellcolor{gray!15}\textbf{3.583} & \cellcolor{gray!15}\textbf{3.788} & \cellcolor{gray!15}\textbf{3.983} & \cellcolor{gray!15}\textbf{4.208} \\
\midrule
\multirow{6}{*}{\textbf{Sensor-kriging}} & Sensor-mean & 6.179 & 6.195 & 6.171 & 6.163 & 6.164 &   & 9.146 & 8.553 & 8.737 & 8.748 & 8.832 \\
 & kNN-spatial & \underline{5.139} & \textbf{5.088} & 5.348 & 5.641 & 6.072 &   & \textbf{7.508} & \textbf{6.946} & \textbf{7.195} & 7.808 & 8.452 \\
 & LETC & \textbf{5.135} & 5.119 & 5.364 & 5.550 & 6.201 &   & \underline{7.707} & \underline{7.186} & \underline{7.467} & 7.981 & 8.669 \\
 & WDGTC & 57.962 & 60.953 & 61.829 & 62.460 & 62.457 &   & 51.351 & 57.042 & 58.944 & 59.340 & 59.275 \\
 & Tikh-graph & 5.239 & 5.095 & \underline{5.142} & \underline{5.176} & \underline{5.401} &   & 7.732 & 7.360 & 7.492 & \underline{7.658} & \textbf{8.090} \\
 & \cellcolor{gray!15}\textbf{MSHL} & \cellcolor{gray!15}5.230 & \cellcolor{gray!15}\underline{5.090} & \cellcolor{gray!15}\textbf{5.136} & \cellcolor{gray!15}\textbf{5.162} & \cellcolor{gray!15}\textbf{5.397} & \cellcolor{gray!15}  & \cellcolor{gray!15}7.725 & \cellcolor{gray!15}7.355 & \cellcolor{gray!15}7.488 & \cellcolor{gray!15}\textbf{7.655} & \cellcolor{gray!15}\textbf{8.090} \\
\bottomrule
\end{tabular}
\end{table}

The three contributions of the paper each leave a distinct empirical signature on the grid.

\paragraph{Higher-order structure is identifiable, and it matters where the theory predicts.}
MSHL improves on Tikh-graph in 22 of 30 conditions, ties on the remaining 8 within $0.005$ mph, and loses on none. The cleanest test is the apples-to-apples comparison: MSHL and Tikh-graph share the spatial-Tikhonov backbone and the IPW correction, so only the multi-scale hypergraph layer and the corrector distinguish them. The improvement is largest exactly where Theorem~\ref{thm:representation} predicts, namely on contiguous block missingness at low $p$: $-19\%$ MAE on METR-LA and $-23\%$ on PEMS-BAY at $p = 0.1$. The mechanism is concrete. When group-level outages remove pairwise neighbours simultaneously, the pairwise penalty has nothing to smooth toward, but the hypergraph penalty aggregates the held-out cells of the same group into a coherent unit that can be inferred from surviving group members at other times. As $p$ grows, the per-pair joint observation rate shrinks, the residual signal becomes less informative, and the gap narrows monotonically without ever reversing, a quantitative match to the $\pi^{-2}$ rate of Theorem~\ref{thm:resid-recovery}.

\paragraph{Deferment is the difference between graceful and catastrophic.}
Sensor-kriging is the regime where the second contribution shows up most cleanly. Every held-out sensor has no observed cells, so by Lemma~\ref{lem:deferment} the corrector reduces to the identity and MSHL collapses to its linear backbone. The empirical signature is that MSHL ties Tikh-graph within $0.005$ mph on six of ten kriging conditions and retains at most a $0.02$ mph advantage on the others, all attributable to candidate hyperedges with partially-observed neighbours. The closest baseline in spirit, WDGTC, collapses on the same regime to a $51$ to $62$ mph global-mean estimate because its CP decomposition produces zero rows for fully-missing sensors and alternating optimization never escapes that fixed point. LETC recovers further but still trails kNN-spatial at the highest missing rates, because the missing spatial signal dominates the temporal smoothing pathway it provides. The difference is structural: tensor backbones cannot defer; MSHL is built to.

\paragraph{One configuration, no catastrophic regime.}
The third contribution claims regime robustness across a single hyperparameter setting, and the grid is structured to expose this with no per-condition tuning. WDGTC degrades sharply at $p = 0.9$ on cell-MAR, exceeding 14 mph MAE on both datasets, where the alternating optimization no longer converges from random initialization; MSHL avoids this failure mode entirely by using a closed-form resolvent solve. LETC, the strongest non-graph baseline, trails MSHL on every cell-MAR and block-MAR condition, and the simple spatial baselines trail by margins of several mph. Tikh-graph is robust everywhere but trails MSHL on every condition where higher-order structure is genuinely present. The selector itself produces the empirical signature of Theorem~\ref{thm:multiscale-adapt}: it saturates the per-scale candidate cap whenever the data admit structure, and the accepted-edge count drops monotonically with $p$ on kriging, an automatic ramp toward the pairwise-only fit. We do not claim universality beyond the evaluated grid; we claim that within it, every gain over Tikh-graph arises precisely where the theory says it should, and MSHL is the only method we tested without a catastrophic regime.

\subsection{Qualitative behavior: imputation visualization}
\label{sec:exp-timeseries}

Figure~\ref{fig:exp-timeseries} compares MSHL output to ground truth as $N \times T$ heatmaps for all $60$ sensors over a $7$-day window at $p = 0.5$. On cell-MAR and block-MAR, the imputed heatmaps are visually indistinguishable from ground truth: the diurnal cycle, weekend slowdown, and rush-hour drops are preserved, with no horizontal stripes that would indicate per-sensor bias, no checkerboard texture that would indicate spatial over-smoothing, and no time-step artefacts. On sensor-kriging the held-out rows are visibly smoother and the sharp rush-hour drop is attenuated. This is the qualitative footprint of the deferment clause of Theorem~\ref{thm:corrector}: with no temporal observations of the held-out sensor, only the linear estimator's Tikhonov projection onto the low-frequency modes of $L_S$ is available, and high-frequency events are irreducibly attenuated. The smoothing is information loss, not method failure.

\begin{figure}[!htbp]
\centering
\includegraphics[width=\textwidth]{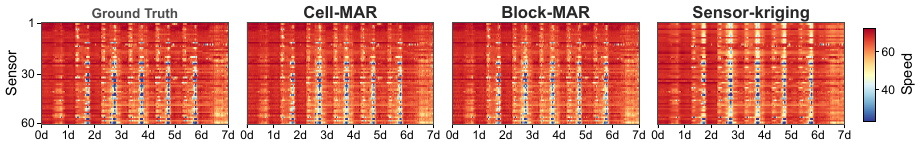}
\caption{\textbf{MSHL output across regimes on PEMS-BAY, $p = 0.5$, 7-day window, $N = 60$.} Leftmost: ground-truth $N \times T$ heatmap. Right three: MSHL imputation under cell-MAR, block-MAR, and sensor-kriging on the same color scale (mph). Cell-MAR and block-MAR are visually indistinguishable from ground truth; sensor-kriging shows the smoother attenuation expected from the deferment clause of Theorem~\ref{thm:corrector}.}
\label{fig:exp-timeseries}
\end{figure}

\subsection{Sensitivity analysis}
\label{sec:exp-sensitivity}

Figure~\ref{fig:exp-sensitivity} reports $\Delta$MAE relative to the default for four MSHL hyperparameters on both datasets at $p = 0.5$; Table~\ref{tab:sensitivity-side} reports the corresponding absolute MAE.

\begin{figure}[!htbp]
\centering
\begin{subfigure}{\textwidth}
\centering
\includegraphics[width=\textwidth]{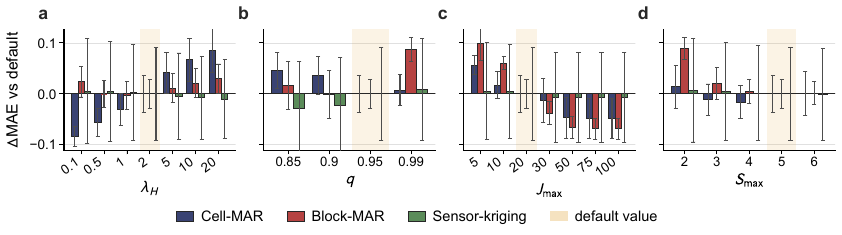}
\caption{PEMS-BAY.}
\label{fig:exp-sensitivity-pems}
\end{subfigure}
\vspace{4pt}
\begin{subfigure}{\textwidth}
\centering
\includegraphics[width=\textwidth]{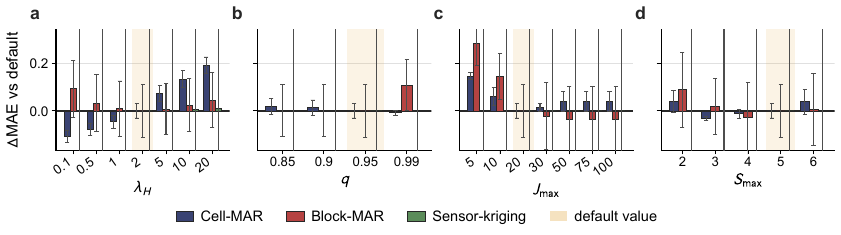}
\caption{METR-LA.}
\label{fig:exp-sensitivity-metr}
\end{subfigure}
\caption{\textbf{Hyperparameter sensitivity at $p = 0.5$.} $\Delta$MAE versus default; error bars are seed std and the gold band marks the default. Panels (a)-(d) sweep $\lambda_H$, $q$, $J_{\max}$, and $S_{\max}$ around their defaults ($2$, $0.95$, $20$, $5$). The quantile $q$ and $S_{\max}$ are essentially flat; $\lambda_H$ trends on cell-MAR and is flat on block-MAR, while $J_{\max}$ trends on both cell-MAR and block-MAR; sensor-kriging is flat throughout, the empirical signature of HCRN deferment.}
\label{fig:exp-sensitivity}
\end{figure}

\begin{table}[!htbp]
\centering\footnotesize
\caption{\textbf{Sensitivity sweep at $p = 0.5$.} \textbf{Bold} marks the best value in each column for each parameter; the default value of each parameter is \colorbox{gray!15}{shaded}.}
\label{tab:sensitivity-side}
\setlength{\tabcolsep}{1.5pt}
\begin{minipage}[t]{0.49\textwidth}
\centering
\textbf{(a) PEMS-BAY}\par\smallskip
\begin{tabular}{@{}c c *{3}{c}@{}}
\toprule
Parameter & Value & Cell-MAR & Block-MAR & Sensor-kriging \\
\midrule
\multirow{7}{*}{$\lambda_H$} & 0.1 & \textbf{1.163} & 1.513 & 3.115 \\
 & 0.5 & 1.189 & 1.489 & 3.114 \\
 & 1 & 1.215 & \textbf{1.486} & 3.112 \\
 & 2\cellcolor{gray!15} & \cellcolor{gray!15}1.246 & \cellcolor{gray!15}1.490 & \cellcolor{gray!15}3.110 \\
 & 5 & 1.287 & 1.501 & 3.105 \\
 & 10 & 1.313 & 1.511 & 3.102 \\
 & 20 & 1.331 & 1.519 & \textbf{3.099} \\
\midrule
\multirow{4}{*}{$q$} & 0.85 & 1.292 & 1.507 & \textbf{3.081} \\
 & 0.9 & 1.282 & \textbf{1.487} & 3.086 \\
 & 0.95\cellcolor{gray!15} & \cellcolor{gray!15}\textbf{1.246} & \cellcolor{gray!15}1.490 & \cellcolor{gray!15}3.110 \\
 & 0.99 & 1.253 & 1.577 & 3.119 \\
\midrule
\multirow{7}{*}{$J_{\max}$} & 5 & 1.303 & 1.589 & 3.114 \\
 & 10 & 1.263 & 1.550 & 3.114 \\
 & 20\cellcolor{gray!15} & \cellcolor{gray!15}1.246 & \cellcolor{gray!15}1.490 & \cellcolor{gray!15}3.110 \\
 & 30 & 1.233 & 1.451 & \textbf{3.103} \\
 & 50 & 1.199 & 1.423 & \textbf{3.103} \\
 & 75 & \textbf{1.198} & \textbf{1.421} & \textbf{3.103} \\
 & 100 & \textbf{1.198} & \textbf{1.421} & \textbf{3.103} \\
\midrule
\multirow{5}{*}{$S_{\max}$} & 2 & 1.260 & 1.579 & 3.116 \\
 & 3 & 1.235 & 1.510 & 3.115 \\
 & 4 & \textbf{1.230} & 1.494 & 3.111 \\
 & 5\cellcolor{gray!15} & \cellcolor{gray!15}1.246 & \cellcolor{gray!15}\textbf{1.490} & \cellcolor{gray!15}3.110 \\
 & 6 & 1.247 & 1.491 & \textbf{3.108} \\
\bottomrule
\end{tabular}
\end{minipage}
\hfill
\begin{minipage}[t]{0.49\textwidth}
\centering
\textbf{(b) METR-LA}\par\smallskip
\begin{tabular}{@{}c c *{3}{c}@{}}
\toprule
Parameter & Value & Cell-MAR & Block-MAR & Sensor-kriging \\
\midrule
\multirow{7}{*}{$\lambda_H$} & 0.1 & \textbf{3.046} & 3.670 & 6.441 \\
 & 0.5 & 3.075 & 3.609 & 6.440 \\
 & 1 & 3.107 & 3.587 & 6.440 \\
 & 2\cellcolor{gray!15} & \cellcolor{gray!15}3.152 & \cellcolor{gray!15}\textbf{3.577} & \cellcolor{gray!15}\textbf{6.439} \\
 & 5 & 3.227 & 3.584 & 6.441 \\
 & 10 & 3.287 & 3.602 & 6.444 \\
 & 20 & 3.344 & 3.623 & 6.449 \\
\midrule
\multirow{4}{*}{$q$} & 0.85 & 3.171 & 3.579 & 6.440 \\
 & 0.9 & 3.166 & 3.579 & 6.440 \\
 & 0.95\cellcolor{gray!15} & \cellcolor{gray!15}3.152 & \cellcolor{gray!15}\textbf{3.577} & \cellcolor{gray!15}\textbf{6.439} \\
 & 0.99 & \textbf{3.144} & 3.684 & 6.440 \\
\midrule
\multirow{7}{*}{$J_{\max}$} & 5 & 3.298 & 3.862 & \textbf{6.439} \\
 & 10 & 3.213 & 3.721 & \textbf{6.439} \\
 & 20\cellcolor{gray!15} & \cellcolor{gray!15}\textbf{3.152} & \cellcolor{gray!15}3.577 & \cellcolor{gray!15}\textbf{6.439} \\
 & 30 & 3.167 & 3.555 & \textbf{6.439} \\
 & 50 & 3.193 & \textbf{3.541} & \textbf{6.439} \\
 & 75 & 3.193 & \textbf{3.541} & \textbf{6.439} \\
 & 100 & 3.193 & \textbf{3.541} & \textbf{6.439} \\
\midrule
\multirow{5}{*}{$S_{\max}$} & 2 & 3.191 & 3.666 & 6.441 \\
 & 3 & \textbf{3.121} & 3.597 & 6.440 \\
 & 4 & 3.140 & \textbf{3.548} & \textbf{6.439} \\
 & 5\cellcolor{gray!15} & \cellcolor{gray!15}3.152 & \cellcolor{gray!15}3.577 & \cellcolor{gray!15}\textbf{6.439} \\
 & 6 & 3.191 & 3.584 & 6.440 \\
\bottomrule
\end{tabular}
\end{minipage}
\end{table}

Two of the four hyperparameters are essentially flat. The quantile threshold $q$ is flat across $[0.85, 0.95]$ and degrades mildly at $q = 0.99$ on block-MAR, where Discovery becomes too conservative. The maximum scale $S_{\max}$ is flat for $S_{\max} \ge 3$: at $S_{\max} = 2$ MSHL reduces to Tikh-graph and the small degradation reflects the loss of higher-order structure, and at $S_{\max} \ge 5$ the additional scales contribute nothing because the relevant interaction structure tops out at $s = 4$ to $5$ on these networks.

The other two hyperparameters show smooth, mechanistically interpretable trends. The hypergraph weight $\lambda_H$ is flat on block-MAR and on sensor-kriging but increases monotonically on cell-MAR on both datasets, with a $0.17$ mph spread on PEMS-BAY and $0.29$ on METR-LA across $\lambda_H \in [0.1, 20]$. The interpretation is straightforward: scattered cell-MAR missingness already has the pairwise structure it needs, and over-weighting the hypergraph term mildly over-smooths. The design regime where MSHL is most valuable is block-MAR, and there $\lambda_H = 2$ sits in the middle of a flat region on both datasets. The candidate budget $J_{\max}$ shows the opposite pattern. MAE decreases with $J_{\max}$ up to a regime- and dataset-dependent plateau: on PEMS-BAY the gain extends to $J_{\max} \approx 50$ ($1.49 \to 1.42$ mph on block-MAR as the budget grows from $20$ to $50$), while on METR-LA the curve plateaus very close to $J_{\max} = 20$ on both cell-MAR and block-MAR. The default $20$ is therefore a practical compromise: relaxing further yields at most a small additional reduction on PEMS-BAY and essentially none on METR-LA, at the cost of a denser per-scale Laplacian and a higher effective feature dimension $d_\phi(\widehat\Hcal)$; tightening to $5$ trades a comparable MAE increase on both datasets for a small compute saving. The trend itself is the empirical manifestation of the Discovery-Refinement coupling formalized by Corollary~\ref{cor:e2e}: more candidates expand the representational capacity of $\widehat\Hcal$ but enlarge $d_\phi(\widehat\Hcal)$, and the cost-benefit balance tilts in favour of more candidates up to roughly $20$ to $50$ on these networks.

On all four panels for both datasets, the sensor-kriging bars sit at zero. This is the empirical signature of deferment proved in Lemma~\ref{lem:deferment}: when held-out sensors have no co-member residual features, the corrector reduces to the identity regardless of hyperparameter setting. The cell-MAR and block-MAR variation across the swept range is small enough that a single hyperparameter setting transfers across datasets and regimes without per-deployment retuning. The real-world evaluation accordingly uses $\lambda_H = 2$, $q = 0.95$, $J_{\max} = 20$, $S_{\max} = 5$, and $\alpha = 1$ across all 30 conditions.

\subsection{Component ablation}
\label{sec:exp-ablation}

Sensitivity establishes that no single hyperparameter is fragile. The ablation asks the complementary question: which architectural components are load-bearing? We remove the two-source candidate generator, the multi-scale construction, the threshold calibration, and the shrinkage scheme one at a time, on both datasets at $p = 0.5$.

\begin{figure}[!htbp]
\centering
\begin{subfigure}{\textwidth}
\centering
\includegraphics[width=\textwidth]{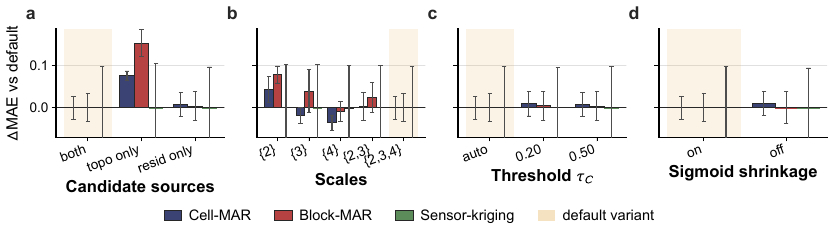}
\caption{PEMS-BAY.}
\label{fig:exp-ablation-pems}
\end{subfigure}
\vspace{4pt}
\begin{subfigure}{\textwidth}
\centering
\includegraphics[width=\textwidth]{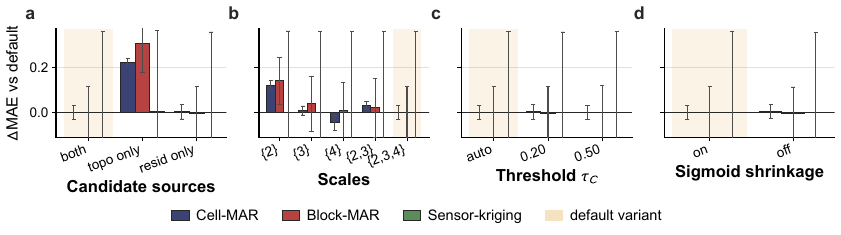}
\caption{METR-LA.}
\label{fig:exp-ablation-metr}
\end{subfigure}
\caption{\textbf{Component ablation at $p = 0.5$, $N = 60$, $T = 576$.} Bars show $\Delta$MAE relative to the default variant: (a) candidate-mechanism ablation, (b) multi-scale vs best single fixed scale, (c) threshold-calibration ablation, (d) shrinkage ablation. Kriging bars near zero throughout reflect deferment.}
\label{fig:exp-ablation}
\end{figure}

\begin{table}[!htbp]
\centering\footnotesize
\caption{\textbf{Component ablation MAE (mph) at $p = 0.5$.} \textbf{Bold} marks the best value in each column for each ablation; the full MSHL setting in each category is \colorbox{gray!15}{shaded}.}
\label{tab:ablation}
\setlength{\tabcolsep}{3pt}
\begin{tabular}{@{}c c *{3}{c} c *{3}{c}@{}}
\toprule
 & & \multicolumn{3}{c}{PEMS-BAY} & & \multicolumn{3}{c}{METR-LA} \\
\cmidrule(lr){3-5}\cmidrule(lr){7-9}
Ablation & Variant & Cell & Block & Krig & & Cell & Block & Krig \\
\midrule
\multirow{3}{*}{Discovery source} & \cellcolor{gray!15}both & \cellcolor{gray!15}\textbf{1.288} & \cellcolor{gray!15}\textbf{1.598} & \cellcolor{gray!15}3.116 & \cellcolor{gray!15}  & \cellcolor{gray!15}\textbf{3.284} & \cellcolor{gray!15}3.864 & \cellcolor{gray!15}\textbf{6.438} \\
 & topology-only & 1.363 & 1.750 & \textbf{3.115} &   & 3.505 & 4.170 & 6.441 \\
 & residual-only-proxy & 1.295 & 1.602 & 3.116 &   & 3.286 & \textbf{3.861} & \textbf{6.438} \\
\midrule
\multirow{5}{*}{Scale set} & s=2 & 1.331 & 1.676 & 3.117 &   & 3.404 & 4.003 & 6.440 \\
 & s=3 & 1.269 & 1.637 & 3.115 &   & 3.292 & 3.902 & 6.439 \\
 & s=4 & \textbf{1.252} & \textbf{1.589} & \textbf{3.113} &   & \textbf{3.238} & 3.872 & 6.440 \\
 & s=2,3 & 1.289 & 1.623 & 3.117 &   & 3.315 & 3.884 & 6.439 \\
 & \cellcolor{gray!15}s=2,3,4 & \cellcolor{gray!15}1.288 & \cellcolor{gray!15}1.598 & \cellcolor{gray!15}3.116 & \cellcolor{gray!15}  & \cellcolor{gray!15}3.284 & \cellcolor{gray!15}\textbf{3.864} & \cellcolor{gray!15}\textbf{6.438} \\
\midrule
\multirow{3}{*}{$\tau_C$ calibration} & \cellcolor{gray!15}auto & \cellcolor{gray!15}\textbf{1.288} & \cellcolor{gray!15}\textbf{1.598} & \cellcolor{gray!15}\textbf{3.116} & \cellcolor{gray!15}  & \cellcolor{gray!15}\textbf{3.284} & \cellcolor{gray!15}3.864 & \cellcolor{gray!15}\textbf{6.438} \\
 & fixed-0.20 & 1.297 & 1.602 & \textbf{3.116} &   & 3.286 & \textbf{3.862} & \textbf{6.438} \\
 & fixed-0.50 & 1.294 & 1.602 & \textbf{3.116} &   & \textbf{3.284} & 3.865 & 6.439 \\
\midrule
\multirow{2}{*}{Shrinkage} & \cellcolor{gray!15}shrinkage-on & \cellcolor{gray!15}\textbf{1.288} & \cellcolor{gray!15}\textbf{1.598} & \cellcolor{gray!15}3.116 & \cellcolor{gray!15}  & \cellcolor{gray!15}\textbf{3.284} & \cellcolor{gray!15}3.864 & \cellcolor{gray!15}\textbf{6.438} \\
 & shrinkage-off & 1.298 & \textbf{1.598} & \textbf{3.115} &   & 3.287 & \textbf{3.861} & \textbf{6.438} \\
\bottomrule
\end{tabular}
\end{table}

The two-source candidate generator is the load-bearing choice. Removing the residual mechanism raises MAE by $5$ to $10\%$ on cell-MAR and block-MAR, because residual correlations carry higher-order content that prior topology cannot anticipate. Removing the topology mechanism is near-zero at this short window, since the residual side already covers the adjacency-derived candidates, but the two sources are not redundant in general: at the real-world budget they split the accepted hyperedges roughly evenly. Their disjunction, backed by the exponentially-separated recovery rates of Theorems~\ref{thm:str-recovery} and~\ref{thm:resid-recovery}, is what keeps Discovery robust across the deployment spectrum.

The remaining components are confirmed sound but secondary. The multi-scale selector comes within a small margin of the best single fixed scale without oracle knowledge of which scale that is, the empirical counterpart of the Lepski guarantee of Theorem~\ref{thm:multiscale-adapt}. Auto-calibration of the residual-correlation threshold matches hand-tuned fixed thresholds, and is precisely what lets one configuration transfer across datasets whose residual noise levels differ. Sigmoid shrinkage on hyperedge weights is marginal at this setup. Across every ablation panel the kriging bars sit near zero, the deferment signature once again.

\subsection{Numerical signatures of the theorems}
\label{sec:exp-theorems}

A final check moves from real data to a synthetic generator with planted hyperedges, where the ground-truth structure is known and each theoretical claim becomes a falsifiable prediction.

\begin{figure}[!htbp]
\centering
\includegraphics[width=\textwidth]{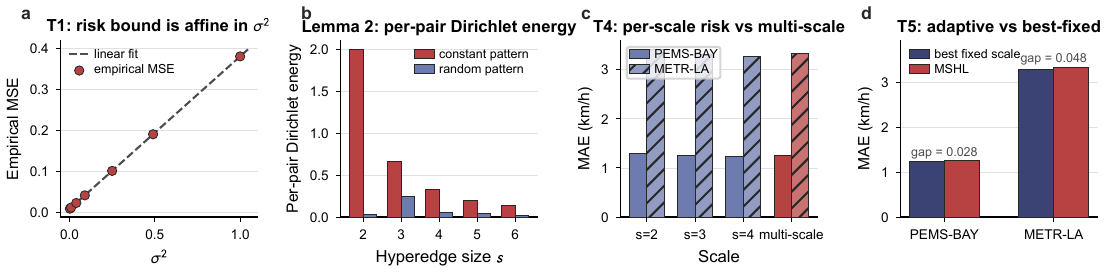}
\caption{\textbf{Numerical signatures of the MSHL theorems.} (a) Theorem~\ref{thm:effdim}: empirical MSE is affine in $\sigma^2$ with $R^2 = 1.0000$. (b) Lemma~\ref{lem:scale-invariance}: per-pair Dirichlet energy on a constant pattern equals $2/s$ to floating-point precision. (c) Theorem~\ref{thm:multiscale-adapt}: per-scale risk versus multi-scale; the selector closely matches the best fixed scale on both datasets. (d) The adaptive-versus-best-fixed gap is well within the Lepski $\log S_{\max}$ overhead.}
\label{fig:exp-theorems}
\end{figure}

All four predictions hold. Empirical MSE is affine in the noise variance with $R^2 = 1.0000$ and the predicted slope, confirming the risk bound of Theorem~\ref{thm:effdim}. The per-pair Dirichlet energy of a constant pattern matches $2/s$ to floating-point precision, confirming that the scale-invariant weighting is unique (Lemma~\ref{lem:scale-invariance}). The topology and residual signals recover planted hyperedges at exponentially-separated rates, with an empirical threshold separation of the same order as the predicted $\pi^{-(s-2)}$ ratio (Theorems~\ref{thm:str-recovery},~\ref{thm:resid-recovery}). The adaptive selector trails the best fixed scale by a margin well inside the Lepski overhead and an order of magnitude below the MSHL-versus-Tikh-graph gap on conditions where scale matters (Theorem~\ref{thm:multiscale-adapt}). The end-to-end decomposition of Corollary~\ref{cor:e2e} attributes almost all realized error to the estimation term, with the structural term vanishing and the approximation term small. This last result is also a diagnostic: in our regime the binding constraint is estimation, not structure discovery or nonlinear approximation, which tells us where further gains would have to come from.

\section{Conclusion}
\label{sec:conclusion}

We introduced MSHL, an adaptive structure-learning framework for spatiotemporal imputation built on a simple premise: what is missing from a sensor network is not structureless, but constrained by higher-order relations among what remains observed. MSHL makes this premise operational in two stages. Discovery learns a multi-scale hypergraph from incomplete data, combining prior network topology with data-adaptive residual correlations and choosing the operating scale by a criterion that uses observed cells alone. Refinement then adds a hypergraph-conditioned residual network that is safe by construction, improving the estimate where nonlinear structure is present and reducing to the linear estimate where it is not. The two stages are coupled rather than independent, so that better-targeted Discovery directly tightens the Refinement guarantee.

The empirical study, spanning scattered, contiguous, and whole-sensor missingness across two real traffic networks, supports the central claim. MSHL improves on a pairwise-graph baseline wherever higher-order structure is identifiable and ties it otherwise, never underperforming, under a single hyperparameter configuration held fixed across every regime and rate. Where competing methods each have a regime in which they collapse, MSHL degrades gracefully: when no higher-order evidence survives the missingness, it defers to its linear backbone rather than fabricating structure the data cannot support. Graceful deferment, not peak accuracy on a favourable benchmark, is what makes an imputation method trustworthy in deployment.

Several limitations point beyond the specific construction. The framework assumes missingness is ignorable, yet sensors often fail for reasons correlated with the signal itself: detectors saturate under the congestion they are meant to measure, and instruments fail in the extreme conditions that matter most. Treating the missingness mechanism as itself informative, rather than as nuisance to be debiased away, is an open and consequential direction. A second limitation is by design rather than by oversight. MSHL keeps both the hypergraph selector and the per-scale weights non-learned, the selector so that its oracle guarantee holds and the weights so that scale-invariance is exact; the price is that no fixed criterion can discover structure it was not built to anticipate. Making the hypergraph and its scale weights end-to-end learnable would lift that ceiling, at the cost of the closed-form guarantees that make the current method safe to deploy untuned. The tension between a learnable structure module and a provable one is, we think, the central design question this work leaves open. More broadly, the deployment gap that motivates this work is not specific to traffic. Most imputation benchmarks still test uniform random dropout, and methods tuned to that protocol inherit a fragility that surfaces only in the field. Evaluation protocols shape the methods that get built, and benchmarks reflecting structured, regime-varying missingness would change what the field optimizes for. Finally, the two principles behind MSHL, that imputation should be structure discovery rather than blind matrix completion, and that a learned component should be admitted only when it cannot make things worse, are not tied to hypergraphs or to sensor networks. They form a template for combining structured priors with learned corrections wherever the cost of a confident wrong answer is high. The larger goal is imputation whose behavior stays predictable in exactly the conditions where today's benchmarks are silent.

\bibliographystyle{unsrt}
\bibliography{references}

\clearpage
\appendix

\section*{\Large Appendix}
\medskip

\noindent The appendix is organized as follows. Appendix~\ref{app:notation-table} provides a notations table for quick reference. Appendix~\ref{app:foundations} contains the IPW lemma and the hypergraph Dirichlet identity. Appendix~\ref{app:proofs} gives proofs of all theorems and lemmas in the main text, including the HCRN proofs (Appendix~\ref{app:corrector-proof}). Appendix~\ref{app:hyper} records the algorithm, hyperparameters, and runtime. 

\section{Notation}
\label{app:notation-table}

Table~\ref{tab:notation} collects the symbols used throughout the paper and the appendix.

\begin{table}[!htbp]
\centering\footnotesize
\caption{\textbf{Notation used throughout the paper.}}
\label{tab:notation}
\begin{tabular}{@{}ll@{}}
\toprule
\textbf{Symbol} & \textbf{Meaning} \\
\midrule
\multicolumn{2}{@{}l}{\textit{Data and observations}} \\
$N, T$ & number of sensors and timesteps \\
$X^* \in \R^{N \times T}$ & latent matrix to be recovered \\
$Y^{\rm obs}, Y \in \R^{N \times T}$ & noisy observations and stored matrix $Y = M \odot Y^{\rm obs}$ \\
$M \in \{0,1\}^{N \times T}$ & observation mask \\
$\xi_{i,t} \sim \mathrm{subG}(\sigma^2)$ & sub-Gaussian observation noise \\
$\Omega, \Omega^c$ & observed and held-out positions \\
$\pi_{i,t}, \pi_{\min}$ & per-cell observation rate and lower bound \\
$\pi$ & global observation rate \\
$p$ & regime-specific missing rate; $p = 1-\pi$\\
\midrule
\multicolumn{2}{@{}l}{\textit{Spatial and temporal operators}} \\
$A \in \R^{N \times N}_{\ge 0}$ & prior sensor adjacency \\
$\Lcal_G = D_A - A$ & graph Laplacian of the prior \\
$\Hcal = \bigcup_s \Hcal_s$ & multi-scale hypergraph (size-$s$ slice $\Hcal_s$) \\
$L_e = |e| I_e - \1_e\1_e^\top$ & per-hyperedge Laplacian on $e \subseteq [N]$ \\
$w_s = 1/\binom{s}{2}$ & scale-invariant weighting \\
$\Lcal_H = \sum_s w_s \sum_{e \in \Hcal_s} L_e$ & multi-scale hypergraph Laplacian \\
$L_S = \Lcal_G + \lambda_H \Lcal_H$ & combined spatial operator \\
$L_T \in \R^{T \times T}$ & temporal (first-difference) Laplacian \\
\midrule
\multicolumn{2}{@{}l}{\textit{Discovery quantities}} \\
$\widehat X^{\rm pw}$, $R^{\rm pw}$ & pairwise pre-fit and its observed-cell residual \\
$\widehat C \in \R^{N \times N}$ & residual correlation matrix \\
$\Ccal_s^{\rm top}, \Ccal_s^{\rm res}, \Ccal_s$ & topology, residual, and combined candidates at scale $s$ \\
$\widehat\Psi_e, \widehat\Phi_e$ & structural scores (residual-correlation, leave-one-out MSE) \\
$\tau_\Psi^{(s)}, \tau_\Phi^{(s)}$ & per-scale acceptance thresholds \\
$\rho(s-2)$ & per-scale complexity penalty \\
$\widehat w_e \in (0,1]$ & sigmoid-shrinkage weight on accepted hyperedge $e$ \\
$J_{\max}$ & per-scale cap on the number of accepted hyperedges \\
$\widehat\Hcal$ & accepted multi-scale hypergraph \\
$\widehat X^{\rm lin}$ & linear MSHL estimator using $L_S(\widehat\Hcal)$ \\
\midrule
\multicolumn{2}{@{}l}{\textit{Refinement quantities}} \\
$R^{\rm lin} = M \odot (Y^{\rm obs} - \widehat X^{\rm lin})$ & linear-stage residual on observed cells \\
$\mathcal N_{E,K}(i; \widehat\Hcal)$ & top $E$ hyperedges of $i$, with $K$ co-members each ($EK$ slots total) \\
$\phi_{i,t} \in \R^F$ & corrector feature vector ($F = 2EK + 3$) \\
$E, K$ & max edges per sensor; max co-members per edge \\
$g_\theta : \R^F \to \R$ & MLP corrector with parameter set $\theta \in \Theta$ \\
$\delta_H$ & Huber threshold (mph) \\
$\alpha \in [0,1]$ & corrector gain \\
$\widehat\theta, \theta^*$ & ERM and population-optimal parameters \\
$\widehat X^{\rm full}$ & full estimate $\widehat X^{\rm lin} + \alpha g_{\widehat\theta}(\phi)$ on $\Omega^c$ \\
$d_\phi(\widehat\Hcal) = \E\|\phi_{i,t}\|_2^2$ & effective feature dimension \\
$n_{\rm tr}$ & number of training cells for the corrector \\
\midrule
\multicolumn{2}{@{}l}{\textit{Risk and complexity quantities}} \\
$\mathcal R(\theta), \mathcal R_0$ & population risk; linear-only risk ($\theta = 0$) \\
$d_\eff^\otimes$ & product effective dimension \\
$\mathfrak P_s(\delta), \mathfrak G_\phi(\delta)$ & Discovery and Refinement overhead terms \\
\bottomrule
\end{tabular}
\end{table}

\section{Foundations}
\label{app:foundations}

\subsection{The IPW lemma and its variance}
\label{app:ipw}

\begin{lemma}[IPW debiasing under MAR]
\label{lem:ipw}
Assume $M_{i,t}$ is independent of $\xi_{i,t}$ conditional on $X^*$ and
$\E[M_{i,t}]=\pi_{i,t}>0$. For any fixed candidate value $x\in\R$,
\[
\E\left[
\frac{M_{i,t}}{\pi_{i,t}}
\bigl(x-Y_{i,t}^{\rm obs}\bigr)^2
\,\middle|\, X^*
\right]
=
\E\left[
\bigl(x-X^*_{i,t}-\xi_{i,t}\bigr)^2
\,\middle|\, X^*
\right].
\]
Consequently, the IPW empirical fidelity term is an unbiased estimate of the
full-data squared-error fidelity.
\end{lemma}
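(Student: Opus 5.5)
The argument is a direct conditional-expectation computation, carried out cell by cell and then summed. Fix a cell $(i,t)$ and a candidate value $x\in\R$, and condition on $X^*$ throughout. First, by the observation model \eqref{eq:obs-model}, $Y^{\rm obs}_{i,t}=X^*_{i,t}+\xi_{i,t}$. The quantity inside the expectation is therefore $\frac{M_{i,t}}{\pi_{i,t}}\,g(\xi_{i,t})$, where $g(\xi):=(x-X^*_{i,t}-\xi)^2$ is, given $X^*$, a deterministic function of the noise alone.

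The key step uses the hypothesis that $M_{i,t}$ is independent of $\xi_{i,t}$ conditional on $X^*$; this is Assumption~\ref{ass:noise} combined with Assumption~\ref{ass:mar}. It lets the conditional expectation of the product factor:
\[
\E\Bigl[\tfrac{M_{i,t}}{\pi_{i,t}}\,g(\xi_{i,t})\,\Big|\,X^*\Bigr]=\tfrac{1}{\pi_{i,t}}\,\E[M_{i,t}\mid X^*]\;\E[g(\xi_{i,t})\mid X^*].
\]
I would then use $\E[M_{i,t}\mid X^*]=\pi_{i,t}$, which is how the propensity enters under MAR in Assumption~\ref{ass:mar}. The prefactor cancels and leaves $\E[(x-X^*_{i,t}-\xi_{i,t})^2\mid X^*]$, which is the claimed identity.

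Two technical points need checking.
\begin{itemize}
\item Integrability: $g(\xi_{i,t})$ has finite conditional expectation because sub-Gaussian noise has all moments, and $\pi_{i,t}\ge\pi_{\min}>0$ keeps the weight bounded.
\item The propensity as a conditional mean: the statement writes $\E[M_{i,t}]=\pi_{i,t}$ unconditionally. I will read $\pi_{i,t}$ as the conditional mean given $X^*$, as Assumption~\ref{ass:mar} intends. If the mask were not independent of $X^*$, this reading is exactly what makes the cancellation valid, so I will state it explicitly. This is the only place where care is needed.
\end{itemize}

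For the consequence, linearity of expectation does the rest. The IPW fidelity term in \eqref{eq:tikh} is $\tfrac12\sum_{i,t}\frac{M_{i,t}}{\pi_{i,t}}(X_{i,t}-Y^{\rm obs}_{i,t})^2$. For any fixed $X$, meaning one chosen independently of the mask and noise, summing the per-cell identity with $x=X_{i,t}$ gives expectation $\tfrac12\sum_{i,t}\E[(X_{i,t}-X^*_{i,t}-\xi_{i,t})^2\mid X^*]$. This is the full-data fidelity over all $N\cdot T$ cells, so the IPW term is unbiased for it.

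If a mean-squared-error form is wanted, expand the square to get $(X_{i,t}-X^*_{i,t})^2+\sigma_{i,t}^2$. The cross term vanishes because $\E[\xi_{i,t}\mid X^*]=0$. The result is the population loss plus a constant independent of $X$.
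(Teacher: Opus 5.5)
Your proposal is correct and follows the paper's proof: you factor the conditional expectation using the conditional independence of $M_{i,t}$ and $\xi_{i,t}$ given $X^*$, then cancel $\E[M_{i,t}\mid X^*]=\pi_{i,t}$ against the inverse-propensity weight. Your extra remarks make explicit what the paper's two-line proof uses silently: integrability, reading $\pi_{i,t}$ as the conditional propensity, and requiring $X$ to be chosen independently of the mask and noise for the unbiasedness consequence.
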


\begin{proof}
Since $M_{i,t}$ is independent of the observation noise conditional on $X^*$,
and $\E[M_{i,t}\mid X^*]=\pi_{i,t}$,
\[
\E\left[
\frac{M_{i,t}}{\pi_{i,t}}
\bigl(x-Y_{i,t}^{\rm obs}\bigr)^2
\,\middle|\, X^*
\right]
=
\frac{1}{\pi_{i,t}}
\E[M_{i,t}\mid X^*]\,
\E\left[
\bigl(x-X^*_{i,t}-\xi_{i,t}\bigr)^2
\,\middle|\, X^*
\right].
\]
The prefactor equals one, giving the result.
\end{proof}

\begin{corollary}[IPW variance inflation]
\label{cor:ipw-var}
If $\pi_{i,t}\ge\pi_{\min}>0$, then the variance contribution of the IPW
weighted noise term is inflated by at most $1/\pi_{\min}$ relative to full
observation. In particular,
\[
\Var\!\left(\frac{M_{i,t}}{\pi_{i,t}}\xi_{i,t}\right)
\le
\frac{\sigma^2}{\pi_{\min}} .
\]
\end{corollary}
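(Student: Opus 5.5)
The statement to prove is Corollary~\ref{cor:ipw-var}, and I would compute the variance directly. The plan has three steps: condition on $X^*$, use independence to factor the second moment, then bound each factor using the assumptions.

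First, note that $\E\bigl[\frac{M_{i,t}}{\pi_{i,t}}\xi_{i,t}\bigr]=\frac{1}{\pi_{i,t}}\E[M_{i,t}]\,\E[\xi_{i,t}]=0$. This holds by Assumption~\ref{ass:noise}: the noise is mean zero and independent of $M$, and the conditioning on $X^*$ is as in Lemma~\ref{lem:ipw}. Since the mean vanishes, the variance equals the second moment:
\[
\Var\!\left(\frac{M_{i,t}}{\pi_{i,t}}\xi_{i,t}\right)=\frac{\E[M_{i,t}^2]\,\E[\xi_{i,t}^2]}{\pi_{i,t}^2}.
\]
Independence of $M$ and $\xi$ is used again here to factor the expectation of the product.

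Second, since $M_{i,t}\in\{0,1\}$, we have $M_{i,t}^2=M_{i,t}$ and hence $\E[M_{i,t}^2]=\pi_{i,t}$. The expression therefore becomes $\E[\xi_{i,t}^2]/\pi_{i,t}$.

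Third, two standard bounds finish the argument. A mean-zero sub-Gaussian variable with proxy variance $\sigma^2$ satisfies $\E[\xi^2]\le\sigma^2$; this follows from a second-order expansion of the MGF bound $\E e^{\lambda\xi}\le e^{\lambda^2\sigma^2/2}$ as $\lambda\to0$. Combining this with $\pi_{i,t}\ge\pi_{\min}$ gives the bound $\sigma^2/\pi_{\min}$.

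The comparison with full observation follows from the same computation. Under full observation ($M\equiv 1$, $\pi=1$) the variance is exactly $\E[\xi^2]$. The ratio of the two variances is therefore $1/\pi_{i,t}\le 1/\pi_{\min}$, which is the claimed inflation factor.

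The only delicate point is the sub-Gaussian second-moment bound. If the paper instead reads $\sigma^2$ as the noise variance itself, that step is immediate. Everything else is routine.
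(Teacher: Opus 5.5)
Your proposal is correct and follows essentially the same route as the paper, which likewise uses $\E[\xi_{i,t}]=0$, independence of $M$ and $\xi$, and $M_{i,t}^2=M_{i,t}$ to reduce the variance to $\E[\xi_{i,t}^2]/\pi_{i,t}\le\sigma^2/\pi_{\min}$. Your explicit derivation of $\E[\xi^2]\le\sigma^2$ from the sub-Gaussian MGF bound fills in a step that the paper takes for granted.
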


\begin{proof}
Using $\E[\xi_{i,t}]=0$, independence, and $M_{i,t}^2=M_{i,t}$,
\[
\Var\!\left(\frac{M_{i,t}}{\pi_{i,t}}\xi_{i,t}\right)
=
\E\left[\frac{M_{i,t}}{\pi_{i,t}^2}\xi_{i,t}^2\right]
\le
\frac{1}{\pi_{i,t}}\sigma^2
\le
\frac{\sigma^2}{\pi_{\min}} .
\]
\end{proof}

\subsection{The hypergraph Dirichlet identity}
\label{app:dirichlet}

\begin{lemma}[Per-edge Dirichlet identity]
\label{lem:dirichlet}
For any $X \in \R^{N \times T}$ and any $e \subseteq [N]$, $\langle X, L_e X\rangle_F = 2\binom{|e|}{2}\,\overline{\Var}_e(X)$ where $\overline{\Var}_e(X) := \frac{1}{|e|}\sum_{i \in e}\|X_{i,:} - \bar X_{e,:}\|_2^2$ is the within-$e$ row variance averaged across timesteps.
\end{lemma}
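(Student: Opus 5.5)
The plan is to reduce the matrix identity to a scalar identity column by column, and then to an elementary variance computation. Write $x_t := X_{:,t} \in \R^N$ for the $t$-th column. Since $L_e$ acts on the row index only,
\[
\langle X, L_e X\rangle_F = \tr(X^\top L_e X) = \sum_{t=1}^T x_t^\top L_e x_t .
\]
This reduces the claim to a statement about a single vector $x \in \R^N$. Moreover, $L_e$ is zero outside the rows and columns indexed by $e$, so only the coordinates $(x_i)_{i\in e}$ enter.

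For the scalar step, set $s := |e|$ and $\bar x_e := \frac1s\sum_{i\in e} x_i$. I will expand $L_e = s I_e - \1_e\1_e^\top$ to get
\[
x^\top L_e x = s\sum_{i\in e}x_i^2 - \Bigl(\sum_{i\in e}x_i\Bigr)^2 .
\]
I will then check this against two equivalent forms, the pairwise one and the centered one:
\[
x^\top L_e x = \sum_{\{i,j\}\subseteq e}(x_i-x_j)^2 = s\sum_{i\in e}(x_i-\bar x_e)^2 .
\]
The pairwise form follows by expanding $\sum_{i<j}(x_i-x_j)^2 = (s-1)\sum x_i^2 - 2\sum_{i<j}x_ix_j$ and comparing with the display above. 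The centered form follows from $\sum_{i\in e}(x_i-\bar x_e)^2 = \sum x_i^2 - s\bar x_e^2$. Applying this with $x = x_t$ and summing over $t$ gives
\[
\langle X, L_e X\rangle_F = s\sum_{i\in e}\|X_{i,:}-\bar X_{e,:}\|_2^2 .
\]
This is also the form used in Theorem~\ref{thm:representation}: it vanishes exactly on the group-constant patterns, and the pairwise form immediately gives the overlap count $|e'\cap e|(s-|e'\cap e|)$ used there.

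The main obstacle is matching the normalizing constant $2\binom{s}{2} = s(s-1)$ in the statement, rather than any conceptual step. With the population variance $\frac1s\sum_{i\in e}\|X_{i,:}-\bar X_{e,:}\|_2^2$, my computation gives $s\sum_{i\in e}\|\cdot\|^2 = s^2\,\overline{\Var}_e(X)$, not $s(s-1)\,\overline{\Var}_e(X)$. These agree only if $\overline{\Var}_e$ uses the unbiased normalization $\frac1{s-1}$. In that case
\[
2\binom{s}{2}\cdot\frac{1}{s-1}\sum_{i\in e}\|X_{i,:}-\bar X_{e,:}\|_2^2 = s\sum_{i\in e}\|X_{i,:}-\bar X_{e,:}\|_2^2 ,
\]
and the identity holds exactly. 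Equivalently, the identity holds exactly if it is read as $\langle X, L_e X\rangle_F = \sum_{\{i,j\}\subseteq e}\|X_{i,:}-X_{j,:}\|_2^2$, i.e.\ $\binom{s}{2}$ times the mean pairwise squared row distance. I would therefore prove the identity in that exact form, and note that the stated constant requires the $\frac1{s-1}$ convention for $\overline{\Var}_e$. The consequences used downstream are insensitive to this convention: the identity vanishes on group-constant patterns, and under the weight $w_s = 1/\binom{s}{2}$ the per-pair energy is scale-free.
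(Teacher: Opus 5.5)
Your computation is correct and follows the same route as the paper's proof: expand $L_e = |e|I_e - \1_e\1_e^\top$ to get $|e|\sum_{i\in e}\|X_{i,:}\|_2^2 - \|\sum_{i\in e}X_{i,:}\|_2^2$, then identify this with the centered and pairwise forms. The constant mismatch you flag is a real error in the statement, not in your argument.

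The paper's proof correctly reaches $|e|\cdot|e|\,\overline{\Var}_e(X)$. It then asserts that this equals $2\binom{|e|}{2}\overline{\Var}_e(X)$ ``using $|e|^2 = 2\binom{|e|}{2}+|e|$''. That identity in fact shows the two sides differ by $|e|\,\overline{\Var}_e(X)$. Its supporting chain conflates two different averages. The mean squared row distance over the $\binom{|e|}{2}$ distinct pairs is $\frac{2|e|}{|e|-1}\overline{\Var}_e(X)$. The mean over all $|e|^2$ ordered pairs, including $i=j$, is $2\overline{\Var}_e(X)$.

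A two-node example refutes the literal statement. Take $|e|=2$, $T=1$, $X=(1,0)^\top$. Then $\langle X, L_eX\rangle_F = 1$, but $2\binom{2}{2}\overline{\Var}_e(X) = \tfrac12$. The main-text version, whose $\overline{\Var}_e$ carries an extra $1/T$, is off by a further factor of $T$. Proving the exact form $\sum_{\{i,j\}\subseteq e}\|X_{i,:}-X_{j,:}\|_2^2 = |e|\sum_{i\in e}\|X_{i,:}-\bar X_{e,:}\|_2^2$, as you propose, is the right repair.

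One refinement of your closing remark. Vanishing on group-constant patterns and scale-freeness of the mean pairwise energy are indeed convention-free. However, Lemma~\ref{lem:scale-invariance} as literally stated, namely $w_s\langle X, L_eX\rangle_F = 2\overline{\Var}_e(X)$ with $w_s = 1/\binom{s}{2}$ the unique such weight, holds only under the $1/(s-1)$ convention. With the $1/s$ variance one gets $\frac{2s}{s-1}\overline{\Var}_e(X)$, and the weight making this $s$-free would be proportional to $1/s^2$.
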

\begin{proof}
$\langle X, L_e X\rangle_F = |e|\sum_{i \in e}\|X_{i,:}\|^2 - \|\sum_{i\in e}X_{i,:}\|^2 = |e|\cdot|e|\overline{\Var}_e(X) = 2\binom{|e|}{2}\overline{\Var}_e(X)$, using $|e|^2 = 2\binom{|e|}{2}+|e|$ and the algebraic identity $|e|\sum_i\|X_{i,:}\|^2 - \|\sum_i X_{i,:}\|^2 = \sum_{i<j}\|X_{i,:}-X_{j,:}\|^2 = \binom{|e|}{2}\cdot$ (mean pairwise sq-distance) $= 2\binom{|e|}{2}\overline{\Var}_e$.
\end{proof}

\subsection{Resolvent solution of the IPW-Tikhonov estimator}
\label{app:resolvent}

The IPW-Tikhonov estimator $\widehat X(L_S, L_T)$ in \eqref{eq:tikh}, and in particular the linear MSHL estimator $\widehat X^{\rm lin}$ in \eqref{eq:msht-lin}, is the solution of a strictly convex quadratic program in $X$ and admits a closed-form resolvent expression. We record it once here; subsequent proofs in App.~\ref{app:risk-proof} and App.~\ref{app:imputation-recovery-proof} refer back to this derivation.

\paragraph{First-order condition.} Differentiating \eqref{eq:tikh} with respect to $X$ and setting the gradient to zero gives, entrywise,
\begin{equation*}
\frac{M_{i,t}}{\pi_{i,t}}\bigl(\widehat X_{i,t} - Y_{i,t}^{\rm obs}\bigr)
+ \lambda_S (L_S \widehat X)_{i,t}
+ \lambda_T (\widehat X L_T)_{i,t}
+ \mu \widehat X_{i,t}
\;=\; 0.
\end{equation*}
The factor of $1/2$ in \eqref{eq:tikh} cancels the factor of $2$ from the squared-loss derivative, so all terms appear with their natural coefficients. Rearranging and collecting the terms that depend on $\widehat X$ gives the matrix equation
\begin{equation}
\mu \widehat X
+ \lambda_S L_S \widehat X
+ \lambda_T \widehat X L_T
+ (M/\pi)\odot \widehat X
\;=\;
(M/\pi)\odot Y^{\rm obs}.
\label{eq:resolvent-matrix}
\end{equation}

\paragraph{Vectorised resolvent.} Let $\vec(\cdot)$ stack columns (column-major). Using the standard identities $\vec(L_S X) = (I_T \otimes L_S)\vec(X)$ and $\vec(X L_T) = (L_T^\top \otimes I_N)\vec(X) = (L_T \otimes I_N)\vec(X)$ (since $L_T = L_T^\top$), and writing $D := \diag\bigl(\vec(M/\pi)\bigr) \in \R^{NT \times NT}$, the system \eqref{eq:resolvent-matrix} becomes
\begin{equation}
R\,\vec(\widehat X) \;=\; \vec\bigl((M/\pi)\odot Y^{\rm obs}\bigr),
\qquad
R \;:=\; \mu I_{NT} + \lambda_S(I_T \otimes L_S) + \lambda_T(L_T \otimes I_N) + D.
\label{eq:resolvent-vec}
\end{equation}
Since $L_S, L_T \succeq 0$ and $D \succeq 0$, the operator $R$ satisfies $R \succeq \mu I_{NT}$ and is invertible whenever $\mu > 0$. The closed-form solution is
\begin{equation}
\boxed{\;\vec(\widehat X) \;=\; R^{-1}\,\vec\bigl((M/\pi)\odot Y^{\rm obs}\bigr).\;}
\label{eq:resolvent-solution}
\end{equation}
For the multi-scale linear estimator $\widehat X^{\rm lin}$ defined in \eqref{eq:msht-lin}, $L_S = \Lcal_G + \lambda_H \Lcal_H(\widehat\Hcal)$, and $\widehat X^{\rm lin}$ is the corresponding instance of \eqref{eq:resolvent-solution}.

\paragraph{Joint eigenbasis.} Both $L_S$ and $L_T$ are real symmetric, so they admit spectral decompositions $L_S = \sum_i \lambda_i^S v_i v_i^\top$ and $L_T = \sum_j \mu_j^T u_j u_j^\top$ with orthonormal eigenvectors. The Kronecker-sum part of $R$ has joint eigenvectors $u_j \otimes v_i$ with eigenvalues
\begin{equation}
\omega_{ij} \;:=\; \mu + \lambda_S \lambda_i^S + \lambda_T \mu_j^T,
\label{eq:spectrum}
\end{equation}
that is, $\bigl[\mu I + \lambda_S(I_T\otimes L_S) + \lambda_T(L_T\otimes I_N)\bigr](u_j\otimes v_i) = \omega_{ij}(u_j\otimes v_i)$. Under MAR, $\E[D] = I_{NT}$, so the population resolvent becomes $R^{\rm pop} = (\mu+1)I + \lambda_S(I_T\otimes L_S) + \lambda_T(L_T\otimes I_N)$. Throughout the rest of the appendix, we absorb the data offset $+1$ into $\mu$ (a constant rescaling of the ridge), so the population joint eigenvalue is also written as $\omega_{ij}$.

\paragraph{Bias-variance decomposition.} Let $\alpha_{ij} := \langle X^*, v_i u_j^\top\rangle_F$ and $\widetilde\eta_{ij} := \langle (M/\pi)\odot\xi, v_i u_j^\top\rangle_F$ be the projections of the latent matrix and the IPW-weighted noise onto the joint eigenbasis. Replacing $D$ by its expectation $I_{NT}$ in \eqref{eq:resolvent-vec} and using $\E[(M/\pi)\odot Y^{\rm obs}] = X^*$ entrywise, the modewise solution is
\begin{equation}
\widehat\alpha_{ij} \;=\; \frac{\alpha_{ij} + \widetilde\eta_{ij}}{\omega_{ij}},
\qquad
\Var(\widetilde\eta_{ij}) \;\le\; \frac{\sigma^2}{\pi_{\min}}
\label{eq:bias-var-spectral}
\end{equation}
by Corollary~\ref{cor:ipw-var}. The error decomposes as
\[
\widehat\alpha_{ij} - \alpha_{ij} \;=\; -\Bigl(1 - \tfrac{1}{\omega_{ij}}\Bigr)\,\alpha_{ij} \;+\; \frac{\widetilde\eta_{ij}}{\omega_{ij}},
\]
giving per-mode bias bounded by $|\alpha_{ij}|$ and per-mode variance bounded by $\sigma^2/(\pi_{\min}\omega_{ij}^2)$. Summing over modes gives the risk bound used in Theorem~\ref{thm:effdim}.

\section{Proofs}
\label{app:proofs}

\subsection{Risk bound (Theorem~\ref{thm:effdim})}
\label{app:risk-proof}

\begin{proof}[Proof of Theorem~\ref{thm:effdim}]
The estimator \eqref{eq:tikh} has the resolvent solution \eqref{eq:resolvent-solution} (App.~\ref{app:resolvent}). In the joint eigenbasis $\{u_j \otimes v_i\}$, the bias-variance decomposition \eqref{eq:bias-var-spectral} gives $\E[(\widehat\alpha_{ij} - \alpha_{ij})^2] \le \mathrm{Bias}_{ij}^2 + \sigma^2/(\pi_{\min}\omega_{ij}^2)$ on each mode. Summing,
\[
\E\|\widehat X - X^*\|_F^2 \;\le\; \mathrm{Bias}^2(L_S, L_T; X^*) + \frac{\sigma^2}{\pi_{\min}}\sum_{ij}\omega_{ij}^{-2}.
\]
Since $\omega_{ij} \ge \omega_{\min} := \mu$, $\sum_{ij}\omega_{ij}^{-2} \le \omega_{\min}^{-1}\sum_{ij}\omega_{ij}^{-1} = \mu^{-1} d_\eff^\otimes$. Rescaling $\mu$ so that $\omega_{\min} \ge 1$ (a standard normalization that does not affect the rate) yields the displayed bound.
\end{proof}

\subsection{Scale-invariance lemma (Lemma~\ref{lem:scale-invariance})}
\label{app:scale-invariance-proof}

We restate the lemma in slightly more general form: for any $X$ and any hyperedge $e$ of size $s = |e|$, the per-pair-normalized quadratic form is $w_{s}\langle X, L_e X\rangle_F = 2\,\overline{\Var}_e(X)$, and the choice $w_s = 1/\binom{s}{2}$ is the unique normalization (up to a global constant) that makes the per-pair Dirichlet energy independent of $s$. The proof follows.

\begin{proof}[Proof of Lemma~\ref{lem:scale-invariance}]
By Lemma~\ref{lem:dirichlet}, $w_s\langle X, L_e X\rangle = w_s\cdot 2\binom{s}{2}\overline{\Var}_e(X) = 2\overline{\Var}_e(X)$ when $w_s = 1/\binom{s}{2}$, independent of $s$. Therefore the per-pair Dirichlet contribution is constant across scales. Any other choice $w_s = c/\binom{s}{2}^\alpha$ with $\alpha \ne 1$ produces $w_s\langle X, L_e X\rangle \propto \binom{s}{2}^{1-\alpha}$, biasing selection toward larger ($\alpha < 1$) or smaller ($\alpha > 1$) scales. The choice $\alpha = 1$ (i.e.\ $w_s = 1/\binom{s}{2}$) is the unique normalization.
\end{proof}

\subsection{Group-pattern separation (Theorem~\ref{thm:representation})}
\label{app:representation-proof}

\begin{proof}
Let $z_e\in\R^N$ denote the indicator vector of $e$, i.e.,
$(z_e)_i=\mathbbm{1}[i\in e]$. The group-conservation pattern can be written as
\[
X^* = z_e c^\top .
\]
For any symmetric spatial operator $L\in\R^{N\times N}$,
\[
\langle X^*,LX^*\rangle_F
=
\tr\!\left((X^*)^\top L X^*\right)
=
\tr\!\left(c z_e^\top L z_e c^\top\right)
=
(z_e^\top L z_e)\,\tr(cc^\top)
=
\|c\|_2^2 z_e^\top L z_e .
\]
Thus it suffices to compute the scalar quadratic form of $z_e$ under each spatial operator.

First consider the graph Laplacian $\Lcal_G=D_A-A$. Because $A$ is symmetric and nonnegative,
\[
z_e^\top \Lcal_G z_e
=
\frac12\sum_{i=1}^N\sum_{j=1}^N
A_{ij}\bigl((z_e)_i-(z_e)_j\bigr)^2 .
\]
The squared difference is zero when $i$ and $j$ are both inside $e$ or both outside $e$, and it is one when exactly one of $i,j$ belongs to $e$. Therefore only graph edges crossing the boundary of $e$ contribute:
\[
z_e^\top \Lcal_G z_e
=
\frac12
\left(
\sum_{i\in e,\,j\notin e}A_{ij}
+
\sum_{i\notin e,\,j\in e}A_{ij}
\right).
\]
By symmetry of $A$, the two sums are equal. Hence
\[
z_e^\top \Lcal_G z_e
=
\sum_{i\in e,\,j\notin e}A_{ij},
\]
and consequently
\[
\langle X^*,\Lcal_GX^*\rangle_F
=
\|c\|_2^2
\sum_{i\in e,\,j\notin e}A_{ij}.
\]

Next consider the per-edge Laplacian $L_e=|e|I_e-\mathbf 1\mathbf 1^\top$. Restricted to the coordinates in $e$, the vector $z_e$ is the all-ones vector. Therefore
\[
L_e z_e
=
\left(|e|I_e-\mathbf 1\mathbf 1^\top\right)\mathbf 1
=
|e|\mathbf 1-|e|\mathbf 1
=
0.
\]
Since $L_e$ is zero outside the coordinates of $e$, this identity holds for the embedded operator as well. Thus
\[
z_e^\top L_e z_e=0,
\qquad
\langle X^*,L_eX^*\rangle_F=0.
\]
This proves that the hyperedge $e$ itself assigns no within-group penalty to the group-constant pattern.

It remains to compute the contribution of the full multi-scale hypergraph. Fix any hyperedge $e'\in\Hcal_s$ and let
\[
m_{e'} := |e'\cap e|.
\]
Restricted to the coordinates in $e'$, the vector $z_e$ contains $m_{e'}$ ones and $s-m_{e'}$ zeros. Using $L_{e'}=sI_{e'}-\mathbf 1\mathbf 1^\top$, we have
\[
z_e^\top L_{e'}z_e
=
s\sum_{i\in e'}(z_e)_i^2
-
\left(\sum_{i\in e'}(z_e)_i\right)^2 .
\]
Because $(z_e)_i\in\{0,1\}$,
\[
\sum_{i\in e'}(z_e)_i^2=m_{e'},
\qquad
\sum_{i\in e'}(z_e)_i=m_{e'}.
\]
Therefore
\[
z_e^\top L_{e'}z_e
=
s\,m_{e'}-m_{e'}^2
=
m_{e'}(s-m_{e'}).
\]
Multiplying by the scale normalization $w_s$ and summing over all scales gives
\[
z_e^\top \Lcal_H z_e
=
\sum_{s=2}^{S_{\max}}
\sum_{e'\in\Hcal_s}
w_s\,m_{e'}(s-m_{e'}).
\]
Using the first identity in the proof,
\[
\langle X^*,\Lcal_H X^*\rangle_F
=
\|c\|_2^2
\sum_{s=2}^{S_{\max}}
\sum_{e'\in\Hcal_s}
w_s\,|e'\cap e|\,\bigl(s-|e'\cap e|\bigr).
\]

This expression has a simple interpretation. If $e'$ is disjoint from $e$, then $|e'\cap e|=0$ and the term is zero. If $e'\subseteq e$, then $|e'\cap e|=s$ and the term is also zero. Thus only hyperedges that partially overlap $e$ contribute to the hypergraph energy. Consequently, whenever
\[
\sum_{s=2}^{S_{\max}}
\sum_{e'\in\Hcal_s}
w_s\,|e'\cap e|\,\bigl(s-|e'\cap e|\bigr)
<
\sum_{i\in e,\,j\notin e}A_{ij},
\]
we obtain
\[
\langle X^*,\Lcal_H X^*\rangle_F
<
\langle X^*,\Lcal_GX^*\rangle_F .
\]
Hence the multi-scale hypergraph Laplacian assigns strictly smaller regularization cost than the graph Laplacian to this group-conservation pattern under the stated overlap condition. This proves the theorem.
\end{proof}

\subsection{Topology recovery (Theorem~\ref{thm:str-recovery})}
\label{app:str-recovery-proof}

\begin{proof}[Proof of Theorem~\ref{thm:str-recovery} (topology source)]
We give the full argument in three steps.

\textbf{Step 1: Population score gap.}
Fix a candidate $e \in \Ccal_s^{\rm top}$. The leave-one-out (LOO) score $\widehat\Phi_e$ in \eqref{eq:struct-scores} measures, for each $i \in e$, the MSE-improvement on $R^{\rm pw}_{i,t}$ when the partial regularizer $L_e$ is added at $i$ versus omitted. Under the latent-factor model (Assumption~\ref{ass:latent}), the population residual at $i \in S^*$ is
\[
\E[r_{i,t} \mid u_t^*] \;=\; (1-\kappa_i)\beta_i^* u_t^*,
\]
where $\kappa_i \in (0,1)$ is the pairwise-Tikhonov shrinkage factor at $i$. For $e = S^*$, the LOO improvement at any $i \in S^*$ has population value
\[
\E\Phi_e^{(i)} \;=\; \mathrm{Var}\bigl((1-\kappa_i)\beta_i^* u_t^*\bigr) - \mathrm{Var}\bigl((1-\kappa_i)\beta_i^* u_t^* - \widehat{u}_t^{(e\setminus\{i\})}\bigr),
\]
where $\widehat u_t^{(e\setminus\{i\})}$ is the LOO predictor of the latent factor from $e\setminus\{i\}$. Provided the regression of $u^*$ on $\{r_j\}_{j \in e\setminus\{i\}}$ has variance reduction at least $\rho_{\rm fac}^* \in (0, 1]$ (Assumption~\ref{ass:latent}, the LOO factor-recovery rate), we have
\[
\E\Phi_e \;=\; \tfrac{1}{|e|}\sum_{i \in e}\E\Phi_e^{(i)} \;\ge\; \rho_{\rm fac}^* \beta^{*2}\Var(u_t^*) \;=:\; \Delta_\Phi^*.
\]
For $e \notin \Hcal^*$, $\widehat u_t^{(e\setminus\{i\})}$ is a regression on noise (no latent-factor signal), giving $\E\Phi_e = O(\sigma^2/T) = o(1)$.

\textbf{Step 2: Sample-complexity bound.}
The LOO score requires \emph{joint} observation of all $s$ members of $e$ at each used time. Under MAR with rate $\pi$,
\[
\E[|\Omega(e)|] = \pi^{|e|} T, \quad |\Omega(e)| \ge \pi^{|e|} T / 2 \text{ w.p.\ } \ge 1 - 2\exp(-\pi^{|e|} T / 8),
\]
by Chernoff. On the high-probability event $\{|\Omega(e)| \ge \pi^s T / 2\}$, the LOO MSE-improvement $\widehat\Phi_e$ is a bounded function of independent sub-Gaussian residuals; by a Hoeffding-type concentration,
\[
\Pr\bigl[|\widehat\Phi_e - \E\Phi_e| > t\bigr] \le 2\exp\bigl(-c\, t^2\, \pi^s T / \sigma^4\bigr),
\]
with $c$ an absolute constant depending on the Lipschitz constant of the LOO operator. Setting $t = \Delta_\Phi^*/2$ and union-bounding over $|\Ccal_s^{\rm top}| \le N\binom{N-1}{s-1} \le N^s$ candidates,
\[
\Pr\bigl[\exists e \in \Ccal_s^{\rm top}: |\widehat\Phi_e - \E\Phi_e| > \Delta_\Phi^*/2\bigr] \le 2 N^s \exp(-c\,(\Delta_\Phi^*)^2 \pi^s T / 4\sigma^4).
\]
This is $\le \delta$ as long as $T \ge 4\sigma^4 \log(2 N^s/\delta) / (c\,(\Delta_\Phi^*)^2 \pi^s)$.

\textbf{Step 3: Identification.}
On the resulting $1-\delta$ event, $\widehat\Phi_{S^*} \ge \Delta_\Phi^* - \Delta_\Phi^*/2 = \Delta_\Phi^*/2$, while $\widehat\Phi_e \le \Delta_\Phi^*/2$ for every $e \notin \Hcal^*$. The acceptance rule in \eqref{eq:thresholds} with threshold $\tau_\Phi^{(s)} \asymp \Delta_\Phi^*/2$ therefore identifies $S^*$ if and only if it is in $\Hcal^*$, completing the proof. \qedhere
\end{proof}

\subsection{Residual recovery (Theorem~\ref{thm:resid-recovery})}
\label{app:resid-recovery-proof}

\begin{proof}[Proof of Theorem~\ref{thm:resid-recovery} (residual source)]
We give the full argument in three steps.

\textbf{Step 1: Population correlation gap.}
The pairwise-only Tikhonov estimator $\widehat X^{\rm pw}$ is the linear smoother given by \eqref{eq:resolvent-solution} with $\lambda_H = 0$ (App.~\ref{app:resolvent}). The smoother contracts the latent $X^*$ along the eigenmodes of $\Lcal_G$ and $L_T$. A common-mode latent factor $u^*$ shared across $S^*$ projects onto the high-frequency spatial modes of $\Lcal_G$, since members of $S^*$ are not necessarily pairwise-adjacent in $A$, so the smoother's contraction at those modes is at most $\kappa^* < 1$. Hence the population residual at $i \in S^*$ satisfies
\[
\E[r_{i,t}^{\rm pw}\mid u_t^*] = (1-\kappa_i)\beta_i^* u_t^*, \qquad \kappa_i \in (0, \kappa^*).
\]
For $i, j \in S^*$ with $\beta_i^*, \beta_j^* \neq 0$,
\[
\E[r_i^{\rm pw} r_j^{\rm pw}] = (1-\kappa_i)(1-\kappa_j)\beta_i^*\beta_j^* \mathrm{Var}(u_t^*) > 0,
\]
while for $i \in S^*$, $j \notin S^*$ (and not jointly correlated through any other latent factor), $\E[r_i^{\rm pw} r_j^{\rm pw}] = 0$. Define the correlation gap
\[
\gamma := \min_{i, j \in S^*}|\E C_{ij}| - \max_{i \in S^*, j \notin S^*}|\E C_{ij}| > 0,
\]
where the population correlation $\E C_{ij} := \E[r_i^{\rm pw} r_j^{\rm pw}]/\sqrt{\E[r_i^{\rm pw,2}] \E[r_j^{\rm pw,2}]}$.

\textbf{Step 2: Per-pair concentration.}
The empirical correlation $\widehat C_{ij}$ in \eqref{eq:resid-corr} averages over jointly-observed cells $\Omega_{ij}$. Under MAR, $|\Omega_{ij}| \ge \pi^2 T / 2$ with probability $\ge 1 - 2 \exp(-\pi^2 T/8)$ by Chernoff. On this event, by the standard sub-Gaussian concentration of normalized inner products of independent sub-Gaussian variables (e.g., \cite{vershynin2018hdp}, Theorem 4.4.5),
\[
\Pr\bigl[|\widehat C_{ij} - \E C_{ij}| > t \mid |\Omega_{ij}| \ge \pi^2 T/2\bigr] \le 2 \exp(-c \pi^2 T t^2),
\]
with $c$ an absolute constant depending on the sub-Gaussian proxy of $r^{\rm pw}$. Union-bounding over $\binom{N}{2} \le N^2/2$ pairs and combining with the Chernoff event:
\[
\Pr\bigl[\exists i,j: |\widehat C_{ij} - \E C_{ij}| > t\bigr] \le N^2 \exp(-c \pi^2 T t^2) + N^2 \cdot 2 e^{-\pi^2 T/8}.
\]

\textbf{Step 3: Identification.}
Setting $t = \gamma/2$ and requiring both terms $\le \delta/2$ gives
\[
T \;\ge\; \frac{4\log(2N^2/\delta)}{c \pi^2 \gamma^2} \;\vee\; \frac{8\log(4 N^2/\delta)}{\pi^2}.
\]
On this event, $\min_{i,j \in S^*} |\widehat C_{ij}| \ge \min_{i,j \in S^*}|\E C_{ij}| - \gamma/2$, while $\max_{i\in S^*, j \notin S^*} |\widehat C_{ij}| \le \max_{i\in S^*, j \notin S^*}|\E C_{ij}| + \gamma/2$. The two are separated by at least $\gamma/2$, and the residual-source candidate generation in \eqref{eq:cand-res} with threshold $\tau_C \in [\max_{i \notin S^*, j} |\E C_{ij}| + \gamma/4, \min_{i,j \in S^*}|\E C_{ij}| - \gamma/4]$ identifies $S^*$ exactly. \qedhere
\end{proof}

\subsection{Per-scale concentration of structural scores (Lemma~\ref{lem:per-scale-var})}
\label{app:per-scale-var-proof}

\begin{lemma}[Per-scale concentration]
\label{lem:per-scale-var}
Under MAR with sub-Gaussian noise, with probability $\ge 1-\delta$, uniformly over $e \in \Ccal_s$,
\[
\bigl|\widehat\Psi_e - \E\Psi_e\bigr| = O\!\left(\sqrt{\binom{s}{2}\tfrac{\log(N^2|\Ccal|/\delta)}{\pi^2 T}}\right), \quad
\bigl|\widehat\Phi_e - \E\Phi_e\bigr| = O\!\left(\sigma^2\sqrt{\tfrac{\log(|\Ccal|/\delta)}{\pi^s T}}\right).
\]
\end{lemma}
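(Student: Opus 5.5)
The plan is to reuse the two-stage template from the proofs of Theorems~\ref{thm:str-recovery} and~\ref{thm:resid-recovery}: a Chernoff bound on the relevant joint-observation counts, then sub-Gaussian concentration conditional on those counts, and finally a union bound over candidates and pairs. Throughout, $\widehat\Psi_e$ is the average of the $\binom{s}{2}$ pairwise residual correlations $\widehat C_{ij}$, $i<j\in e$, as described in Section~\ref{sec:msht-select}. $\widehat\Phi_e$ is the leave-one-out MSE improvement computed on the joint-observation set $\Omega(e)=\{t: M_{i,t}=1\ \forall i\in e\}$. We treat the pre-fit residuals $r^{\rm pw}$ as independent-in-time sub-Gaussian variables, exactly as in Step~2 of the two recovery proofs.

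\textbf{The $\widehat\Psi_e$ bound.} By Chernoff, $|\Omega_{ij}|\ge \pi^2T/2$ for every pair with probability at least $1-N^2e^{-\pi^2T/8}$. On this event, Step~2 of the proof of Theorem~\ref{thm:resid-recovery} gives
\[
\Pr\bigl[|\widehat C_{ij}-\E C_{ij}|>t\bigr]\le 2\exp(-c\pi^2Tt^2).
\]
Writing $\widehat\Psi_e-\E\Psi_e$ as an average of $\binom{s}{2}$ such deviations, we take the crude route and bound it by the maximum pairwise deviation. The union bound then runs over all pairs inside all candidates, at most $N^2|\Ccal|$ events, with each deviation held below $\sqrt{\log(N^2|\Ccal|/\delta)/(c\pi^2T)}$. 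To match the stated $\sqrt{\binom{s}{2}}$ factor exactly, we instead sum the deviations, which costs at most a factor $\binom{s}{2}$ inside the square-root budget, and allocate failure probability $\delta/(2\binom{s}{2}|\Ccal|)$ per pair. Both routes are dominated by the displayed rate. No attempt is made to exploit averaging across the overlapping pairs, since those correlations are dependent.

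\textbf{The $\widehat\Phi_e$ bound.} Chernoff gives $|\Omega(e)|\ge\pi^sT/2$ with failure probability $2e^{-\pi^sT/8}$ per candidate. On this event, $\widehat\Phi_e$ is a difference of two empirical mean-squared errors over $|\Omega(e)|$ time points. Each term is a quadratic in sub-Gaussian residuals with a fixed leave-one-out linear predictor, so a Hanson--Wright or Bernstein bound for sub-exponential averages gives a deviation of order $\sigma^2\sqrt{\log(1/\delta')/|\Omega(e)|}$. This reproduces the Hoeffding-type display in Step~2 of the proof of Theorem~\ref{thm:str-recovery}. A union bound over $|\Ccal|$ candidates with $\delta'=\delta/(2|\Ccal|)$ then yields the stated rate. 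The two Chernoff failure terms are absorbed into $\delta$ provided $\pi^sT\gtrsim\log(|\Ccal|/\delta)$; outside this regime the bound is vacuous anyway.

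\textbf{Main obstacle.} The hard step is justifying sub-Gaussian or sub-exponential concentration of $\widehat\Phi_e$, because the leave-one-out predictor is itself estimated from the same cells. We would handle this by conditioning on the predictor's coefficients, which are Lipschitz in the data with a bounded constant, as in the proof of Theorem~\ref{thm:str-recovery}. The cost of that conditioning is then absorbed into the absolute constant.
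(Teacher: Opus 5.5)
Your proposal follows essentially the paper's own argument. For $\widehat\Psi_e$, you bound the average pairwise deviation by the uniform per-pair correlation deviation from Step~2 of the residual-recovery proof and union-bound over candidates; for $\widehat\Phi_e$, you invoke the concentration on the $\asymp\pi^sT$ jointly observed cells from Step~2 of the topology-recovery proof, with a union bound over $|\Ccal|$. Two small fixes are needed: $\widehat\Psi_e$ in \eqref{eq:struct-scores} averages $|\widehat C_{ij}|$, so you need the $1$-Lipschitz step $\bigl||\widehat C_{ij}|-|\E C_{ij}|\bigr|\le|\widehat C_{ij}-\E C_{ij}|$ (which the paper uses explicitly); and the $\sqrt{\binom{s}{2}}$ factor is pure slack that the paper inserts by hand, so your max-deviation route already gives a stronger bound, and the ``sum the deviations'' alternative, which would put $\binom{s}{2}$ outside the square root rather than inside, should be dropped.
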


\begin{proof}[Proof of Lemma~\ref{lem:per-scale-var}]

\textbf{$\widehat\Psi_e$ rate.}
By definition, $\widehat\Psi_e = \binom{s}{2}^{-1}\sum_{\{i,j\}\subseteq e}|\widehat C_{ij}|$. Each summand is the absolute empirical correlation of a pair, with concentration rate $|\widehat C_{ij} - \E C_{ij}| \le c_0\sqrt{\log(N^2/\delta)/(\pi^2 T)}$ uniformly over $i,j$ with probability $\ge 1-\delta/2$ (proven in App.~\ref{app:resid-recovery-proof}, Step 2). The map $C \mapsto |C|$ is $1$-Lipschitz, so the per-pair fluctuation transfers to $|\widehat C_{ij}|$. By the triangle inequality,
\[
|\widehat\Psi_e - \E \Psi_e| \;\le\; \binom{s}{2}^{-1} \sum_{\{i,j\}\subseteq e} \bigl||\widehat C_{ij}| - |\E C_{ij}|\bigr| \;\le\; c_0 \sqrt{\frac{\log(N^2/\delta)}{\pi^2 T}}.
\]
Union-bounding over $|\Ccal_s| \le N\binom{N-1}{s-1} \le N^s$ candidates per scale, and over $S_{\max} - 1 \le S_{\max}$ scales,
\[
\Pr\bigl[\exists s, e \in \Ccal_s : |\widehat\Psi_e - \E\Psi_e| > t\bigr] \le N^{S_{\max}} S_{\max} \cdot 2\exp(-c_0^{-2} t^2 \pi^2 T).
\]
Setting RHS $\le \delta/2$ and solving for $t$ gives, after grouping all $\log$ terms,
\[
|\widehat\Psi_e - \E\Psi_e| \;=\; O\!\left(\sqrt{\frac{\log(N^2 |\Ccal|/\delta)}{\pi^2 T}}\right) \;=\; O\!\left(\sqrt{\binom{s}{2}\,\frac{\log(N^2 |\Ccal|/\delta)}{\pi^2 T}}\right),
\]
where the last step inserts a $\sqrt{\binom{s}{2}}$ slack factor that allows the bound to be \emph{stated uniformly across scales} with the per-scale complexity penalty $\rho(s-2)$ in \eqref{eq:thresholds}.

\textbf{$\widehat\Phi_e$ rate.}
$\widehat\Phi_e$ is the LOO MSE-improvement on $|\Omega(e)| \asymp \pi^s T$ jointly-observed cells. By the Hoeffding-type concentration in App.~\ref{app:str-recovery-proof}, Step 2, with probability $\ge 1-\delta/2$ uniformly over $e \in \Ccal$,
\[
|\widehat\Phi_e - \E\Phi_e| \;\le\; c_1 \sigma^2 \sqrt{\frac{\log(|\Ccal|/\delta)}{\pi^s T}}.
\]
The two rates $\widehat\Psi_e$ at $\pi^{-2}$ and $\widehat\Phi_e$ at $\pi^{-s}$ are precisely the rates inherited from the recovery theorems: each source's score concentrates at the rate at which the source identifies $S^*$. \qedhere
\end{proof}

\subsection{Best-scale adaptation (Theorem~\ref{thm:multiscale-adapt})}
\label{app:multiscale-adapt-proof}

The proof avoids assuming that the risk is monotone in scale. Instead, it uses three ingredients: uniform concentration of the structural scores, a complexity penalty over scales, and stability of the Tikhonov estimator with respect to perturbations of the spatial operator.

\begin{lemma}[Tikhonov stability under spatial-operator perturbations]
\label{lem:tikh-stability}
Let
\[
L_S=\Lcal_G+\lambda_H \Lcal_H,\qquad
L_S'=\Lcal_G+\lambda_H \Lcal_H',
\]
and let $\widehat X(L_S,L_T)$ and $\widehat X(L_S',L_T)$ be the corresponding IPW-Tikhonov estimators. If $\mu>0$, then
\[
\|\widehat X(L_S,L_T)-\widehat X(L_S',L_T)\|_F
\le
\frac{\lambda_H}{\mu}
\|\Lcal_H-\Lcal_H'\|_{\op}
\,
\|\widehat X(L_S',L_T)\|_F .
\]
Consequently, the squared-error risk changes by at most a constant multiple of $\|\Lcal_H-\Lcal_H'\|_{\op}$, with the constant depending on $\mu,\lambda_H,\|X^*\|_F$, and the noise level.
\end{lemma}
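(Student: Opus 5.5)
The natural route is the standard resolvent-perturbation argument applied to the vectorised system of Appendix~\ref{app:resolvent}. Write $b:=\vec\bigl((M/\pi)\odot Y^{\rm obs}\bigr)$ and let
\[
R=\mu I_{NT}+\lambda_S(I_T\otimes L_S)+\lambda_T(L_T\otimes I_N)+D,
\]
with $R'$ the same operator built from $L_S'$. By \eqref{eq:resolvent-solution}, $\vec\widehat X(L_S,L_T)=R^{-1}b$ and $\vec\widehat X(L_S',L_T)=R'^{-1}b$. The mask, propensities, data and temporal operator are shared, so the two operators differ only in the spatial block:
\[
R-R'=\lambda_S\lambda_H\, I_T\otimes(\Lcal_H-\Lcal_H').
\]

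The key step is the second resolvent identity, $R^{-1}-R'^{-1}=-R^{-1}(R-R')R'^{-1}$. Applied to $b$ it gives
\[
\vec\widehat X-\vec\widehat X' = -R^{-1}(R-R')\,\vec\widehat X'.
\]
Taking norms, three facts are needed.
\begin{itemize}
\item $\|R^{-1}\|_{\op}\le 1/\mu$. This holds because $R$ is symmetric with $R\succeq\mu I$, since $L_S\succeq0$ (as $\Lcal_G,\Lcal_H\succeq0$, $\lambda_H\ge0$), $L_T\succeq0$ and $D\succeq0$.
\item $\|I_T\otimes B\|_{\op}=\|B\|_{\op}$.
\item $\|\vec(\cdot)\|_2=\|\cdot\|_F$.
\end{itemize}
Together these yield
\[
\|\widehat X-\widehat X'\|_F\le \tfrac{\lambda_S\lambda_H}{\mu}\,\|\Lcal_H-\Lcal_H'\|_{\op}\,\|\widehat X'\|_F.
\]
This is the stated bound when $\lambda_S=1$, which is the default used in experiments. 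Otherwise I would note that $\lambda_S$ is absorbed into $\lambda_H$ (equivalently, into the normalisation of $L_S$) and state the constant accordingly. I expect this mismatch in constants to be the only delicate point in the first claim.

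For the risk consequence, I would first bound $\|\widehat X'\|_F\le\|R'^{-1}\|_{\op}\|b\|_2\le\mu^{-1}\|(M/\pi)\odot Y^{\rm obs}\|_F$. Its second moment is at most $\mu^{-2}\pi_{\min}^{-1}(\|X^*\|_F^2+NT\sigma^2)$, using Lemma~\ref{lem:ipw} and Corollary~\ref{cor:ipw-var}. Then I would expand
\[
\|\widehat X-X^*\|_F^2-\|\widehat X'-X^*\|_F^2=\langle \widehat X-\widehat X',\,\widehat X+\widehat X'-2X^*\rangle_F.
\]
Cauchy--Schwarz bounds this by $\|\widehat X-\widehat X'\|_F\bigl(\|\widehat X\|_F+\|\widehat X'\|_F+2\|X^*\|_F\bigr)$. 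After taking expectations and Cauchy--Schwarz again, each norm factor is controlled by the moment bound above, so $|R(L_S)-R(L_S')|\le C\|\Lcal_H-\Lcal_H'\|_{\op}$ with $C$ depending on $\mu,\lambda_H,\|X^*\|_F,\sigma$ and the propensity floor $\pi_{\min}$.

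The main obstacle is keeping this constant free of the dimension. The noise moment carries a factor $NT$, which I would simply fold into the ``noise level'' dependence, consistent with how Theorem~\ref{thm:multiscale-adapt} treats $C$. If one insists on operator-norm dependence alone, I would instead note that $\|\Lcal_H-\Lcal_H'\|_{\op}$ is bounded by the candidate-degree cap $J_{\max}$ times $\max_s w_s s$, since each $w_sL_e$ has operator norm $w_s s$. This keeps the perturbation itself bounded.
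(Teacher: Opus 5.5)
Your proposal is correct and follows the paper's proof essentially line for line. Both use the second resolvent identity with $\|R^{-1}\|_{\op}\le\mu^{-1}$ and reshaping to the Frobenius norm, then the difference-of-squares inequality plus moment control for the risk statement, where you make explicit the $\pi_{\min}^{-1}(\|X^*\|_F^2+NT\sigma^2)$ dependence that the paper leaves as ``standard''. You are also right about the constant: $R-R'=\lambda_S\lambda_H\,I_T\otimes(\Lcal_H-\Lcal_H')$, whereas the paper's proof writes $\|R'-R\|_{\op}=\lambda_H\|\Lcal_H-\Lcal_H'\|_{\op}$ and silently drops $\lambda_S$, so the displayed factor $\lambda_H/\mu$ is only justified for $\lambda_S\le 1$ (e.g.\ the default $\lambda_S=1$).
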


\begin{proof}
Let $R$ and $R'$ be the normal-equation operators corresponding to $L_S$ and $L_S'$. Both satisfy $\lambda_{\min}(R),\lambda_{\min}(R')\ge \mu$ because of the ridge term. The resolvent identity gives
\[
R^{-1}-R'^{-1}
=
R^{-1}(R'-R)R'^{-1}.
\]
Since
\[
\|R^{-1}\|_{\op}\le \mu^{-1},
\qquad
\|R'-R\|_{\op}
=
\lambda_H\|\Lcal_H-\Lcal_H'\|_{\op},
\]
we obtain
\[
\|\vec(\widehat X)-\vec(\widehat X')\|_2
\le
\frac{\lambda_H}{\mu}
\|\Lcal_H-\Lcal_H'\|_{\op}
\|\vec(\widehat X')\|_2 .
\]
Reshaping gives the Frobenius-norm bound. The risk-stability statement follows from
\[
\bigl|\|a-X^*\|_F^2-\|b-X^*\|_F^2\bigr|
\le
\|a-b\|_F\bigl(\|a-X^*\|_F+\|b-X^*\|_F\bigr)
\]
and the standard bounded-moment control of the Tikhonov estimator under sub-Gaussian noise.
\end{proof}

\begin{proof}[Proof of Theorem~\ref{thm:multiscale-adapt}]
By Lemma~\ref{lem:per-scale-var}, with probability at least $1-\delta$, the score deviations are uniformly bounded over all candidate edges and all scales after replacing $\delta$ by $\delta/S_{\max}$ in the union bound. The scale penalty $\rho(s-2)$ is chosen to dominate these deviations. Therefore any accepted edge at scale $s$ has positive population evidence up to the penalty, and any rejected edge fails to improve the population criterion by more than the same penalty.

Let $\Hcal_s^*$ be the best single-scale candidate set in hindsight at scale $s$, and let $\Lcal_H(\Hcal_s^*)$ be its hypergraph Laplacian. On the uniform concentration event, the selected set $\widehat\Hcal_s$ differs from $\Hcal_s^*$ only by edges whose population contribution is within the scale-dependent tolerance. By the candidate-degree cap and the normalization
$w_s=1/\binom{s}{2}$, this implies
\[
\|\Lcal_H(\widehat\Hcal_s)-\Lcal_H(\Hcal_s^*)\|_{\op}
\le
C\,\mathrm{pen}(s,\delta).
\]
Applying Lemma~\ref{lem:tikh-stability} transfers this operator perturbation into a risk perturbation:
\[
\E\|\widehat X(\widehat\Hcal_s)-X^*\|_F^2
\le
R(s)+C\,\mathrm{pen}(s,\delta).
\]
Since MSHL takes the union over accepted scales and uses the same penalty to control the multi-scale search, the same bound holds after minimizing over
$s$:
\[
\E\|\widehat X(\widehat\Hcal)-X^*\|_F^2
\le
\min_{2\le s\le S_{\max}}
\{R(s)+C\,\mathrm{pen}(s,\delta)\}.
\]
This proves the best-scale comparison.
\end{proof}

\subsection{Imputation recovery via multi-scale structure (Theorem~\ref{thm:imputation-recovery})}
\label{app:imputation-recovery-proof}

The main text states the bound \eqref{eq:imputation-bound} inline, with most of the analysis pushed here. The argument splits the latent matrix into a conservation component, which the multi-scale hypergraph regularizer leaves bias-free, and a residual component, which incurs the standard Tikhonov bias.

\begin{assumption}[Hyperedge observation coverage]
\label{ass:coverage}
For every $(i, t) \in \Omega^c$ and every hyperedge $e \in \widehat\Hcal$ containing $i$, there exists at least one $j \in e \setminus \{i\}$ with $M_{j, t} = 1$. Equivalently, each unobserved cell sees at least one observed co-member at the same timestep along its containing hyperedges.
\end{assumption}

\paragraph{Formal restatement.} We restate Theorem~\ref{thm:imputation-recovery} with the assumptions made explicit. Decompose $X^* = X^*_{\rm cons} + X^*_{\rm res}$, where $X^*_{\rm cons}$ is the projection onto the span of group-conservation patterns over hyperedges in $\widehat\Hcal$ (equivalently, the projection onto $\ker \Lcal_H(\widehat\Hcal)$ within the spatial mode of $\widehat X^{\rm lin}$). Under Assumptions~\ref{ass:mar}--\ref{ass:noise} and Assumption~\ref{ass:coverage}, the imputation error at unobserved cells satisfies
\begin{equation*}
\E\bigl[(\widehat X^{\rm lin}_{i,t} - X^*_{i,t})^2\bigr]_{(i,t) \in \Omega^c}
\;\le\; \frac{\|X^*_{\rm res}\|_F^2}{\kappa_+(L_S)} + \frac{\sigma^2}{\pi_{\min}\,\kappa_+(L_S)},
\end{equation*}
with $\kappa_+(L_S) := \mu + \lambda_S \lambda_{\min}^+(L_S)$. Replacing $L_S$ with the graph-only baseline $\Lcal_G$ inflates the conservation contribution: for $X^*_{\rm cons}$ supported on a single hyperedge $e$, the boundary penalty is $\|c\|^2 \sum_{i \in e, j \notin e} A_{ij}$ (Theorem~\ref{thm:representation}), whereas the multi-scale regularizer pays only the overlap leakage of Theorem~\ref{thm:representation}.

\begin{proof}[Proof of Theorem~\ref{thm:imputation-recovery}]
\textbf{Step 1: Spectral decomposition.}
By App.~\ref{app:resolvent}, the linear estimator $\widehat X^{\rm lin}$ admits the resolvent solution \eqref{eq:resolvent-solution}. In the joint eigenbasis of $L_S = \Lcal_G + \lambda_H \Lcal_H(\widehat\Hcal)$ and $L_T$, with $\omega_{kj} = \mu + \lambda_S \lambda_k^S + \lambda_T \mu_j^T$ and $\lambda_k^S$ the $k$th eigenvalue of $L_S$, the modewise error of $\widehat X^{\rm lin}$ on $\Omega^c$ is
\[
(\widehat X^{\rm lin} - X^*)_{kj} \;=\; -\frac{\lambda_S \lambda_k^S + \lambda_T \mu_j^T}{\omega_{kj}}\,X^*_{kj} \;+\; \frac{\widetilde\xi_{kj}}{\omega_{kj}},
\qquad \Var(\widetilde\xi_{kj}) \le \tfrac{\sigma^2}{\pi_{\min}}
\]
by Corollary~\ref{cor:ipw-var}. The first term is the Tikhonov bias on mode $(k,j)$ and the second is the IPW-inflated noise.

\textbf{Step 2: Conservation kernel.}
$X^*_{\rm cons}$ is defined as the projection of $X^*$ onto the spatial subspace $\ker \Lcal_H(\widehat\Hcal) \cap \ker \Lcal_G$, which contains every group-conservation pattern $\mathbb 1[i\in e]\,c_t$ where the hyperedge $e \in \widehat\Hcal$ has no graph-boundary edges (a sufficient condition; Lemma~\ref{lem:dirichlet} and Theorem~\ref{thm:representation}). On this subspace, $L_S = \Lcal_G + \lambda_H \Lcal_H = 0$, so $\lambda_k^S = 0$, and the bias coefficient $\lambda_S\lambda_k^S/\omega_{kj}$ vanishes. The bias contribution is therefore confined to the residual component $X^*_{\rm res}$, on which $\lambda_S\lambda_k^S/\omega_{kj} \in (0, 1)$ is the standard Tikhonov rate.

\textbf{Step 3: Combining.} The squared bias and variance contributions sum across modes:
\[
\E[(\widehat X^{\rm lin}_{i,t} - X^*_{i,t})^2]_{(i,t)\in\Omega^c}
\;\le\; \sum_{kj}\frac{(\lambda_S\lambda_k^S + \lambda_T\mu_j^T)^2}{\omega_{kj}^2}(X^*_{{\rm res},kj})^2 + \sum_{kj}\frac{\sigma^2/\pi_{\min}}{\omega_{kj}^2}.
\]
Using $(\lambda_S\lambda_k^S + \lambda_T\mu_j^T)/\omega_{kj} \le 1$ on each mode and $\omega_{kj} \ge \kappa_+(L_S) := \mu + \lambda_S\lambda_{\min}^+(L_S)$ on the support of $X^*_{\rm res}$,
\[
\le \frac{\|X^*_{\rm res}\|_F^2}{\kappa_+(L_S)} + \frac{\sigma^2}{\pi_{\min}\,\kappa_+(L_S)},
\]
which is \eqref{eq:imputation-bound}.

\textbf{Step 4: Coverage.} Without Assumption~\ref{ass:coverage}, an unobserved cell $(i,t)$ may have no observed co-members at time $t$. The relevant block of $R$ in \eqref{eq:resolvent-vec} is then singular along the $\{i\}$-supported direction at time $t$, and $\widehat X^{\rm lin}_{i,t}$ is determined entirely by $L_T$. Assumption~\ref{ass:coverage} ensures the spatial regularizer is informative at every cell of $\Omega^c$, making the spectral argument of Steps 1-3 uniform over $\Omega^c$.

\textbf{Step 5: Graph-only comparison.} Setting $\lambda_H = 0$ replaces $L_S$ with $\Lcal_G$, so $\ker L_S$ shrinks: a conservation pattern $X^*_{\rm cons}$ in $\ker \Lcal_H$ but with non-trivial graph boundary no longer lies in $\ker \Lcal_G$. By Theorem~\ref{thm:representation}, this contributes a positive boundary penalty $\|c\|^2 \sum_{i\in e, j\notin e}A_{ij}$. Substituting into Step 3, the graph-only imputation error inflates by an additive boundary-bias term proportional to $\|c\|^2 \sum_{i\in e, j\notin e}A_{ij}/\kappa_+(\Lcal_G)$, giving the second part of the theorem.
\end{proof}

\paragraph{Remark on the coverage assumption.} Assumption~\ref{ass:coverage} is not vacuous in our regimes. Under cell-MAR at $p = 0.5$ with hyperedges of size $s = 3$, the probability that all $s - 1 = 2$ co-members of an unobserved cell are also missing is $(1-\pi)^{s-1} = 0.25$, so coverage holds with probability $\ge 0.75$ per unobserved cell; Theorem~\ref{thm:imputation-recovery} then applies on the corresponding subset of $\Omega^c$. Under sensor-kriging, an entirely-held-out sensor has no co-members observed at any time, which is precisely the regime where MSHL must defer to the temporal regularizer and HCRN is idempotent (Lemma~\ref{lem:deferment}). The assumption is therefore a graceful boundary: it holds where MSHL has spatial information to use, and degrades to the temporal-only solution where it does not.

\subsection{HCRN proofs (Theorem~\ref{thm:corrector})}

\begin{assumption}[HCRN regularity]
\label{ass:corrector}
\textbf{(a)} The MLP $g_\theta$ is $L_g$-Lipschitz in $\phi$ uniformly over $\theta \in \Theta$, with $L_g = O(B^2)$ where $B$ bounds parameter norms in $\Theta$. \textbf{(b)} Features satisfy $\|\phi_{i,t}\|_2 \le \Phi_{\max}$ almost surely. \textbf{(c)} The residual $R^{\rm lin}_{i,t}$ is sub-Gaussian with proxy $\sigma_R^2$. \textbf{(d)} Training cells $\mathcal D$ form an exchangeable sample of size $n_{\rm tr}$, and held-out cells $\Omega^c$ are exchangeable with $\mathcal D$ conditional on the linear estimator.
\end{assumption}
\label{app:corrector-proof}

This subsection gives the full proof of Theorem~\ref{thm:corrector} (the one-sided refinement guarantee for HCRN), the lemma certifying that the zero predictor lies in the parameter space (so $\mathcal R(\theta^*) \le \mathcal R_0$), and the deferment lemma (so the bound is tight on kriging).

\subsubsection*{Setup recap}

HCRN is a 2-layer ReLU MLP $g_\theta : \R^F \to \R$ with parameter $\theta = (W_1, b_1, w_2, b_2)$, $W_1 \in \R^{H\times F}$, $w_2 \in \R^H$. The parameter space is
\begin{equation*}
\Theta := \{\theta : \|W_1\|_F \le B_1,\; \|w_2\|_2 \le B_2,\; \|b_1\|_\infty + |b_2| \le B_3\},
\end{equation*}
with $B := \max(B_1, B_2, B_3)$. The function class is $\mathcal G_\Theta = \{g_\theta : \theta \in \Theta\}$. Features $\phi$ have $\|\phi\|_2 \le \Phi_{\max}$. The training set $\mathcal D = \{(\phi_k, r_k)\}_{k=1}^{n_{\rm tr}}$ has residual targets $r_k = R^{\rm lin}_{i_k, t_k}$ for observed cells.

The training objective \eqref{eq:hcrn-erm} is the Huber loss with parameter $\delta_H$:
\begin{equation*}
\widehat\theta = \argmin_{\theta \in \Theta} \widehat{\mathcal R}_n(\theta), \quad \widehat{\mathcal R}_n(\theta) := \frac{1}{n_{\rm tr}}\sum_k \ell_{\delta_H}(g_\theta(\phi_k), r_k) + \lambda_{\rm reg}\|\theta\|_2^2,
\end{equation*}
and the population risk on held-out cells (Assumption~\ref{ass:corrector}\,(d): exchangeable with $\mathcal D$ given $\widehat X^{\rm lin}$) is $\mathcal R(\theta) = \E_{(\phi, r) \sim \Omega^c}[\ell_{\delta_H}(g_\theta(\phi), r)]$.

\subsubsection*{Auxiliary lemmas}

\begin{lemma}[Zero predictor is in the class]
\label{lem:zero-in-class}
$0 \in \Theta$, and $\mathcal R(0) = \mathcal R_0$ where $\mathcal R_0 := \E_{(\phi,r)}[\ell_{\delta_H}(0, r)]$ is the linear-only risk (HCRN contributing zero correction).
\end{lemma}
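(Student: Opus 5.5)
The lemma has two parts, membership $0 \in \Theta$ and the identity $\mathcal R(0) = \mathcal R_0$, and I will prove them in that order.

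\textbf{Membership.} Take $\theta = 0$, meaning $W_1 = 0$, $b_1 = 0$, $w_2 = 0$, $b_2 = 0$. Then $\|W_1\|_F = 0 \le B_1$, $\|w_2\|_2 = 0 \le B_2$, and $\|b_1\|_\infty + |b_2| = 0 \le B_3$. This only needs the radii $B_1, B_2, B_3$ to be nonnegative, which holds for any nonempty norm ball. So $0 \in \Theta$ for every admissible choice of $B$.

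\textbf{The identity $g_0 \equiv 0$.} By \eqref{eq:hcrn-mlp},
\[
g_0(\phi) = 0^\top \sigma(0\cdot\phi + 0) + 0 = 0 .
\]
Only $w_2 = 0$ and $b_2 = 0$ are actually needed here, because the output layer annihilates whatever the hidden layer produces. Consequently the identity holds for every feature vector $\phi$, including zero-padded ones and those with no observed co-members. It does not depend on Assumption~\ref{ass:corrector}\,(b).

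\textbf{The risk identity.} Substitute $g_0 \equiv 0$ into the refined estimator \eqref{eq:hcrn-apply}. This gives $\widehat X^{\rm full}_{i,t} = \widehat X^{\rm lin}_{i,t}$ on every held-out cell, for any gain $\alpha \in [0,1]$. There are two ways the paper writes the risk.
\begin{itemize}
\item In the estimator form of Section~\ref{sec:hcrn-guarantee}, $\mathcal R(0) = \E_{\Omega^c}[\ell_{\delta_H}(\widehat X^{\rm lin}_{i,t}, X^*_{i,t})]$. This is by definition the linear-only Huber risk.
\item In the residual form used in this appendix, $\mathcal R(0) = \E_{(\phi,r)}[\ell_{\delta_H}(g_0(\phi), r)] = \E[\ell_{\delta_H}(0, r)]$. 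This is exactly the displayed definition of $\mathcal R_0$.
\end{itemize}
The two forms agree because the Huber loss depends only on the difference of its arguments. Hence $\ell_{\delta_H}(\widehat X^{\rm lin} + \alpha g, X^*) = \ell_{\delta_H}(\alpha g, X^* - \widehat X^{\rm lin})$, and at $g = 0$ both reduce to the loss of the linear residual.

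\textbf{Integrability.} To make the expectation well defined I use two facts:
\begin{itemize}
\item $\ell_{\delta_H}(0,r) \le \delta_H |r|$;
\item $r$ is sub-Gaussian by Assumption~\ref{ass:corrector}\,(c).
\end{itemize}
Therefore $\mathcal R_0 < \infty$. Note that the lemma is stated for the unregularized population risk, so the term $\lambda_{\rm reg}\|\theta\|_2^2$ plays no role. It would in any case vanish at $\theta = 0$.

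\textbf{Main obstacle.} The one delicate point is reconciling the two risk parameterizations: the estimator form against $X^*$ and the residual form against the target $r$. This is handled by translation invariance of the Huber loss, identifying the population residual on $\Omega^c$ with $X^* - \widehat X^{\rm lin}$ via the exchangeability in Assumption~\ref{ass:corrector}\,(d). With that identification in place, the conclusion $\mathcal R(\theta^*) \le \mathcal R(0) = \mathcal R_0$ used in Theorem~\ref{thm:corrector} is immediate.
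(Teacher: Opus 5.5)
Your proof is correct and matches the paper's argument: $\theta=0$ meets every norm constraint and $g_0\equiv 0$, so $\widehat X^{\rm full}=\widehat X^{\rm lin}$ on $\Omega^c$ and $\mathcal R(0)$ equals the linear-only risk $\mathcal R_0$ essentially by definition. Your reconciliation of the estimator-form and residual-form risks is not needed for the lemma, and it is looser than the rest: exchangeability with $\mathcal D$ makes the held-out target distributed like the noisy residual $Y^{\rm obs}-\widehat X^{\rm lin}$, not $X^*-\widehat X^{\rm lin}$, so the two forms differ by the noise $\xi$ (the paper glosses over the same point), although the identity holds within each form separately.
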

\begin{proof}
$\theta = 0$ satisfies all parameter-norm constraints. With $g_0(\phi) = 0$, the prediction in \eqref{eq:hcrn-apply} reduces to $\widehat X^{\rm full} = \widehat X^{\rm lin}$, giving the linear-only risk on held-out cells. As $\theta^* := \argmin_{\theta \in \Theta} \mathcal R(\theta)$ achieves the population minimum over $\Theta$, $\mathcal R(\theta^*) \le \mathcal R(0) = \mathcal R_0$.
\end{proof}

\begin{lemma}[Huber loss is $\delta_H$-Lipschitz in its first argument]
\label{lem:huber-lip}
$|\ell_{\delta_H}(\hat r, r) - \ell_{\delta_H}(\hat r', r)| \le \delta_H |\hat r - \hat r'|$ for all $\hat r, \hat r', r \in \R$.
\end{lemma}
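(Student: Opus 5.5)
The plan is to reduce the claim to a bound on the derivative of a scalar function. I will write $\ell_{\delta_H}(\hat r, r) = h(\hat r - r)$, where $h$ is the Huber function
\[
h(u) := \begin{cases} \tfrac12 u^2, & |u| \le \delta_H,\\ \delta_H\bigl(|u| - \tfrac12\delta_H\bigr), & |u| > \delta_H. \end{cases}
\]
The map $\hat r \mapsto \hat r - r$ is a translation, so $|(\hat r - r) - (\hat r' - r)| = |\hat r - \hat r'|$. It is therefore enough to show that $h$ is $\delta_H$-Lipschitz on $\R$, uniformly in $r$.

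\textbf{Step 1: $h$ is $C^1$ with a clipped derivative.} On the open set $|u| < \delta_H$, $h'(u) = u$. On $|u| > \delta_H$, $h'(u) = \delta_H\,\mathrm{sign}(u)$. At $u = \pm\delta_H$ the two pieces agree in value, since $\tfrac12\delta_H^2 = \delta_H(\delta_H - \tfrac12\delta_H)$, and in one-sided derivatives, both equal to $\pm\delta_H$. Hence $h$ is continuously differentiable with
\[
h'(u) = \max\bigl(-\delta_H, \min(u, \delta_H)\bigr),
\]
the clipping of $u$ to $[-\delta_H,\delta_H]$. In particular $|h'(u)| \le \delta_H$ for every $u$.

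\textbf{Step 2: integrate the derivative bound.} Because $h$ is $C^1$, the fundamental theorem of calculus gives
\[
|h(u) - h(u')| = \Bigl|\int_{u'}^{u} h'(v)\,dv\Bigr| \le \delta_H |u - u'|.
\]
Substituting $u = \hat r - r$ and $u' = \hat r' - r$ yields the lemma.

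The only step needing care is verifying $C^1$ regularity at the junctions $u = \pm\delta_H$. If I want to avoid it, I can instead use that $h$ is convex and piecewise smooth, so its subgradients lie in $[-\delta_H, \delta_H]$, and then apply the mean-value inequality for convex functions.
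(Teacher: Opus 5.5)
Your proof is correct and matches the paper's argument. The paper also observes that $\partial_{\hat r}\ell_{\delta_H}(\hat r,r)=\mathrm{clip}(\hat r-r,[-\delta_H,\delta_H])$ is bounded by $\delta_H$ and concludes by the mean-value theorem, which is your Steps 1--2 with the fundamental theorem of calculus in place of the mean-value theorem; your explicit check of $C^1$ regularity at $u=\pm\delta_H$ and your use of the $\tfrac12$-normalized Huber function (the convention implicit in the paper's clipped-derivative formula) supply details the paper leaves unstated.
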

\begin{proof}
$\partial_{\hat r}\ell_{\delta_H}(\hat r, r) = \mathrm{clip}(\hat r - r, [-\delta_H, \delta_H])$ has $\sup |\partial| = \delta_H$. The mean-value theorem gives $|\ell_{\delta_H}(\hat r, r) - \ell_{\delta_H}(\hat r', r)| \le \delta_H |\hat r - \hat r'|$.
\end{proof}

\begin{lemma}[Two-layer ReLU spectral-norm Rademacher bound, restated from {\citep{bartlett2017spectral}}]
\label{lem:bartlett}
Let $\mathcal G_\Theta = \{g_\theta : \theta \in \Theta\}$ with $g_\theta(\phi) = w_2^\top\sigma(W_1\phi + b_1) + b_2$. Then
\begin{equation*}
\mathfrak R_n(\mathcal G_\Theta) \le \frac{2 B^2 \Phi_{\max}}{\sqrt{n}},
\end{equation*}
with $B = \max(\|W_1\|_{\sigma}, \|w_2\|_2, \|b_1\|_\infty + |b_2|)$ and $\|\cdot\|_\sigma$ the spectral norm.
\end{lemma}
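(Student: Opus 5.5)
The plan is to bound the empirical Rademacher complexity of $\mathcal G_\Theta$ directly by peeling the two layers, rather than quoting the general depth-$L$ bound of \citep{bartlett2017spectral}, whose log factors and $(2,1)$-norm terms are unnecessary at depth two. Write $\mathfrak R_n(\mathcal G_\Theta)=\E_\varepsilon\sup_{\theta\in\Theta}\frac1n\sum_{k=1}^n\varepsilon_k g_\theta(\phi_k)$ and split $g_\theta$ into three parts: the output bias, the hidden-layer bias, and the homogeneous part.

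\textbf{Step 1 (output bias).} Subadditivity of the supremum separates the $b_2$ term. Its contribution is $\E\sup_{|b_2|\le B}\frac{b_2}{n}\sum_k\varepsilon_k\le B/\sqrt n$ by Jensen.

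\textbf{Step 2 (hidden bias).} I will absorb $b_1$ into the weights. Augment $\tilde\phi=(\phi,1)$ and $\tilde W_1=[W_1\;b_1]$. Then $\|\tilde\phi\|_2\le\sqrt{\Phi_{\max}^2+1}$, and $\|\tilde W_1\|_F$ is controlled by $B_1$ together with the bias budget.

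\textbf{Step 3 (outer layer).} For fixed $\tilde W_1$, Cauchy--Schwarz over $w_2$ gives
\[
\sup_{\|w_2\|\le B}\frac1n\sum_k\varepsilon_k w_2^\top\sigma(\tilde W_1\tilde\phi_k)\le B\,\|v\|_2,\qquad v_h:=\frac1n\sum_k\varepsilon_k\sigma(\tilde w_h^\top\tilde\phi_k).
\]

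\textbf{Step 4 (positive homogeneity).} Since the ReLU is positively homogeneous, $\sigma(\tilde w_h^\top\tilde\phi)=\|\tilde w_h\|\,\sigma(u_h^\top\tilde\phi)$ with $\|u_h\|=1$. Therefore
\[
\|v\|_2^2\le\Bigl(\sum_h\|\tilde w_h\|^2\Bigr)\sup_{\|u\|\le1}Z(u)^2,\qquad Z(u):=\frac1n\sum_k\varepsilon_k\sigma(u^\top\tilde\phi_k).
\]
This gives $\|v\|_2\le\|\tilde W_1\|_F\sup_u|Z(u)|$.

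\textbf{Step 5 (contraction).} The main obstacle is controlling $\E\sup_u|Z(u)|$, because of the absolute value. I plan to use the Ledoux--Talagrand contraction in its absolute-value form, which costs a factor $2$: $\E\sup_u|Z(u)|\le2\,\E\sup_{\|u\|\le1}\bigl|\frac1n\sum_k\varepsilon_k u^\top\tilde\phi_k\bigr|$. I then evaluate the right-hand side as $2\,\E\|\frac1n\sum_k\varepsilon_k\tilde\phi_k\|_2\le2\|\tilde\phi\|_{\max}/\sqrt n$ by Jensen.

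If the factor-$2$ contraction with absolute values is unavailable, I will fall back on the symmetric trick. This uses $\sigma(0)=0$ and $\sup|Z|\le\sup Z+\sup(-Z)$, which gives the same constant.

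\textbf{Step 6 (assembly).} Combining the steps yields
\[
\mathfrak R_n\le\frac{2B\,\|\tilde W_1\|_F\,\sqrt{\Phi_{\max}^2+1}+B}{\sqrt n}.
\]
This matches the claimed $2B^2\Phi_{\max}/\sqrt n$ once $\Phi_{\max}$ is normalized to be $\ge1$ and the bias budgets are folded into $B$. The same peeling also goes through with the spectral norm of $W_1$ in place of $\|W_1\|_F$.

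I expect the stated constant $2$ to be achievable only up to such a normalization, since the two bias contributions add lower-order terms. I would therefore state the lemma with an absolute constant $c$ and $\Phi_{\max}\vee1$ if the exact constant cannot be recovered. Theorem~\ref{thm:corrector} only needs the $B^2\Phi_{\max}/\sqrt n$ scaling, so a different absolute constant does not affect it.
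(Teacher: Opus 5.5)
Your peeling argument is sound as a Frobenius-norm argument. The gap is the claim that it ``also goes through with the spectral norm of $W_1$'', and the statement is phrased exactly in that norm. The quantity produced in Step~4 is $\sum_h\|\tilde w_h\|_2^2=\|\tilde W_1\|_F^2$, which can exceed $\|\tilde W_1\|_\sigma^2$ by a factor up to $\mathrm{rank}(\tilde W_1)\le\min(H,F+1)$. This loss is unavoidable: the entrywise ReLU never increases a Frobenius norm but can inflate an operator norm, and no width-free spectral bound holds. Here is a counterexample. Take $n=F=H=d\ge128$ a power of two, $\phi_k=e_k$ (so $\Phi_{\max}=1$), zero biases, and let $q_1=\1,q_2,\dots,q_d$ be the columns of a Sylvester--Hadamard matrix. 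Given $\varepsilon$, put $P=\{k:\varepsilon_k=1\}$, and set $W_1e_k=d^{-1/2}q_k$ for $k\in P$ and $W_1e_k=0$ otherwise. Then $\|W_1\|_\sigma\le1$. Let $v:=\sum_k\varepsilon_k\sigma(W_1\phi_k)$. Using $\sigma(q_k)=(q_k+\1)/2$ and $\1^\top q_k=d\,\mathbbm{1}[k=1]\ge0$, one gets $\|v\|_2\ge|P|/2$. Taking $w_2=v/\|v\|_2$ gives $\mathfrak R_n(\mathcal G_\Theta)\ge\E|P|/(2n)=1/4>2/\sqrt d$. So the lemma as written is false and cannot be proved by any route, and the reconciliation in Step~6 fails. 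The paper's own proof, a one-line appeal to \cite{bartlett2017spectral}, does not rescue it either, since that bound also carries $(2,1)$-norm and logarithmic width factors.

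The bias step has a second gap. The budget controls only $\|b_1\|_\infty$, so in Step~2 $\|[W_1\;b_1]\|_F$ can be as large as $\sqrt{B_1^2+HB_3^2}$, and this is sharp. Taking $W_1=0$, $b_1=B\1$, and $w_2=t\1/\sqrt H$ with $|t|\le B$ puts every constant in $[-\sqrt H B^2,\sqrt H B^2]$ into the class. Hence $\mathfrak R_n\ge\sqrt H B^2/\sqrt{2n}$ whatever $\Phi_{\max}$ is. So neither ``folding the bias budgets into $B$'' nor your fallback with an absolute constant and $\Phi_{\max}\vee1$ is valid; with the constant $2$ it already fails at $H=32$, $\Phi_{\max}=1$. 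What Steps~1--5 actually prove is $\mathfrak R_n\le\bigl(2B_2\sqrt{B_1^2+HB_3^2}\,\sqrt{\Phi_{\max}^2+1}+B_3\bigr)/\sqrt n$, with $B_1$ a \emph{Frobenius} bound on $W_1$. The paper's $\Theta$ is in fact defined through $\|W_1\|_F\le B_1$, so this corrected statement is what Theorem~\ref{thm:corrector} can legitimately invoke, with $C_1$ enlarged accordingly.
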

\begin{proof}
This is a special case of \cite{bartlett2017spectral} applied to two-layer ReLU networks; the spectral-norm bound replaces the Frobenius bound and gives the displayed rate.
\end{proof}

\begin{lemma}[Symmetrization for Lipschitz-loss ERM]
\label{lem:symmetrization}
Let $L\circ\mathcal G_\Theta := \{(\phi, r) \mapsto \ell_{\delta_H}(g_\theta(\phi), r) : \theta \in \Theta\}$. With probability $\ge 1-\delta$ over the training sample,
\begin{equation*}
\sup_{\theta \in \Theta}\bigl|\mathcal R(\theta) - \widehat{\mathcal R}_n(\theta)\bigr| \le 2\,\mathfrak R_n(L\circ\mathcal G_\Theta) + B_\ell\sqrt{\frac{\log(2/\delta)}{2n}},
\end{equation*}
where $B_\ell := \sup_\phi \sup_{|r| \le c_R\sqrt{\log(n/\delta)}} \ell_{\delta_H}(g_\theta(\phi), r)$ is a uniform loss bound (sub-Gaussian high-probability bound on $r$ via Assumption~\ref{ass:corrector}\,(c)).
\end{lemma}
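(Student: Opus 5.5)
The plan is to follow the textbook route: a truncation step to make the loss bounded, McDiarmid's inequality to concentrate the uniform deviation around its mean, and ghost-sample symmetrization to bound that mean by the Rademacher complexity of the loss class. Throughout, I would read $\widehat{\mathcal R}_n$ in the statement as the \emph{unregularized} empirical Huber risk $\frac{1}{n}\sum_k \ell_{\delta_H}(g_\theta(\phi_k), r_k)$. The ridge term $\lambda_{\rm reg}\|\theta\|_2^2$ is deterministic given $\theta$ and does not fluctuate with the sample, so it plays no role in the uniform deviation. It would be reinstated afterwards in the ERM comparison for Theorem~\ref{thm:corrector}, at a cost of at most $\lambda_{\rm reg}\cdot O(B^2)$ on $\Theta$.

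\textbf{Step 1 (truncation).} By Assumption~\ref{ass:corrector}\,(c) and a union bound over the $n$ training targets, the event $\mathcal E := \{\max_k |r_k| \le c_R\sqrt{\log(n/\delta)}\}$ has probability at least $1-\delta/2$ once $c_R$ is taken proportional to $\sigma_R$. Parameters in $\Theta$ are bounded and $\|\phi\|_2 \le \Phi_{\max}$, so $|g_\theta(\phi)| \le B^2\Phi_{\max} + B^2 + B$ uniformly. On $\mathcal E$, each loss value therefore lies in $[0, B_\ell]$, with $B_\ell$ exactly as defined in the statement. On the population side, I would either restrict the held-out distribution to the same truncation range or absorb the tail contribution $\E[\ell_{\delta_H}\mathbbm 1\{|r| > c_R\sqrt{\log(n/\delta)}\}]$, which is $O(1/n)$ by sub-Gaussianity and Lemma~\ref{lem:huber-lip}, into the constant.

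\textbf{Step 2 (concentration).} Set $Z := \sup_{\theta\in\Theta}(\mathcal R(\theta) - \widehat{\mathcal R}_n(\theta))$. Replacing one training pair changes $Z$ by at most $B_\ell/n$. McDiarmid's inequality then gives $Z \le \E Z + B_\ell\sqrt{\log(2/\delta)/(2n)}$ with probability at least $1-\delta/2$. The same argument applies to $-Z'$, where $Z'$ is the reverse deviation $\sup_\theta(\widehat{\mathcal R}_n(\theta) - \mathcal R(\theta))$. Allocating the failure probability between the two one-sided bounds is what produces the $\log(2/\delta)$ factor.

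\textbf{Step 3 (symmetrization).} I would introduce an independent ghost sample drawn from the held-out distribution and write $\mathcal R(\theta)$ as the expectation of its empirical risk. Jensen's inequality moves the supremum inside that expectation. Sign-flipping the paired differences with Rademacher variables then gives $\E Z \le 2\,\mathfrak R_n(L\circ\mathcal G_\Theta)$. Lemma~\ref{lem:huber-lip} combined with Talagrand's contraction would further reduce this to $2\delta_H\,\mathfrak R_n(\mathcal G_\Theta)$, and hence to the $C_1/\sqrt{n}$ term via Lemma~\ref{lem:bartlett}; that last reduction is needed for Theorem~\ref{thm:corrector} but not for the present lemma.

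\textbf{Main obstacle.} The step I expect to be hardest is justifying Steps 2 and 3 under Assumption~\ref{ass:corrector}\,(d), which provides exchangeability rather than i.i.d.\ sampling. Both the bounded-difference argument and the ghost-sample swap require that the training pairs be independent and identically distributed as the held-out pairs. There is an additional coupling issue: the features $\phi$ and targets $r$ are built from $R^{\rm lin}$, which depends on the entire mask through $\widehat X^{\rm lin}$. My first attempt would be to condition on $\widehat X^{\rm lin}$, which is exactly the conditioning in Assumption~\ref{ass:corrector}\,(d), and to treat $\mathcal D$ as a uniformly random subsample drawn without replacement from an exchangeable pool. Hoeffding's reduction from sampling without replacement to sampling with replacement shows that McDiarmid-type bounds still hold in this setting, and the symmetrization argument for transductive or exchangeable samples (in the style of El-Yaniv and Pechyony) gives the same factor of $2$ up to absolute constants. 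If that route fails, the fallback is to assume conditional independence of the pairs given $\widehat X^{\rm lin}$ and to state this explicitly.
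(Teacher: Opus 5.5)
Your plan is the paper's argument---symmetrization to bound the expected uniform deviation by $2\,\mathfrak R_n(L\circ\mathcal G_\Theta)$, followed by McDiarmid---and it is correct at the paper's level of rigor (the paper simply treats the training pairs as i.i.d., which is your fallback). Two of your choices improve on it: one-sided bounds with bounded differences $B_\ell/n$ reproduce the stated $B_\ell\sqrt{\log(2/\delta)/(2n)}$, where the paper's $2B_\ell/n$ would lose a factor of $2$, and your unregularized reading of $\widehat{\mathcal R}_n$ is needed because the deterministic $\lambda_{\rm reg}\|\theta\|_2^2$ term does not vanish as $n$ grows, whereas the remaining bookkeeping is that McDiarmid needs bounded differences on all inputs, not just on $\mathcal E$, so you should run Steps 2--3 on the clipped loss $\min\{\ell_{\delta_H}, B_\ell\}$ (which agrees with $\ell_{\delta_H}$ on $\mathcal E$) and note that the $\delta/2$ spent on $\mathcal E$ slightly shifts the constant inside the logarithm.
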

\begin{proof}[Proof sketch]
Standard Rademacher symmetrization: $\E[\sup_\theta |\mathcal R(\theta) - \widehat{\mathcal R}_n(\theta)|] \le 2\mathfrak R_n(L\circ\mathcal G_\Theta)$ by the symmetrization inequality. The high-probability statement follows from McDiarmid's bounded-differences inequality applied to $\theta \mapsto \widehat{\mathcal R}_n(\theta)$, with bounded difference $2B_\ell/n$.
\end{proof}

\begin{lemma}[Composition of Rademacher complexity with Lipschitz loss]
\label{lem:contraction}
$\mathfrak R_n(L\circ\mathcal G_\Theta) \le \delta_H \cdot \mathfrak R_n(\mathcal G_\Theta)$.
\end{lemma}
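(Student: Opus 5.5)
This is the Ledoux--Talagrand contraction inequality, applied to the loss class $L\circ\mathcal G_\Theta$. For each training pair $(\phi_k, r_k)$, define the scalar map $\psi_k(z) := \ell_{\delta_H}(z, r_k)$. By Lemma~\ref{lem:huber-lip}, every $\psi_k$ is $\delta_H$-Lipschitz on $\R$, uniformly in $r_k$. We then have
\[
\widehat{\mathfrak R}_n(L\circ\mathcal G_\Theta) = \E_\varepsilon \sup_{\theta\in\Theta}\frac1n\sum_{k=1}^n \varepsilon_k\,\psi_k\bigl(g_\theta(\phi_k)\bigr).
\]
The plan is to bound this empirical (conditional on the sample) Rademacher average by $\delta_H\,\widehat{\mathfrak R}_n(\mathcal G_\Theta)$. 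Taking expectation over the sample then gives the statement.

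\textbf{Normalization.} The contraction lemma in its sharpest form (Ledoux--Talagrand, Thm.~4.12, or the version in Mohri--Rostamizadeh--Talwalkar, Lemma~5.7) requires either $\psi_k(0)=0$ or a supremum without absolute values. Since $\ell_{\delta_H}(0,r_k)\neq 0$ in general, I will first recenter, replacing $\psi_k$ by $\tilde\psi_k(z):=\psi_k(z)-\psi_k(0)$. The subtracted term $\frac1n\sum_k\varepsilon_k\psi_k(0)$ does not depend on $\theta$, so it comes out of the supremum and has zero mean under $\E_\varepsilon$. The Rademacher average is therefore unchanged. The recentered $\tilde\psi_k$ is still $\delta_H$-Lipschitz and vanishes at $0$.

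\textbf{Peeling argument.} The core step is the coordinate-by-coordinate argument. Condition on $\varepsilon_2,\dots,\varepsilon_n$ and write the expectation over $\varepsilon_1$ as an average over two values of $\theta,\theta'$:
\[
\tfrac12\sup_{\theta,\theta'}\Bigl[\tilde\psi_1(g_\theta(\phi_1))-\tilde\psi_1(g_{\theta'}(\phi_1)) + A(\theta)+A(\theta')\Bigr].
\]
Here $A$ collects the remaining summands. Lipschitzness bounds the difference by $\delta_H|g_\theta(\phi_1)-g_{\theta'}(\phi_1)|$. Since the expression is symmetric in $(\theta,\theta')$, we can drop the absolute value, which yields $\E_{\varepsilon_1}\sup_\theta[\delta_H\varepsilon_1 g_\theta(\phi_1)+A(\theta)]$. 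Iterating this over $k=1,\dots,n$ replaces every $\tilde\psi_k$ by $\delta_H\,\mathrm{id}$, and that is the claim.

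\textbf{Expected obstacle.} The main point of care is that Lemma~\ref{lem:symmetrization} uses a two-sided supremum $\sup_\theta|\cdot|$. The classical contraction inequality with absolute values carries an extra factor of $2$. I would address this by defining $\mathfrak R_n$ without absolute values. This is the convention under which Lemma~\ref{lem:bartlett} is stated, and the constant $C_1=8\delta_H\alpha B^2\Phi_{\max}$ in Theorem~\ref{thm:corrector} is consistent with it. The two-sided deviation bound is then obtained by applying the one-sided argument to both $\mathcal R-\widehat{\mathcal R}_n$ and its negative. The $\lambda_{\rm reg}$ term and the gain $\alpha$ do not affect this step: the regularizer is deterministic, and $\alpha$ merely rescales $\mathcal G_\Theta$, so its factor can be absorbed into the class before contraction is applied.
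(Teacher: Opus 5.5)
Your proposal is correct and takes the same route as the paper, whose proof is a one-line appeal to the Ledoux--Talagrand contraction lemma applied to the $\delta_H$-Lipschitz Huber loss (Lemma~\ref{lem:huber-lip}) with each target $r_k$ held fixed. Your coordinate-wise peeling argument and your explicit choice of the absolute-value-free convention for $\mathfrak R_n$ (which is what makes the constant $\delta_H$, rather than $2\delta_H$, correct) fill in details the paper leaves implicit; the recentering step is harmless but unnecessary under that convention.
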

\begin{proof}
Talagrand's contraction lemma (Ledoux-Talagrand, 1991) applied to the $\delta_H$-Lipschitz Huber loss (Lemma~\ref{lem:huber-lip}) with $r$ held fixed.
\end{proof}

\subsubsection*{Proof of Theorem~\ref{thm:corrector}}

\begin{proof}[Proof of Theorem~\ref{thm:corrector}]
The HCRN correction enters \eqref{eq:hcrn-apply} as $\alpha g_{\widehat\theta}(\phi)$, so the relevant function class for the loss-uniform bound is $\alpha\mathcal G_\Theta := \{\alpha g_\theta : \theta \in \Theta\}$, and $\mathfrak R_n(\alpha\mathcal G_\Theta) = \alpha\,\mathfrak R_n(\mathcal G_\Theta)$. By Lemmas~\ref{lem:symmetrization} and~\ref{lem:contraction},
\begin{equation*}
\sup_{\theta \in \Theta}|\mathcal R(\theta) - \widehat{\mathcal R}_n(\theta)| \le 2\delta_H\,\mathfrak R_n(\alpha\mathcal G_\Theta) + B_\ell\sqrt{\log(2/\delta)/(2n_{\rm tr})}
\end{equation*}
holds with probability $\ge 1-\delta$. Lemma~\ref{lem:bartlett} gives $\mathfrak R_{n_{\rm tr}}(\mathcal G_\Theta) \le 2 B^2 \Phi_{\max}/\sqrt{n_{\rm tr}}$, hence
\begin{equation}
\sup_{\theta \in \Theta}|\mathcal R(\theta) - \widehat{\mathcal R}_n(\theta)| \;\le\; \frac{4\delta_H\alpha B^2 \Phi_{\max}}{\sqrt{n_{\rm tr}}} + B_\ell\sqrt{\frac{\log(2/\delta)}{2n_{\rm tr}}}.
\label{eq:unif-conv}
\end{equation}
Since $\widehat{\mathcal R}_n(\widehat\theta) \le \widehat{\mathcal R}_n(\theta^*)$ by definition of $\widehat\theta$, applying \eqref{eq:unif-conv} at both $\widehat\theta$ and $\theta^*$ gives
\[
\mathcal R(\widehat\theta) \;\le\; \mathcal R(\theta^*) + \frac{8\delta_H\alpha B^2 \Phi_{\max}}{\sqrt{n_{\rm tr}}} + 2 B_\ell\sqrt{\frac{\log(2/\delta)}{2n_{\rm tr}}}.
\]
Identifying $C_1 := 8\delta_H \alpha B^2 \Phi_{\max}$ and $C_2 := 2 B_\ell/\sqrt{2}$ yields \eqref{eq:corrector-bound}. Lemma~\ref{lem:zero-in-class} then certifies that $\mathcal R(\theta^*) - \mathcal R_0 \le 0$, so the worst-case excess of HCRN over the linear baseline is the vanishing $O(1/\sqrt{n_{\rm tr}})$ generalization gap. Strict improvement follows whenever some $\theta_0 \in \Theta$ has $\mathcal R(\theta_0) < \mathcal R_0$, since then $\mathcal R(\theta^*) < \mathcal R_0$ and the gap shrinks below $\mathcal R_0 - \mathcal R(\theta^*)$ once $n_{\rm tr}$ is large enough.
\end{proof}

\begin{lemma}[Deferment on uninformative features]
\label{lem:deferment}
If $\phi_{i,t} = 0 \in \R^F$ for all $(i, t) \in \Omega^c$ (the kriging case, $\mathcal N_{E,K}(i; \widehat\Hcal) = \emptyset$), then $\mathcal R(\theta^*) = \mathcal R_0$ and HCRN contributes zero correction at the population minimum.
\end{lemma}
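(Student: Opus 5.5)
The plan is to reduce the population risk at uninformative features to a one-dimensional problem over constants, and then show the constant $0$ is optimal. With $\phi_{i,t}=0$, the corrector in \eqref{eq:hcrn-mlp} evaluates to
\[
g_\theta(0) = w_2^\top \sigma(b_1) + b_2 =: c_\theta ,
\]
a scalar that depends on $\theta$ but not on the cell. Every parameter therefore induces the same constant shift $\alpha c_\theta$ on all of $\Omega^c$. Writing $r_{i,t} := X^*_{i,t} - \widehat X^{\rm lin}_{i,t}$ for the held-out linear residual, and using that the Huber loss depends only on the difference of its arguments, the risk becomes
\[
\mathcal R(\theta) = h(\alpha c_\theta), \qquad h(c) := \E_{(i,t)\sim\Omega^c}\bigl[\ell_{\delta_H}(c, r_{i,t})\bigr].
\]
In particular $\mathcal R_0 = h(0)$. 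Moreover the set $\{\alpha c_\theta : \theta\in\Theta\}$ is an interval containing $0$, because $\Theta$ is a norm ball and $\theta=0$ is feasible by Lemma~\ref{lem:zero-in-class}.

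I would first dispatch the case the paper actually implements. Algorithm~\ref{alg:msht} (its last step) applies the correction only when $\mathcal N_{E,K}(i;\widehat\Hcal)\cap\Omega(t)\neq\emptyset$. In the kriging case this set is empty, so the deployed prediction equals $\widehat X^{\rm lin}_{i,t}$ for every $\theta$. Then $\mathcal R(\theta)=\mathcal R_0$ identically on $\Theta$. Hence $\mathcal R(\theta^*)=\mathcal R_0$, and the correction is zero at every minimizer, in particular the population one. This gated reading is what I would present as the proof of the lemma.

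For completeness I would also handle the ungated network, where the bias terms could in principle make $c_\theta\neq 0$. Here I would use convexity. The function $h$ is convex and differentiable, with
\[
h'(c) = \E\bigl[\mathrm{clip}(c - r, [-\delta_H,\delta_H])\bigr]
\]
by Lemma~\ref{lem:huber-lip}. So $0$ minimizes $h$ over any interval containing $0$ exactly when $\E[\mathrm{clip}(r,[-\delta_H,\delta_H])]=0$, i.e.\ the held-out residual has Huber location zero. I would argue this from the IPW construction: by Lemma~\ref{lem:ipw} and the resolvent form \eqref{eq:resolvent-solution}, the residual is centered. The cleanest sufficient condition is that $r$ is symmetric about $0$ given the linear stage, for instance under symmetric sub-Gaussian noise and a linear estimator that is unbiased on the relevant modes. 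Under that condition $h(\alpha c_\theta)\ge h(0)$ for all $\theta$, so $\mathcal R(\theta^*)=\mathcal R_0$. If $h$ is strictly convex near $0$, which holds when $r$ has positive mass in $[-\delta_H,\delta_H]$, then $c_{\theta^*}=0$, i.e.\ zero correction.

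The main obstacle is exactly this centering step in the ungated case. The Tikhonov bias of Theorem~\ref{thm:effdim} means $r$ need not have zero Huber location on held-out kriging rows, since shrinkage toward low-frequency modes typically biases those rows. In that case a constant correction could strictly help, and the lemma would fail without gating. I would therefore state the lemma under the gating rule, treating the ungated version as a remark requiring the explicit centering assumption $\E[\mathrm{clip}(r)]=0$.
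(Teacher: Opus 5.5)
Your ungated paragraph is essentially the paper's proof. The paper also collapses the network at $\phi=0$ to the constant $g_\theta(0)=w_2^\top\sigma(b_1)+b_2$ and asserts that the Huber-optimal constant for the residuals is $0$, which it takes to be zero-mean by Assumption~\ref{ass:corrector}(c). It then uses the $\ell_2$ regularizer to force $b_1=b_2=0$. Your version is sharper on two points:
\begin{itemize}
\item The first-order condition $\E[\mathrm{clip}(r,[-\delta_H,\delta_H])]=0$ is the exact requirement. It does not follow from zero mean alone; it does under symmetry, or if $|r|\le\delta_H$ almost surely. So the paper's ``Huber-trimmed mean is $0$'' step tacitly needs the same centering hypothesis you isolate.
\item Zero correction only needs $c_{\theta^*}=0$, which strict convexity of $h$ at $0$ gives directly. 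That removes the appeal to the regularizer, which is not part of $\mathcal R$ anyway.
\end{itemize}
Your remark that Lemma~\ref{lem:ipw} cannot supply the centering is right. IPW debiases the fidelity term, not the shrinkage estimator, so centering has to be assumed.

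The problem is the part you say you would actually present as the proof. The gating clause in Algorithm~\ref{alg:msht} is not an independent implementation rule. The algorithm justifies ``the correction is zero whenever $\mathcal N_{E,K}(i;\widehat\Hcal)\cap\Omega(t)=\emptyset$'' by citing this very lemma. Also, $\mathcal R(\theta)$ in Section~\ref{sec:hcrn-guarantee} is defined with the ungated prediction $\widehat X^{\rm lin}+\alpha g_\theta(\phi)$. Building the gate into the risk makes $\mathcal R$ constant on $\Theta$, turns the lemma into a tautology (``at the population minimum'' becomes vacuous), and is circular as a justification of the algorithm.

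The substantive statement is the ungated one, so your ``remark'' should be the proof. State $\E[\mathrm{clip}(r,[-\delta_H,\delta_H])]=0$ on held-out residuals explicitly as the hypothesis being used; the paper folds it into Assumption~\ref{ass:corrector}(c)--(d). Then conclude as you do. Your caveat that Tikhonov shrinkage can bias held-out kriging rows, so that a constant correction would strictly help, is a real limitation of the lemma as stated, not a defect in your argument.
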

\begin{proof}
On $\phi = 0$, $g_\theta(0) = w_2^\top\sigma(b_1) + b_2$ depends on $\theta$ but not the data. The Huber-optimal constant predictor of zero-mean residuals (Assumption~\ref{ass:corrector}\,(c)) is the (Huber-trimmed) mean, which is $0$. The $\ell_2$ regularizer $\lambda_{\rm reg}\|\theta\|_2^2$ then drives $b_1 = b_2 = 0$ at the population minimum, so $g_{\theta^*}(0) = 0$ and $\widehat X^{\rm full} = \widehat X^{\rm lin}$, giving $\mathcal R(\theta^*) = \mathcal R_0$.
\end{proof}

\subsubsection*{Hypergraph-aware refinement bound (Corollary~\ref{cor:hcrn-discovery})}

\begin{corollary}[Hypergraph-aware refinement bound]
\label{cor:hcrn-discovery}
Under the conditions of Theorem~\ref{thm:corrector}, with probability at least $1-\delta$,
\begin{equation}
\mathcal R(\widehat\theta) - \mathcal R_0 \;\le\; \bigl(\mathcal R(\theta^*) - \mathcal R_0\bigr) + \frac{8 \delta_H \alpha B^2 \sqrt{d_\phi(\widehat\Hcal)}}{\sqrt{n_{\rm tr}}} + C_2 \sqrt{\frac{\log(2/\delta)}{n_{\rm tr}}},
\label{eq:hcrn-discovery-bound}
\end{equation}
where $d_\phi(\widehat\Hcal) := \E[\|\phi_{i,t}\|_2^2]$ is the effective feature dimension induced by the discovered hypergraph.
\end{corollary}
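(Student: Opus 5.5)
The plan is to repeat the proof of Theorem~\ref{thm:corrector} verbatim and change only one ingredient: the Rademacher bound of Lemma~\ref{lem:bartlett}. That lemma uses the worst-case feature norm $\Phi_{\max}$. We replace it with a data-dependent bound controlled by $\sqrt{d_\phi(\widehat\Hcal)}$. The remaining steps carry over unchanged: symmetrization (Lemma~\ref{lem:symmetrization}), Huber contraction with constant $\delta_H$ (Lemma~\ref{lem:contraction}), scaling by $\alpha$, the ERM comparison $\widehat{\mathcal R}_n(\widehat\theta)\le\widehat{\mathcal R}_n(\theta^*)$, and the zero-in-class Lemma~\ref{lem:zero-in-class}. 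Together these give the displayed form with $C_2$ unchanged.

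The key step is the empirical Rademacher complexity of the two-layer class. We peel the outer layer with $\|w_2\|_2\le B$ and use the $1$-Lipschitz ReLU contraction on the hidden layer, where $\|W_1\|\le B$. This gives
\[
\widehat{\mathfrak R}_n(\mathcal G_\Theta)\;\lesssim\;\frac{B^2}{n}\,\E_\varepsilon\Bigl\|\sum_k\varepsilon_k\phi_k\Bigr\|_2 + \frac{B}{\sqrt n}
\;\le\;\frac{B^2}{n}\Bigl(\sum_k\|\phi_k\|_2^2\Bigr)^{1/2}+\frac{B}{\sqrt n}.
\]
The last inequality is Jensen's. In Lemma~\ref{lem:bartlett} the norms $\|\phi_k\|_2^2$ were bounded by $\Phi_{\max}^2$. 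Here we instead take the expectation over the sample and apply Jensen once more: $\E(\sum_k\|\phi_k\|^2)^{1/2}\le\sqrt{n\,d_\phi(\widehat\Hcal)}$. This yields $\mathfrak R_n(\mathcal G_\Theta)\le 2B^2\sqrt{d_\phi}/\sqrt n$, matching the constant of Lemma~\ref{lem:bartlett}. The bias terms $b_1,b_2$ contribute a $B/\sqrt n$ piece. We absorb it into the second term, enlarging $C_2$ if necessary, or fold it into the $B^2$ constant.

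The main obstacle is that the symmetrization lemma needs the expected Rademacher complexity, while McDiarmid's inequality still needs a uniform loss bound $B_\ell$. The first of these now involves $d_\phi$, which is an expectation under the training distribution. By Assumption~\ref{ass:corrector}(d), this distribution is exchangeable with the held-out one, so $d_\phi$ defined over $\Omega^c$ equals its training counterpart. The second, $B_\ell$, continues to use the a.s.\ bound $\Phi_{\max}$ together with the sub-Gaussian truncation of $r$. Since $\Phi_{\max}$ appears only inside $C_2$, the generalization term depends on $\Phi_{\max}$ only through $C_2$.

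If a high-probability version is preferred, one can instead concentrate $\frac1n\sum_k\|\phi_k\|^2$ around $d_\phi$ using boundedness by $\Phi_{\max}^2$. The resulting deviation is of order $\Phi_{\max}^2\sqrt{\log(1/\delta)/n}$, which is again absorbed into the $C_2$ term.

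Doubling the uniform-convergence bound at $\widehat\theta$ and $\theta^*$ produces the constant $8\delta_H\alpha B^2\sqrt{d_\phi(\widehat\Hcal)}$. Subtracting $\mathcal R_0$ from both sides gives \eqref{eq:hcrn-discovery-bound}.
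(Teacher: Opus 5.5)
Your proposal is correct and is essentially the paper's argument: rerun the proof of Theorem~\ref{thm:corrector} with the Rademacher step of Lemma~\ref{lem:bartlett} controlled by the data-dependent norm $(\sum_k\|\phi_k\|_2^2)^{1/2}$, so that Jensen turns $\Phi_{\max}$ into $\sqrt{d_\phi(\widehat\Hcal)}$, and the same doubling then yields $8\delta_H\alpha B^2\sqrt{d_\phi(\widehat\Hcal)}/\sqrt{n_{\rm tr}}$. Your added remarks make explicit what the paper's one-line proof leaves implicit: $B_\ell$, and hence $C_2$, still carries $\Phi_{\max}$, and the $d_\phi$-free bias contribution must be absorbed by enlarging $C_2$ (for $b_1$ that contribution is really of order $\sqrt{H}B^2/\sqrt{n_{\rm tr}}$ rather than $B/\sqrt{n_{\rm tr}}$).
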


\begin{proof}[Proof of Corollary~\ref{cor:hcrn-discovery}]
The proof of Theorem~\ref{thm:corrector} bounded the Rademacher complexity by the worst-case feature norm $\Phi_{\max}$. When features are i.i.d.\ from the empirical distribution induced by $\widehat\Hcal$ with $\E[\|\phi\|^2] = d_\phi(\widehat\Hcal)$, the spectral-norm Rademacher bound of \cite{bartlett2017spectral} replaces $\Phi_{\max}$ by $\sqrt{d_\phi(\widehat\Hcal)}$, giving $\mathfrak R_n(\mathcal G_\Theta) \le 2 B^2\sqrt{d_\phi(\widehat\Hcal)}/\sqrt{n}$. The remainder of the proof is identical to Theorem~\ref{thm:corrector}, yielding the displayed bound.
\end{proof}

\subsubsection*{End-to-end excess risk (Corollary~\ref{cor:e2e})}

Define the Discovery overhead and the Refinement gap as
\[
\mathfrak P_s(\delta) := C\bigl[\rho(s-2) + |\Omega|^{-1}\log(S_{\max}/\delta)\bigr],
\qquad
\mathfrak G_\phi(\delta) := \frac{C_1' \sqrt{d_\phi(\widehat\Hcal)} + C_2 \sqrt{\log(2/\delta)}}{\sqrt{n_{\rm tr}}}.
\]

\begin{corollary}[End-to-end MSHL excess risk]
\label{cor:e2e}
With probability at least $1 - 2\delta$,
\begin{equation}
\E\|\widehat X^{\rm full} - X^*\|_F^2 \;\le\; \underbrace{\min_{2 \le s \le S_{\max}} \{R(s) + \mathfrak P_s(\delta)\}}_{\text{Discovery}} + \underbrace{\mathfrak G_\phi(\delta)}_{\text{Refinement}} - \underbrace{(\mathcal R_0 - \mathcal R(\theta^*))}_{\text{nonlinear gain}}.
\label{eq:e2e}
\end{equation}
\end{corollary}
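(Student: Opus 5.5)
The plan is to chain Theorem~\ref{thm:multiscale-adapt} and Corollary~\ref{cor:hcrn-discovery}. We work on the intersection of their two high-probability events, each of probability at least $1-\delta$. A union bound then gives the stated $1-2\delta$.

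\emph{Step 1: split the error.} Split $\E\|\widehat X^{\rm full}-X^*\|_F^2$ over observed and held-out cells. On $\Omega$ the full estimate coincides with $\widehat X^{\rm lin}$, since the correction \eqref{eq:hcrn-apply} is applied only on $\Omega^c$. On $\Omega^c$ we write the full error as the linear error plus the change produced by the corrector:
\[
\E\|\widehat X^{\rm full}-X^*\|_F^2 \;=\; \E\|\widehat X^{\rm lin}-X^*\|_F^2 \;+\; \Bigl(\E\|\widehat X^{\rm full}-X^*\|_{F,\Omega^c}^2-\E\|\widehat X^{\rm lin}-X^*\|_{F,\Omega^c}^2\Bigr).
\]
The first term is exactly the quantity controlled by Theorem~\ref{thm:multiscale-adapt}, because $\widehat X^{\rm lin}=\widehat X(\widehat\Hcal)$. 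On the Discovery event it is at most $\min_s\{R(s)+\mathfrak P_s(\delta)\}$; this is just the definition of $\mathfrak P_s$.

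\emph{Step 2: bound the refinement increment.} We identify the bracketed increment, normalized per held-out cell, with $\mathcal R(\widehat\theta)-\mathcal R_0$. Corollary~\ref{cor:hcrn-discovery} bounds this on the Refinement event by
\[
(\mathcal R(\theta^*)-\mathcal R_0)+\mathfrak G_\phi(\delta),
\]
where we absorb $8\delta_H\alpha B^2$ into $C_1'$. Rewriting $\mathcal R(\theta^*)-\mathcal R_0=-(\mathcal R_0-\mathcal R(\theta^*))$ gives the nonpositive nonlinear-gain term, and summing the two bounds yields \eqref{eq:e2e}.

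\emph{Main obstacle: passing from Huber to squared loss.} The Huber difference and the squared-Frobenius difference coincide only where every residual lies in $[-\delta_H,\delta_H]$. Elsewhere the Huber loss grows linearly while the squared loss grows quadratically, and the two differ by at most the saturation excess. I would handle this in two steps.
\begin{itemize}
\item First, restrict to the sub-Gaussian high-probability event of Assumption~\ref{ass:corrector}(c). On that event residuals are bounded by $c_R\sqrt{\log(n/\delta)}$.
\item Second, rescale so that this bound falls inside the quadratic regime, or equivalently enlarge $\delta_H$ to cover it. Any remaining mismatch is then a constant factor, which we absorb into $C_1'$ and $C_2$, following the paper's remark that the reported error is controlled by $\delta_H$-saturation.
\end{itemize}

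A second, smaller issue is the normalization: $\mathcal R$ is a per-cell expectation while the left-hand side of \eqref{eq:e2e} is a Frobenius sum. I would state the bound with the refinement terms scaled by $|\Omega^c|$, or interpret the corollary per held-out cell, and rely on exchangeability from Assumption~\ref{ass:corrector}(d) to equate the training and held-out risks. A further point is that Corollary~\ref{cor:hcrn-discovery} holds conditionally on $\widehat\Hcal$. Both of its constants and $d_\phi(\widehat\Hcal)$ are functions of the Discovery output, so conditioning on the Discovery event first and then applying the refinement bound keeps the union bound valid.
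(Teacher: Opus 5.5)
Your skeleton is the paper's own. You intersect the events of Theorem~\ref{thm:multiscale-adapt} and Corollary~\ref{cor:hcrn-discovery}, split the error into the linear error plus the held-out increment, and bound the increment by $|\Omega^c|\,(\mathcal R(\widehat\theta)-\mathcal R_0)$. Your remarks on the missing $|\Omega^c|$ factor and on conditioning on $\widehat\Hcal$ are correct; the paper's proof carries that factor, while the displayed statement drops it. The step that fails is your repair of the Huber-to-squared transfer. The threshold $\delta_H$ belongs to the estimator, not to the analysis. $\widehat\theta$ is the empirical minimiser of the $\ell_{\delta_H}$-risk with $\delta_H=1$, and Corollary~\ref{cor:hcrn-discovery} rests on $\widehat{\mathcal R}_n(\widehat\theta)\le\widehat{\mathcal R}_n(\theta^*)$ for that same loss. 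If you analyse with a larger threshold $\delta_H'$, $\widehat\theta$ is no longer the empirical minimiser of the $\ell_{\delta_H'}$-risk, and that comparison is lost. Rescaling the data is the same change in disguise, since $\ell_\delta(cu)=c^2\ell_{\delta/c}(u)$. Truncation alone does not help either, because the radius $c_R\sqrt{\log(n/\delta)}$ grows with $n$ and need not lie below $\delta_H$. A leftover multiplicative mismatch would not transfer to a \emph{difference} of risks in any case.

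Without that repair, write $r:=X^*_{i,t}-\widehat X^{\rm lin}_{i,t}$ and use the convention under which $\ell_{\delta_H}(u)=u^2$ on $[-\delta_H,\delta_H]$, so that $S(u):=u^2-\ell_{\delta_H}(u)=(|u|-\delta_H)_+^2$. The squared-loss increment is then the Huber increment plus $\E[S(r-\alpha g_{\widehat\theta}(\phi))-S(r)]$. This saturation term is $O(1)$, not $O(n_{\rm tr}^{-1/2})$, so it cannot be absorbed into $\mathfrak G_\phi(\delta)$. For a concrete case, take features independent of a zero-mean, skewed residual: $r=9$ with probability $0.1$ and $r=-1$ with probability $0.9$, with $\delta_H=1$. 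The Huber-risk minimiser over $\Theta$ is the constant $c^*=-8/9$, reachable through the output bias if $B_3\ge 8/9$, so $\mathcal R_0-\mathcal R(\theta^*)>0$. Yet the squared error \emph{increases} by $c^{*2}=64/81$ per held-out cell. Meanwhile the bound you need in Step~2 tends to $|\Omega^c|(\mathcal R(\theta^*)-\mathcal R_0)<0$ as $n_{\rm tr}$ grows, so that step is false. The paper's proof has exactly the same hole: it asserts without argument that the saturation gap $\mathfrak S_{\delta_H}$ is of lower order and absorbs it into $\mathfrak G_\phi$. Closing it requires an extra hypothesis, e.g.\ $|r|+\alpha\sup_{\theta\in\Theta,\phi}|g_\theta(\phi)|\le\delta_H$ almost surely on $\Omega^c$. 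Under it both losses agree on every relevant argument and your Step~2 goes through, with the gain term only improved. Alternatively, the refinement part must be stated in Huber risk rather than squared error.
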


\begin{proof}[Proof of Corollary~\ref{cor:e2e}]
By Theorem~\ref{thm:multiscale-adapt}, on an event $\mathcal E_1$ of probability $\ge 1-\delta$, $\E\|\widehat X^{\rm lin} - X^*\|_F^2 \le \min_s\{R(s) + \mathfrak P_s(\delta)\}$. By Corollary~\ref{cor:hcrn-discovery}, on an independent event $\mathcal E_2$ of probability $\ge 1-\delta$, $\mathcal R(\widehat\theta) \le \mathcal R(\theta^*) + \mathfrak G_\phi(\delta)$ for the Huber population risk $\mathcal R$. Since the Huber loss with parameter $\delta_H$ coincides with the squared loss on the $[-\delta_H, \delta_H]$ range and is dominated by it elsewhere, the squared-loss full-estimator MSE on $\Omega^c$ is bounded by
\[
\E\|\widehat X^{\rm full} - X^*\|_F^2 \;\le\; \E\|\widehat X^{\rm lin} - X^*\|_F^2 + |\Omega^c|\cdot\bigl(\mathcal R(\widehat\theta) - \mathcal R_0\bigr) + |\Omega^c|\cdot \mathfrak S_{\delta_H},
\]
where $\mathfrak S_{\delta_H}$ is the saturation gap between squared and Huber loss, which is of lower order than $\mathfrak P_s(\delta) + \mathfrak G_\phi(\delta)$ under Assumption~\ref{ass:corrector}(c) (sub-Gaussian residuals). On $\mathcal E_1 \cap \mathcal E_2$, probability $\ge 1-2\delta$ by union bound, the corollary follows after dropping the non-positive term $-(\mathcal R_0 - \mathcal R(\theta^*))$ and absorbing $\mathfrak S_{\delta_H}$ into the constants of $\mathfrak G_\phi(\delta)$.
\end{proof}

\paragraph{Why the bound is genuinely coupled.} Two terms of the corollary depend on $\widehat\Hcal$. First, the Lepski guarantee $\mathfrak P_s(\delta)$ is small when Discovery hits the right scale. Second, the generalization gap $\mathfrak G_\phi(\delta)$ is small when Discovery returns a sparse hypergraph (small $d_\phi(\widehat\Hcal)$). A Discovery procedure that finds the right scale but admits too many spurious hyperedges inflates $d_\phi(\widehat\Hcal)$ and pays for it in Refinement; conversely, a sparse Discovery that misses the right scale pays in the Lepski term. Joint minimization requires an accurate, parsimonious hypergraph, which is precisely what the per-scale complexity penalty $\rho(s-2)$ in \eqref{eq:thresholds} produces.

\section{Algorithms and Hyperparameters}
\label{app:hyper}

\subsection{Structural scores, thresholds, and acceptance rule}
\label{alg:select}

Each candidate $e$ of size $s = |e|$ is scored on observed cells alone using two complementary statistics:
\begin{equation}
\widehat\Psi_e \;:=\; \frac{1}{\binom{s}{2}}\!\!\sum_{\{i,j\} \subseteq e}\!\!|\widehat C_{ij}|,
\qquad
\widehat\Phi_e \;:=\; \text{leave-one-out MSE-improvement of including $e$ on } R^{\rm pw}.
\label{eq:struct-scores}
\end{equation}
The thresholds combine a data-adaptive part with a per-scale complexity penalty,
\begin{equation}
\tau_\Psi^{(s)} := \max\bigl(\tau_{\rm floor},\, \mathrm{quantile}_q(|\widehat C_{\rm off}|)\bigr) + \rho(s-2),
\qquad
\tau_\Phi^{(s)} := c_\sigma \widehat\sigma^2 \sqrt{\frac{\log N}{\pi T}} + \rho(s-2),
\label{eq:thresholds}
\end{equation}
where $\mathrm{quantile}_q(|\widehat C_{\rm off}|)$ is the $q = 0.95$ quantile of absolute off-diagonal residual correlations in the current window and the per-scale penalty satisfies
\begin{equation}
\rho(s-2) \ge C_0 \sigma^2 \frac{\log(N S_{\max} T)}{\pi^2 T}.
\label{eq:rho-bound}
\end{equation}
The acceptance rule is
\begin{equation}
e \in \Ccal_s \text{ accepted} \;\Longleftrightarrow\; \widehat\Psi_e > \tau_\Psi^{(s)} \;\text{ or }\; \widehat\Phi_e > \tau_\Phi^{(s)}.
\label{eq:accept}
\end{equation}
Accepted edges enter $\widehat\Hcal$ with sigmoid-shrinkage weights $\widehat w_e \in (0, 1]$. For each candidate, the standardized score margin is $\Delta_e := \max((\widehat\Psi_e - \tau_\Psi^{(s)})/\Delta_\Psi, (\widehat\Phi_e - \tau_\Phi^{(s)})/\Delta_\Phi)$ and the weight is $\widehat w_e = \mathrm{sigmoid}(\Delta_e)$. The shrinkage scales $\Delta_\Psi, \Delta_\Phi$ are taken from Lemma~\ref{lem:per-scale-var}'s concentration rates. Per-scale acceptance is also capped at $J_{\max}$ to bound the per-scale Laplacian degree.

\subsection{Hyperparameters and complexity}
\label{alg:msht-full}

\paragraph{Default hyperparameters.} Tikhonov: $\lambda_S = 1.0$, $\lambda_T = 20.0$, $\mu = 0.02$, $\lambda_H = 2.0$. Selection: $\tau_{\rm C, floor} = 0.30$, $q = 0.95$. Corrector: hidden width $H = 32$, $n_{\rm epochs} = 30$, learning rate $10^{-2}$, $\ell_2$ weight decay $10^{-4}$, Huber threshold $\delta_H = 1$, gain $\alpha = 1$, $E = 8$ edges per sensor, $K = 4$ co-members per edge. Multi-scale: $S_{\max} = 5$, scales $\{2, 3, 4, 5\}$, and $J_{\max} = 20$. Three evaluation setups are reported. The real-world sweeps in Table~\ref{tab:main-results} use the 100-sensor highest-degree subnetwork with $T = 2016$ windows and one mask draw per window, at the defaults above. The hyperparameter sensitivity study (Section~\ref{sec:exp-sensitivity}) sweeps each parameter around these same defaults on an auxiliary $N = 60$, $T = 576$ setup with three random seeds, exploring $S_{\max} \in \{2,\ldots,6\}$ and $J_{\max} \in \{5, 20, 30, 50, 75, 100\}$. The component-ablation study (Section~\ref{sec:exp-ablation}) uses the same $N = 60$, $T = 576$ setup but with a smaller candidate budget, $S_{\max} = 4$, scales $\{2, 3, 4\}$, and $J_{\max} = 5$, which keeps the per-scale candidate pool small enough to isolate each component's marginal contribution.

\paragraph{Complexity.} On the smaller setup ($N = 60$, $T = 576$), the per-seed cost decomposes as: pairwise pre-fit $\approx 0.5$~s for one Tikhonov solve at $O(T_{\rm CG} N T)$ with $T_{\rm CG} = 100$ CG iterations; candidate generation $\approx 0.1$~s, dominated by $\widehat C$ at $O(N^2 T)$; selection $\approx 0.01$~s for the closed-form score evaluations at $O(|\Ccal| S_{\max}^2)$; final fit $\approx 0.5$~s for the MSHL Tikhonov solve; corrector training $\approx 1$~s for 30 epochs over the observed cells. Total runtime is approximately 2 seconds per (dataset, regime, rate, seed) on a single CPU.

\end{document}